\theoremstyle{plain}
\newtheorem{theorem}{Theorem}[section]
\newtheorem*{theorem*}{Theorem}
\newtheorem{proposition}[theorem]{Proposition}
\newtheorem{lemma}[theorem]{Lemma}
\newtheorem{corollary}[theorem]{Corollary}
\newtheorem{assumption}{Assumption}
\theoremstyle{definition}
\newtheorem{definition}[theorem]{Definition}
\newtheorem{example}[theorem]{Example}
\theoremstyle{remark}
\newtheorem{remark}[theorem]{Remark}
\numberwithin{equation}{section}
\newcommand{\R}{\mathbb{R}}
\newcommand{\Z}{\mathbb{Z}}
\renewcommand{\L}{\mathscr{L}}
\newcommand{\M}{\mathcal{M}}
\renewcommand{\d}{\mathrm{d}}
\DeclareMathOperator{\diag}{diag}
\newcommand{\lb}{\left}
\newcommand{\rb}{\right}
\newcommand{\pp}{\partial}
\newcommand{\wt}{\widetilde}
\newcommand{\veps}{\varepsilon}
\newcommand{\bP}{\mathbb{P}}
\newcommand{\bL}{\mathbb{L}}
\newcommand{\E}{\mathbb{E}}
\newcommand{\Cov}{\operatorname{Cov}}
\newcommand{\Var}{\operatorname{Var}}
\newcommand{\calN}{\mathcal{N}}
\title{Data-Driven Dynamic Factor Modeling via Manifold Learning\footnote{
We thank Max Reppen, Alex Belloni, Igor Cialenco,  Nizar Touzi, and Renyuan Xu, for helpful discussions. We also acknowledge participants of the 2025 SIAM-FM biennial meeting, the 2026 JMM, the 2025 AMaMeF conference, Duke University, and the NYU-Columbia Financial Mathematics series.}}
\author{Graeme Baker\thanks{Department of Statistics, Columbia University, NY, USA \href{mailto:g.baker@columbia.edu}{g.baker@columbia.edu}.}\and 
    Agostino Capponi\thanks{Department of Industrial Engineering and Operations Research and Columbia Business School, Columbia University, NY, USA \href{mailto:ac3827@columbia.edu}{ac3827@columbia.edu}.}\and 
     J. Antonio Sidaoui\thanks{Department of Industrial Engineering and Operations Research, Columbia University, NY, USA \href{mailto:j.sidaoui@columbia.edu}{j.sidaoui@columbia.edu}.}}
\date{}
\begin{document}

\maketitle

\begin{abstract}
We introduce a data-driven dynamic factor framework for modeling the joint evolution of high-dimensional covariates and responses without parametric assumptions. Standard factor models applied to covariates alone often lose explanatory power for responses. Our approach uses anisotropic diffusion maps, a manifold learning technique, to learn low-dimensional embeddings that preserve both the intrinsic geometry of the covariates and the predictive relationship with responses. For time series arising from Langevin diffusions in Euclidean space, we show that the associated graph Laplacian converges to the generator of the underlying diffusion. We further establish a bound on the approximation error between the diffusion map coordinates and linear diffusion processes, and we show that ergodic averages in the embedding space converge under standard spectral assumptions. These results justify using Kalman filtering in diffusion-map coordinates for predicting joint covariate-response evolution. We apply this methodology to equity-portfolio stress testing using macroeconomic and financial variables from Federal Reserve supervisory scenarios, achieving mean absolute error improvements of up to 55\% over classical scenario analysis and 39\% over principal component analysis  benchmarks.
\end{abstract}

\section{Introduction}

The emerging field of data-driven dynamical systems aims to learn the potentially high-dimensional and nonlinear dynamical description of a data set without parametric assumptions. Some of the primary goals in this area include finding stochastic differential equation (SDE) descriptions of discretely sampled time series data, extracting interpretable stochastic models from experimental data, and finding simple linear descriptions for high-dimensional and nonlinear dynamical systems. 
Herein, we propose a data-driven approach to the problem of \emph{dynamic factor modeling} in the \emph{supervised learning} setting where a response variable $y(t) \in \mathbb{R}^n$ depends on a high-dimensional set of covariates $x(t) \in \mathbb{R}^m$. A key challenge in this setting is that dimensionality reduction applied to covariates alone, such as principal component analysis, may discard information critical for predicting responses, even when the intrinsic geometry of $x(t)$ is preserved. Our framework addresses this by learning embeddings jointly from $(x(t), y(t))$ that capture both the dynamics of the covariates and their relationship to the responses.

We treat the observation vector $z(t)={(x(t),y(t)) \in \mathbb{R}^{m+n}}$ as the state of a dynamical system at time $t$. We assume the covariates and response variables are nonlinear functions of a true, possibly lower-dimensional, underlying state of the system $\theta_t \in \mathbb{R}^d$, which is assumed to satisfy a Langevin diffusion. These nonlinear functions restrict the trajectories of the observations to lie on some (unknown) manifold $\mathcal{M}$.  

To fix the mathematical setting, consider an $\R^d$-valued stochastic process $\theta=(\theta_t)_{t\ge 0}$ and an $\R^{m+n}$-valued process $Z=(Z_t)_{t\ge 0}$ satisfying the equations
\begin{align}
\d \theta_t&=-\nabla U(\theta_t)\d t+ \sqrt{2} \d W_t,\\
Z_t&=H(\theta_t),
\end{align}
where $W$ is a $d$-dimensional standard Brownian motion. The potential $U:\R^d\to \R$ is assumed to grow fast enough so that $\theta$ admits the invariant measure $\mu(\d w)\propto e^{-U(w)}\d w$, and the infinitesimal generator of $\theta$, given by $\L:=\Delta-\nabla U\cdot \nabla$, possesses a discrete spectrum. (We restate our assumptions formally in Section \ref{sec:assumps} below.) The function ${H:\R^d\to\R^{m+n}}$ is assumed to be $C^2$-smooth and $\bL^2(\mu)$-integrable. We view these equations as a degenerate
\emph{hidden Markov model} (HMM): $Z$ is an observed process which depends (deterministically) on the evolution of the latent state $\theta$, which in turn evolves stochastically according to a Langevin diffusion. Given $N+1$ discrete observations of a single $Z$-trajectory, $\{Z(t_i)\}_{i=0}^N$, and without any knowledge of $H,U,$ or even the latent dimension $d$, we wish to \emph{learn} the dynamics of $Z$. That is, we wish to build an accurate data-driven approximation to carry out tasks such as prediction, conditional sampling, interpolation, and generation of new sample trajectories. 

In broad terms, we adopt a spectral approach, whereby $H$ is approximated using $\ell\ge1$ principal eigenfunctions $(\varphi_1,\dots, \varphi_\ell)$ of the infinitesimal generator $\L$. As $H\in \bL^2(\mu)$, we have that
\begin{align}
Z_t=H(\theta_t)\approx \mathbf{H} (\varphi_1(\theta_t),\dots,\varphi_\ell(\theta_t))^\top,
\end{align}
for some matrix $\mathbf{H}\in\R^{(m+n)\times\ell}$. If $\varphi_i$ is associated with eigenvalue $-\lambda_i<0$, then It\^{o}'s formula yields
\begin{align}
\d \varphi_i(\theta_t)=-\lambda_i \varphi_i(\theta_t) \d t+\sqrt{2} \nabla \varphi_i(\theta_t)\cdot \d W_t.
\end{align}
Approximating $(\varphi_1(\theta_t),\dots,\varphi_\ell(\theta_t))^\top$ by a linear diffusion $\xi=(\xi_t)_{t\ge 0}$ in $\R^\ell$, we arrive at a \emph{linear} HMM:
\begin{align}
\d \xi_t&=-\Lambda \xi_t\d t+\Gamma \d B_t,\\
Z_t&=\mathbf{H} \xi_t+\eta_t,
\end{align}
where $\Lambda=\diag(\lambda_1,\dots,\lambda_\ell)$, $\Gamma\in \R^{\ell\times\ell}$, and $B$ is a standard $\ell$-dimensional Brownian motion. Here, $\eta=(\eta_t)_{t\ge 0}$ is a noise term incorporating the approximation error between $\xi$ and the eigenfunctions, as well as the truncation error from the expansion of $H$.

The learning problem in this setting amounts to estimating the eigenvalues which comprise $\Lambda$ and the projection coefficients $\mathbf{H}$. 
Assuming a Gaussian model for $(\eta_t)_{t\ge 0}$, we arrive at a linear-Gaussian state-space model for $Z$. Using standard Kalman filtering techniques, one may fit $\Gamma$ and carry out prediction, conditional sampling, and so on. We view this non-parametric approach, introduced as the \emph{diffusion Kalman filter} (DKF) and applied to object tracking in \citet{shnitzer2020diffusion}, as an alternative to the \emph{extended Kalman filter} (see, for instance, \citet{Kailath99}). In Subsection \ref{sec:jdkf_model}, we generalize this methodology to incorporate both the covariates $x(t)$ and response variables $y(t)$, yielding the \emph{joint diffusion Kalman filter} (JDKF). 

As a special case, assuming a linear relationship between $x(t)$ and $y(t)$ yields a data-driven approach to linear multi-factor modeling, which can be compared and interpreted alongside standard linear models such as principal component analysis. By contrast, standard dynamic factor models typically uncover latent factors from a single set of measurements $x(t)$. In supervised settings where $y(t)$ is a noisy function of $x(t)$, factors learned from $x(t)$ alone may fail to retain explanatory power for $y(t)$. 
Motivated by this limitation, our approach learns diffusion-coordinate embeddings jointly from $(x(t),y(t))$, and models their evolution through a linear state-space approximation. This ensures that the learned factors retain explanatory power for $y(t)$ while still capturing the intrinsic dynamics of $x(t)$. The key insight is to treat both covariates and responses as noisy observations of a common latent state $\theta$ evolving according to unknown dynamics, and to learn this latent evolution in a data-driven way via diffusion maps. This yields a supervised dynamic factor framework that supports prediction and conditional sampling, and is formalized in Section~\ref{sec:jdkf_model}.

Our aims in this work are threefold: (1) we show that $\Lambda$ and $\mathbf{H}$ can be learned using \emph{anisotropic diffusion maps} and provide a quantitative error bound, (2) we justify the reduction to a linear HMM by bounding the approximation error between the eigenfunctions and linear diffusions, and (3) we demonstrate the effectiveness of our approach for dynamic factor modeling with an application to conditional sampling in the context of the stress testing of equity portfolios. 
The mathematical contributions of our work are as follows:
applying concentration inequalities for Markov processes, in Theorem \ref{thm:concentration} we generalize a result on the convergence of graph Laplacians from \citet{singer_graph_2006} to the time series case, showing that the eigenfunctions of $\L$ can be learned. 
With Proposition \ref{prop:clt}, we provide rigorous justification for reconstructing points in the original data space via the lifting operator by establishing its convergence to an $\mathbb{L}^2$-projection with respect to the invariant measure of $\theta$. The rationale for utilizing diffusion maps and defining a state space framework completely in terms of these embeddings is the observation, first mentioned in \citet{coifman2008diffusion}, that the dynamics of the diffusion coordinate embeddings are \textit{approximately linear}. Under standard spectral assumptions on the generator of $\theta$, we show in Subsection \ref{sec:linearapprox} that the approximation error using linear dynamics remains small when averaging over the data. 

\subsection{Manifold Learning with Diffusion Maps}

We carry out the learning of $\Lambda$ and $\mathbf{H}$ using the \emph{diffusion maps} algorithm, which we now outline following \citet{scholkopf_convergence_2007, von_luxburg_consistency_2008}. Consider first the case where $H=I$ so that $Z=\theta$, and we wish to determine the eigenvalues of the infinitesimal generator $\L$ from samples $\{\theta(t_i)\}_{i=0}^N$. For $\veps>0$, let $P_\veps=\exp(\veps\L)$ denote the Markov transition operator associated with $\theta$, and let $p_\veps$ denote the associated transition kernel with respect to the invariant measure $\mu$. As noticed in \citet{scholkopf_convergence_2007}, $P_\veps$ and $\L$ share eigenfunctions, and the eigenvalues of $P_\veps$ can be mapped to those of $-\L$ by $\lambda\mapsto -\frac{1}{\veps}\log(\lambda)$. One may approximate $P_\veps$ by the random operator $P_\veps^N$ which acts on continuous bounded functions $f:\R^d\to\R$ by 
\begin{align}
P_\veps^Nf(w):=\frac{1}{N+1}\sum_{i=0}^N p_\veps(w,\theta(t_i))f(\theta(t_i)).
\end{align}
This can be rewritten as 
\begin{align}
P_\veps^Nf(w)=\int p_\veps(w,y)f(y) \mu_N(\d y)
\end{align}
where $\mu_N=\frac{1}{N+1}\sum_{i=0}^N \delta_{\theta(t_i)}$ denotes the empirical measure defined by the data. As $N\to\infty$, one hopes that $\mu_N$ approximates the invariant measure $\mu$, that $P_\veps^N\to P_\veps$, and the eigenvalues/functions/spaces of $P_\veps^N$ converge to those of $P_\veps$.

The construction of $P_\veps^N$ requires \emph{a priori} knowledge of the transition kernel $p_\veps$. To overcome this obstruction, we approximate $p_\veps$ using a Gaussian kernel
\begin{align}\label{eq:kkernel}
k_\veps(w,y)=\exp(-\|w-y\|^2/4\veps). 
\end{align}
This is justified since the short-time asymptotic for the heat kernel of a Langevin diffusion satisfies
\begin{align}
\lim_{t\downarrow 0}4\veps\log(p_\veps(w,y))=-\|w-y\|^2.
\end{align}
(this follows from applying a Girsanov transformation; and see also \citet{hsu2002stochastic}, for background).
Consider the operator $T_\veps^N$ which acts on bounded continuous $f$ by
\begin{align}
T_\veps^Nf(w):=\frac{\sum_{i=0}^N k_\veps(w,\theta(t_i))f(\theta(t_i))}{\sum_{i=0}^N k_\veps(w,\theta(t_i))}.
\end{align}
Setting 
\begin{align}
d_\veps^N(w):=\sum_{i=0}^N k_\veps(w,\theta(t_i))=\int k_\veps(w,y) \mu_N(\d y),
\end{align}
we may rewrite $T_\veps^Nf(w)$ as
\begin{align}
T_\veps^Nf(w)=\frac{1}{d_\veps^N(w)}\int k_\veps(w,y)f(y)\mu_N(\d y).
\end{align}
Supposing that $T_\veps^N$ converges to some limiting $T_\veps$, and $T_\veps$ is close to $P_\veps$ for small $\veps$, then the eigenvalues/functions/spaces of $P_\veps$ can be approximated by those of $T_\veps^N$ for large $N$. Furthermore, the spectrum and eigenfunctions of $T_\veps^N$ can be computed explicitly as they are in a one-to-one correspondence with the matrix $\mathbf{P}$ which has entries
\begin{equation*}
\mathbf{P}_{ij}=\frac{k_\veps(\theta(t_i),\theta(t_j))}{d_\veps^N(\theta(t_i))},\quad i,j=1,\dots,N,
\end{equation*}
see \citet[Proposition 7]{von_luxburg_consistency_2008}.

When $H\neq I$, and we observe $\{Z(t_i)\}_{i=0}^N$ rather than the latent process $\{\theta(t_i)\}_{i=0}^N$, we must approximate the squared Euclidean distance in the kernel \eqref{eq:kkernel} and construct an analog of $\mathbf{P}$ using only samples of $Z$; this leads to \emph{anisotropic diffusion maps} (ADM), introduced in \citet{singer2008nonlinear}. To illustrate, let $w,\wt w\in \R^d$ and set $z=H(w)$ and $\wt z=H(\wt w)$. Denoting the Jacobian matrix for $H$ by $J_H$, a Taylor approximation gives
\begin{align}\label{eq:mahalanobis}
\|w-\wt w\|^2&=\frac{1}{2}(z-\wt z)^T\lb((J_HJ_H^T)^{-1}(z)+(J_HJ_H^T)^{-1}(\wt z)\rb)(z-\wt z)+O(\|z-\wt z\|^4)
\end{align}
Although $H$ is not observed, the Jacobian $J_H$ can be deduced from the quadratic covariation process of $Z$. To wit, applying It\^{o}'s formula to a component $Z^i=H^i(\theta)$ gives
\begin{equation*}
\d Z_t^i= \L H_i(\theta_t) \d t+\sqrt{2}\nabla H_i(\theta_t) \cdot \d W_t
\end{equation*}
and hence
\begin{equation*}
\d \langle Z^i,Z^j\rangle_t=2\nabla H_i(\theta_t) \cdot \nabla H_j(\theta_t)\d t,\quad i,j=1,\dots, m+n.
\end{equation*}
Therefore, $\|\theta(t_i)-\theta(t_j)\|^2$ can be approximated by the \emph{squared modified Mahalanobis distance} between samples of $Z$:
\begin{align*}
d(Z(t_i),Z(t_j))^2:=\frac{1}{2}(Z(t_i)-Z(t_i))^T(C^{-1}(t_i)+C^{-1}(t_j))(Z(t_i)-Z(t_j)),
\end{align*}
where $C$ is an estimator for the instantaneous quadratic covariation process of $Z$.

\subsection{Applications}

We demonstrate the applicability of our framework in problems from mathematical finance, where stochastic dynamical systems play a central role. Classical continuous and discrete-time models, including interest rate, credit risk, volatility, and return models, are typically specified parametrically and tailored to low-dimensional settings. Our data-driven approach is designed to operate under substantially weaker modeling assumptions and to scale naturally to high-dimensional environments.

We validate our framework in the context of financial stress testing and scenario analysis through a detailed empirical investigation. These settings involve predicting the evolution of an asset or portfolio conditional on shocks to the risk factors that drive its value (see, for instance, \citet{haugh2020scenario}).
Within the context of stress testing of equity portfolios, 
we demonstrate the superiority of our framework over several comparison benchmarks including Principal Component Analysis (PCA) via a series of historical rolling backtests. Our historical backtests span three major financial crisis periods and we demonstrate reductions in mean absolute error (MAE) of up to 55\% and superior accuracy up to 77.78\% of the time when utilizing our framework over the benchmarks. 

The rest of the paper is organized as follows. Section \ref{sec:lit} relates our work to the literature. Section \ref{sec:model} introduces some preliminary concepts and tools necessary for our framework, including the dynamical system setting and the diffusion map construction method. Section \ref{sec:results} presents key theoretical results, including convergence analysis of the graph Laplacian and the robustness of approximating the diffusion coordinates by linear diffusions. Section \ref{sec:jdkf_model} introduces our dynamic factor modeling framework and conditional sampling procedure. Section \ref{sec:validation} describes the application of our method to financial stress testing: in \ref{sec:stress_benchmarks} we describe the comparison benchmarks and \ref{sec:backtest} defines the historical backtesting procedure. Section \ref{sec:empirical} describes the data and the empirical results. We conclude in Section \ref{sec:conclusion}. The appendices provide further details on the data, empirical results, implementation of benchmarks, and a lengthy calculation used for the analysis of the graph Laplacian.

\section{Related Literature}
\label{sec:lit}
The current paper contributes to the growing bodies of literature on data-driven stochastic systems, manifold learning, and dynamic factor modeling.
Diffusion maps were introduced by \citet{coifman2006diffusion} as a geometrically faithful dimensionality reduction technique.
\citet{coifman2008diffusion} present diffusion maps for feature extraction in high-dimensional stochastic systems, in particular, the first few eigenfunctions of the backward Fokker--Planck diffusion operator are proposed as a coarse-grained low-dimensional representation for the long-term evolution of a system. Our framework leverages the ADM algorithm first introduced in \citet{singer2008nonlinear}, and falls within the larger category of models that find linear reduced-order representations of high-dimensional nonlinear stochastic systems. 

In its original setting, ADM recovers intrinsic parametrizations of independent and identically distributed (i.i.d.)~data that have undergone a nonlinear transformation. The error analysis of ADM is based on a result from \citet{singer_graph_2006} on the convergence of graph Laplacians, which we generalize from the case of uniform data on a compact manifold to data arising from a time series in Euclidean space. The heart of this analysis relies on a concentration inequality (called ``the inequality of Chernoff'' in \citet[paragraph before Equation (3.14)]{singer_graph_2006}) with the second moment of the summands as variance proxy. In \citet[Appendix B.1]{berry_variable_2016}, it is noted that Bernstein's Inequality (see, for instance, \citet[Section 2.8]{boucheron_concentration_2013}) is an applicable concentration inequality for the case of i.i.d.~samples. In recent years, many concentration inequalities have been generalized for data arising from Markov chains (see, for instance \citet{adamczak_tail_2008,douc_consistency_2011,miasojedow_hoeffdings_2014,paulin_concentration_2015,fan_hoeffdings_2021}), and we will draw on this literature in Section \ref{sec:results}.

As noted in the introduction, applying diffusion maps directly to time series data may be problematic since the data is not sampled uniformly and one should ideally take into account the temporal relationships in the data. Other works have also directly tackled the issue of explicitly incorporating the time dependencies into the diffusion coordinates or devised diffusion map-based models for multivariate time series. \citet{shnitzer2020diffusion} propose a nonparametric framework for state-estimation of stochastic dynamical systems that are high-dimensional and evolve according to gradient flows with isotropic diffusion. Their framework combines diffusion maps with Kalman filtering and concepts from Koopman operator theory, and relies on the existence of a linear lifting operator. In Section \ref{sec:jdkf_model}, we extend and generalize this framework to admit two datasets, one of responses and one of covariates, to perform supervised learning tasks and ensure the estimated hidden states contain explanatory power for the responses. \citet{shnitzer2022manifold} present a data-driven approach for time series analysis that combines diffusion maps with contracting observers from control theory, and their main goal is to construct a low-dimensional representation of high-dimensional signals generated by dynamical systems without requiring strong modeling assumptions. \citet{lian_multivariate_2015} introduce a method to analyze multivariate time series using diffusion maps on a parametric statistical manifold; the framework handles high-dimensional nonstationary signals that are subject to random interference and ambient noise, which could degrade the performance of more conventional methodologies.

Some of the aforementioned works, for instance, \citet{coifman2008diffusion} and \citet{shnitzer2020diffusion}, make the assumption that the dynamics of the intrinsic latent state evolve according to a Langevin equation, a key assumption we carry forward into our work. In those works, it is suggested that the dynamics of the diffusion coordinates should approximately follow linear SDEs. Indeed, in Section \ref{sec:results}, we establish a new robustness result showing that the approximation error using linear diffusions remains small when averaging over the data, and provided that some spectral assumptions are satisfied. When it comes to financial applications, it is worth noting that the class of Langevin diffusions encompasses a wide variety of stochastic processes ubiquitous in the finance literature (see 
Table \ref{table:stochproc}), exemplified by the Ornstein-Uhlenbeck process. For the use of Langevin dynamics to model stock prices see, for instance, the early works \citet{bouchaud1998langevin,Canessa2001}.

An important financial application where dynamic factor models can yield powerful methodologies is financial stress testing and scenario analysis, which consists in predicting a portfolio's profit \& loss (P\&L) when subsets of the risk factors that drive the portfolio's value experience shocks. In \citet{haugh2020scenario}, scenario analysis is embedded within a dynamic factor framework by defining a linear Gaussian state space model. There, derivatives portfolios are considered and the risk factors are low-dimensional, as is usually the case in stress testing. Our state space model, outlined in Section \ref{sec:jdkf_model}, allows one to define arbitrarily high-dimensional scenarios, and unlike the framework in \citet{haugh2020scenario} is purely data-driven and nonparametric, overcoming restrictive assumptions, like a VAR(1) process for the latent factors. Furthermore, our state space model is explicitly designed to admit two sets of observations (covariates and responses) and considers their joint dependence on the latent state as well as their joint dynamics, thus resulting in a supervised learning dynamic factor framework. 

\section{Preliminaries}
\label{sec:model}
In this section, we precisely state our mathematical setting and assumptions (Subsection \ref{sec:assumps}), and make explicit the data-driven embedding approach using ADM (Subsection \ref{sec:embedding}).
Throughout the remainder of the paper, we adopt the notations found in Table \ref{tab:notations}. In addition, we adopt the following conventions for stochastic processes: $X_t$ (upper case Latin letter with subscript) denotes a continuous time process at time $t$, $x(t)$ (lower case Latin letter with bracketed argument) is a discrete time sample of $X_t$, and $x_t$ (lower case Latin letter with subscript) is a linear state space approximation for $x(t)$. The convention for Greek letters will depend on context.

\begin{table}[h]
\centering
\begin{tabular}{|c|l|}
\hline 
\textbf{Notation} & \textbf{Description} \\\hline
$\nabla$ & Gradient operator\\
$\Delta$ & Laplacian operator\\ 
$C^2$ & Twice continuously differentiable functions\\
$\mathbb{L}^2(\mu)$ & Square-integrable functions w.r.t. measure $\mu$\\ 
$\langle \cdot,\cdot\rangle_\mu$ & $\mathbb{L}^2(\mu)$ inner product\\
$\|\cdot\|_{\mathbb{L}^2(\mu)}$ & $\mathbb{L}^2(\mu)$ norm\\
$\L$ & Fokker--Planck generator $\Delta - \nabla U \cdot \nabla$\\
$\theta_t$ & Latent state process\\
$X_t, Y_t$ & Covariate and response processes\\
$\mu$ & Invariant measure of $\theta$\\
$\lambda_k$ & $k$-th eigenvalue of $-\L$\\
$\varphi_k$ & $k$-th eigenfunction of $\L$\\
$\psi_k$ & $k$-th diffusion coordinate (empirical eigenvector)\\
$\mathbf{H}^x, \mathbf{H}^y$ & Lifting operators\\
$\ell$ & Dimension of diffusion coordinate space\\
\hline
\end{tabular}
\caption{Frequently used notations.}
\label{tab:notations}
\end{table}


\subsection{Dynamical System Setting}\label{sec:assumps}

Given integers $m,n>0$, we model the \emph{covariates} $X=(X_t)_{t\ge0}$ and \emph{response variables} $Y=(Y_t)_{t\ge0}$ as continuous-time stochastic processes taking values in $\R^m$ and $\R^n$, respectively. We model $X$ and $Y$ as nonlinear mappings of a latent stochastic process $\theta=(\theta_t)_{t\ge 0}$, which takes values in $\R^d$ for some unknown dimension $d$. We introduce the key assumptions next.

\begin{assumption}[Langevin Diffusion]\label{ass1}
The latent variables $\theta=(\theta_t)_{t\ge 0}$ satisfy a Langevin diffusion:
\begin{align}\label{eq:thetaSDE}
\d \theta_t = -\nabla U(\theta_t)\d t+\sqrt{2}\d W_t
\end{align}
where $W=(W^1,\dots, W^d)$ is a Brownian motion in $\R^d$ and $U$ is a $C^2$ potential function satisfying the following growth condition at infinity:
\begin{align}\label{eq:growth}
\lim_{\|w\| \to\infty} \lb[-\Delta U(w)+\frac{1}{2}|\nabla U(w)|^2\rb]=\infty.
\end{align}
\end{assumption}
The invariant measure for the dynamics of $\theta$ is given by
\begin{align*}
\mu(\d w)=\frac{1}{Z}\exp(-U(w))\d w,
\end{align*}
where $Z$ is a normalizing constant (see, for instance, \citet[Section 1.15.7]{bakry_analysis_2014}). We let $p(w)=\frac{1}{Z}\exp(-U(w))$ denote the density of $\mu$.
Condition \eqref{eq:growth} ensures that the \emph{backward Fokker--Planck operator} or \emph{generator} associated with $\theta$ given by
\begin{align*}
\L :=\Delta -\nabla U\cdot \nabla
\end{align*}
has a discrete spectrum, see \citet[Corollary 4.10.9]{bakry_analysis_2014}, which allows for expansions of $\bL^2(\mu)$ functions using series of eigenfunctions.

\begin{assumption}[Manifold Hypothesis]\label{ass2}
There exist $C^2\cap \bL^2(\mu)$ functions $F:\R^d\to \R^m$ and $G:\R^d\to \R^n$ such that ${X_t=F(\theta_t)}$ and  ${Y_t=G(\theta_t)}$ for all $t\ge 0$. 
\end{assumption}

\begin{remark}
Often, a Langevin diffusion is given with the coefficient $\sqrt{2/\beta}$ in front of $\d W_t$, where $\beta>0$ is a fixed constant. Here, we choose to absorb this additional degree of freedom into the unknown functions $F$, $G$, and $U$. 
\end{remark}

Some vector calculus shows that the \emph{carr\'{e} du champ} operator for $\L$, denoted by $\Gamma$, and defined by $\Gamma(f,g)=\frac{1}{2}\lb(\L (fg)-f\L g-g\L f\rb)$ satisfies $\Gamma(f,g)=\nabla f\cdot \nabla g$.
Integration by parts gives
\begin{align}\label{eq:dirichlet}
-\int_{\R^d} f \L g \d\mu = -\int_{\R^d} g \L f \d\mu=\int_{\R^d} \Gamma(f,g) \d\mu=\int_{\R^d} \nabla f\cdot \nabla g \d\mu.
\end{align}
Therefore, the spectrum of $-\L$, denoted $0=\lambda_0\le \lambda_1\le\dots$,  is contained in $[0,\infty)$.
The diffusion maps algorithm uncovers the relationship between the latent variables and the observations by relying on the following fact: any ${f\in \bL^2(\mu)}$ can be decomposed as
\begin{align}\label{eq:fexpansion}
f(w)=\sum_{k=0}^\infty a_k \varphi_k(w)
\end{align}
where $(\varphi_k)_{k=0}^\infty$ is a complete orthonormal sequence of eigenfunctions for $\L$ and for each $k$
\begin{align}\label{eq:muprojection}
a_k=\langle f,\varphi_k\rangle_{\mu}=\int_{\R^d}f(w)\varphi_k(w)\mu(\d w).
\end{align}

Our next and final assumption ensures that $\lambda_0=0$ is a simple eigenvalue of $-\L$, and all of the other eigenvalues of $-\L$ are bounded away from zero.

\begin{assumption}[Poincar\'{e} Inequality]\label{ass3}
We assume there exists some $C>0$ so that for any ${f\in \bL^2(\mu)}$
\begin{align*}
\Var_{\mu}(f)\le C \E_{\mu}[\|\nabla f\|_2^2].
\end{align*}
\end{assumption}
If $\varphi_k$ is an eigenfunction of $\L$ with eigenvalue $\lambda_k\neq 0$, then equation \eqref{eq:dirichlet} along with a Poincar\'{e} Inequality implies that $\lambda_k\ge 1/C$. In fact, the above Poincar\'{e} inequality is equivalent to the existence of a \emph{spectral gap} for $-\L$, that is, $\lambda_1-\lambda_0=\lambda_1\ge 1/C$. We remark that the term spectral gap is often employed in the diffusion maps literature to mean a gap in the spectrum other than between $\lambda_0$ and $\lambda_1$ (see, for instance, \citet[Section 3.2]{coifman2008diffusion}).

\begin{remark}
If $\nabla^2 U\ge \rho I$ for some $\rho>0$, then $\mu$ satisfies a Poincar\'{e} inequality with $C= 1/\rho$. For instance, if $U(x)=\frac{1}{2}x^T Ax$ with $A$ positive definite, then $\mu$ is a non-degenerate Gaussian, the smallest eigenvalue of $A$ gives the spectral gap $\rho$, and $1/\rho$ is in fact the best Poincar\'{e} constant. However, this condition is not necessary, for instance, the two-sided exponential distribution $\frac{1}{2} e^{-|x|}$ on $\R$ satisfies a Poincar\'{e} inequality with $C=4$.
\end{remark}

It\^{o}'s formula yields the dynamics for a $C^2$ function of $\theta$, such as $X$ or $Y$. For instance, ${X=(X^1,\dots,X^m)}$ satisfies (in components):
\begin{equation}\label{eq:XSDE}
\begin{split}
\d X_t^j&=\lb(-\nabla F_j(\theta_t)\cdot \nabla U(\theta_t^i)+\Delta F_{j}(\theta_t)\rb)\d t+\sqrt{2}\nabla F_j(\theta_t) \cdot \d W_t \\
&= \L F(\theta_t) \d t+\sqrt{2}\nabla F_j(\theta_t) \cdot \d W_t\\
&=\L F(\theta_t) \d t+\sqrt{2}\|\nabla F_j(\theta_t)\|_2 \cdot \d B_t^j,\quad j=1,\dots, m,
\end{split}
\end{equation}
where $B^j$ is a Brownian motion by L\'{e}vy's Theorem (see, for instance, a similar derivation for the case of Bessel processes in \citet[Proposition 3.3.21]{karatzas_brownian_2014}).
And similarly, writing $Y=(Y^1,\dots,Y^n)$ we have $\d Y_t^j=\L G(\theta_t) \d t+\sqrt{2}\nabla G_j(\theta_t) \cdot \d W_t$, and so on. 
These SDEs admit the flexibility to capture many of the standard models of mathematical finance, as shown in Table \ref{table:stochproc}. 
For illustrative purposes, we will work with a low-dimensional example which gives rise to Ornstein--Uhlenbeck (OU) and Cox--Ingersoll--Ross (CIR) processes:
\begin{example}[OU and CIR Processes]
Suppose that $U$ is given by $U(w)=\frac{1}{2}(w_1^2+w_2^2)$ for $w\in\R^2$. In this case, $\theta$ is a 2-dimensional Ornstein--Uhlenbeck process satisfying $\d\theta_t=-\theta_t \d t+\sqrt{2}\d W_t$, the generator satisfies $\L f(w)=\Delta f(w) -w\cdot \nabla f(w)$, and the invariant distribution $\mu$ is $N(0,I)$. The Hermite polynomials $(h_k)_{k\ge 0}$ are defined by
\begin{align*}
h_k(x)=\frac{(-1)^k}{\sqrt{k!}}\exp\lb(\frac{x^2}{2}\rb)\frac{\d^k}{\d x^k}\exp\lb(-\frac{x^2}{2}\rb), \quad x\in \R.
\end{align*}
The first several Hermite polynomials are given by $h_0=1$, $h_1(x)=x$, and $h_2(x)=\frac{1}{\sqrt{2}}(x^2-1)$.
Note that $\L h_k(w_1)=-kh_k(w_1)$ and $h_k'(w_1)=\sqrt{k}h_{k-1}(w_1)$ for $w_1\in \R$.
An orthonormal basis for $\bL^2(\mu)$ is given by the family $(h_{i,j})_{i,j\ge 0}$ where $h_{i,j}(w):=h_i(w_1)h_j(w_2)$ for any $w=(w_1,w_2)\in \R^2$. Consider the case where $X_t=(\theta_t^1)^2+(\theta_t^2)^2$ so that
\begin{align*}
\d X_t &= -2\lb[(\theta_t^1)^2+(\theta_t^2)^2-2\rb]\d t+2\sqrt{2}\theta_t^1 \d W_t^1+2\sqrt{2}\theta_t^2 \d W_t^2=-2(X_t-2)\d t+2\sqrt{2X_t}\d B_t,
\end{align*}
where $B$ is a standard Brownian motion, by L\'{e}vy's Theorem. $X$ is a CIR process, a popular model for non-negative data such as interest rates, typically. We may take $Y_t=\theta_t^1$, so that $Y$ is a 1-dimensional OU process. We note for later use that $\d \langle X,Y\rangle_t=4Y_t \d t$.
\label{ex:example}
\end{example}

\begin{table}
\begin{center}
\begin{tabular}{| c | c | c | c |}
\hline
$U$ & $F$ & $X$  \\ 
\hline
linear & exponential & geometric Brownian motion\\
linear & $\|\cdot\|_2$ & Bessel\\
quadratic & linear & Ornstein--Uhlenbeck/Vasicek\\
quadratic & quadratic & Cox--Ingersoll-Ross (CIR)\\
quadratic & power & constant elasticity of variance (CEV)\\
\hline
\end{tabular}
\end{center}
\caption{Common stochastic processes from mathematical finance which can be written in the form $X_t=F(\theta_t)$ with $\d \theta_t = -\nabla U(\theta_t)\d t+\sqrt{2}\d W_t$. These may all be checked using It\^{o}'s formula, see for instance, \citet[Sections 2.6 and 6.4]{jeanblanc_mathematical_2009} for the necessary computations in the case of CIR and CEV processes. Note also that a linear $U$ does not satisfy the growth condition \eqref{eq:growth}; however, these processes can be approximated locally.}
\label{table:stochproc}
\end{table}

\begin{remark}\label{rmk:homogenous}
In \citet{singer2008nonlinear}, diffusion maps were used to uncover the dynamics of $\theta$ for the case where the components $\theta^1,\dots \theta^d$ are \emph{independent} homogeneous diffusions of the form $\d \theta_t^i = b^i(\theta_t^i)\d t+\sigma^i(\theta^i)\d W_t^i$ with drifts $b^1,\dots, b^d$ and non-degenerate diffusion coefficients $\sigma^1,\dots,\sigma^d$. In this case, one may apply a Lamperti Transformation (see \citet[Chapter 5, Exercise 2.20]{karatzas_brownian_2014}) to map the diffusion coefficients to constants, provided some integrability and boundedness assumptions hold on $b^i$ and $\sigma^i$. Specifically, for each $i$, let $g^i$ be an antiderivative of $1/\sigma^i$ and define $\wt\theta^i_t=g^i(\theta_t^i)$. Then It\^{o}'s lemma gives
\begin{align*}
\d\wt\theta^i_t=\lb(\frac{b^i(\theta_t^i)}{\sigma^i(\theta_t^i)}-\frac{(\sigma^i)'(\theta_t^i)}{2}\rb)\d t+\d W_t^i
\end{align*}
\end{remark}

While the functions $U,F,G$ and the dimension $d$ are unobserved, the quadratic covariation processes of $X$ and $Y$ yield information about the Jacobians of $F$ and $G$:
\begin{align*}
\d \langle X^j,X^k\rangle_t&=2\nabla F_j(\theta_t) \cdot \nabla F_k(\theta_t)\d t,\quad j,k=1,\dots, m\\
\d \langle Y^j,Y^k\rangle_t&=2\nabla G_j(\theta_t) \cdot \nabla G_k(\theta_t)\d t,\quad j,k=1,\dots, n.
\end{align*}
Let $w,\wt w\in \R^d$ and set $x=F(w)$ and $\wt x=F(\wt w)$. Denoting the Jacobian matrix for $F$ by $J_F$, a Taylor approximation as in \citet{singer2008nonlinear} gives
\begin{align}\label{eq:mahalanobis}
\|w-\wt w\|^2&=\frac{1}{2}(x-\wt x)^T\lb((J_FJ_F^T)^{-1}(x)+(J_FJ_F^T)^{-1}(\wt x)\rb)(x-\wt x)+O(\|x-\wt x\|^4)
\end{align}
The first term on the right is a \emph{squared modified Mahalanobis distance}. Hence, distances in the latent space can be approximated using observations of $X$ and/or $Y$ along with estimates of the quadratic covariation processes.
In the next subsection, we follow this insight to uncover the eigenvalues and eigenfunctions of the Fokker--Planck operator associated with the dynamics of $\theta$.

\subsection{Data-Driven Embedding}\label{sec:embedding}

We outline here the anisotropic diffusion maps algorithm for learning the eigenvalues and eigenfunctions of the generator $\L$ from discrete observations. The algorithm constructs a graph Laplacian from data that approximates $I+\veps \L$ for small $\veps > 0$, enabling recovery of the generator's spectral properties without parametric assumptions.

Let $Z=H(\theta)$ be a $C^2\cap \bL^2(\mu)$ mapping of $\theta$, and suppose we are given a data set of $N+1$ discrete-time observations of $Z$, denoted $(z(t_i))_{i=0}^N$.
Depending on the application at hand, and the size and quality of the data sets, we may for instance take $Z=X$, $Z=Y$, or $Z=(X,Y)$. To fix notation, we consider the situation here where $Z=(X,Y)$ so that $H=(F,G):\R^{d}\to\R^{m+n}$. We assume that the data has been mean-centered.

Although we do not observe $\theta$ directly, distances in the latent space can be approximated using observations of $Z$. Motivated by the Taylor approximation \eqref{eq:mahalanobis} in the previous subsection, we consider the squared modified Mahalanobis distance between $z_i$ and $z_j$:
\begin{align*}
d(z(t_i),z(t_j)):=\frac{1}{2}(z(t_i)-z(t_i))^T(C^{-1}(t_i)+C^{-1}(t_j))(z(t_i)-z(t_j)),
\end{align*}
where $C(t_i)$ denotes the instantaneous quadratic covariation matrix of $z(t_i)$. This approximates $\|\theta(t_i)-\theta(t_j)\|^2$ up to $O(\|z(t_i)-z(t_j)\|^4)$ by equation \eqref{eq:mahalanobis}. In Section \ref{sec:validation}, we approximate $C(t_i)$ by its sample average in a trailing window. For strictly positive financial data, one can also use an estimator such as the one presented in \citet{chesney_estimating_1995}.

We construct a squared distance matrix for the data $\mathbf{D} \in \R^{N+1}\times\R^{N+1}$ with entries \newline ${\mathbf{D}_{ij}=d(z(t_i),z(t_j))}$. We pass the entries of $\mathbf{D}$ through the Radial Basis Function (RBF) kernel $\exp\lb(-d(\cdot,\cdot)/2 \veps\rb)$ with parameter $\veps>0$ to obtain an adjacency matrix $\mathbf{W}$:
\begin{align*}
\mathbf{W}_{ij}=
\exp\lb(-\frac{d(z(t_i),z(t_j))}{2 \veps}\rb).
\end{align*}
A popular choice for practitioners is to choose $\veps$ as the median of all distances between data points (see \citet{shnitzer2020diffusion} or Subsection \ref{sec:experiments}). Another methodology (see \citet[Figure 3]{singer_detecting_2009}) is to choose $\veps$ in a region where the Log-Log plot of the sum of all entries of $\mathbf{W}$ versus $\veps$ is linear.
In any case, $\mathbf{W}$ is then normalized to create a stochastic matrix $\mathbf{P}$ (which is in fact doubly-stochastic as it is symmetric):
\begin{align*}
\mathbf{P}_{ij}=\frac{\mathbf{W}_{ij}}{\sum_{j=0}^{N} \mathbf{W}_{ij}}.
\end{align*}

In Theorem \ref{thm:concentration}, we show that the \emph{graph Laplacian} $\mathbf{P}-I$ approximates the Fokker--Planck operator $\L$ in the following sense: if $(z(t_i))_{i=0}^{N}=(H(\theta(t_i)))_{i=0}^{N}$ where $\theta$ satisfies the dynamics in Assumption \ref{ass1}, and if $f:\R^{m+n}\to \R$ is such that $f\circ H\in \bL^2(\mu)$ then the $i$th row of $(\mathbf{P}-I)$ applied to the vector $[f(z(t_0)),\dots,f(z(t_N))]$ satisfies
\begin{align*}
\sum_{j=0}^N\mathbf{P}_{ij} f(z(t_j))-f(z(t_i))=\veps \L (f\circ H)(\theta(t_i))+O(\veps^2)
\end{align*}
with high probability. This is the key insight of diffusion maps: if $f$ is an eigenvector of $\mathbf{P}$ with eigenvalue $\kappa$ (note that $\mathbf{P}-I$ and $\mathbf{P}$ share eigenvectors), then $f\circ H$ is approximately an eigenfunction of $\L$. Using the approximation $\mathbf{P}\approx I+\veps \L\approx \exp(\veps \L)$, an estimate for the associated eigenvalue of $-\L$ is given by the spectral mapping $\lambda=-\veps^{-1}\log \kappa>0$. Since the spectrum of $\L$ is discrete, one can hope to approximate functions of the latent state $\theta$, using truncations of the eigenfunction expansion \eqref{eq:fexpansion}. This motivates using the eigenvectors of $\mathbf{P}$ as \emph{diffusion coordinates}, a geometrically faithful representation of the underlying system states.
We denote the ordered eigenvalues and an associated basis of $\R^{N+1}$-orthonormal eigenvectors of $\mathbf{P}$ by
\begin{align*}
1=\kappa_0\geq ... \geq \kappa_{N} \geq 0,\quad \text{and}\quad \psi_0,...,\psi_{N}.
\end{align*}
As our data is mean-centered, we discard the first eigenvector (a constant vector corresponding to eigenvalue $1$) and keep eigenvectors 1 through $\ell$, for some $\ell>0$. Practitioners commonly choose $\ell$ to correspond to the eigenvalue preceding the largest drop-off $\kappa_{i}-\kappa_{i+1}$ (see, for instance, Subsection \ref{sec:experiments} or \citet[Subsection 3.2]{coifman2008diffusion}). In the context of Example \ref{ex:example}, a higher value of $\ell$ corresponds to a set of polynomials with higher maximum degree.

\begin{definition}
The \emph{diffusion coordinates} at each time $t_i$ are given by $\psi(t_i):=[\psi_1(t_i), ..., \psi_\ell(t_i)]$.
\end{definition}

Following, \citet{shnitzer2020diffusion}, we approximate the observed data by a \emph{lifting operator} $z(t_i)=\mathbf{H} \psi(t_i)$, which is given in coordinates by 
\begin{align*}
z_j(t_i)\approx \sum_{k=1}^\ell \mathbf{H}_{j,k} \psi_k(t_i), \quad j=1,2,...,m
\end{align*}
where $\mathbf{H}_{j,k}=\langle z_j,\psi_k \rangle_{\text{data}}:=(N+1)^{-1}\sum_{i=0}^N z_j(t_i)\psi_k(t_i)$. We note the analogy to \eqref{eq:muprojection}, the $\bL^2(\mu)$-projection of $Z_t^j=H(\theta_t)$ onto an $\L$-eigenfunction $\varphi_k$. In Proposition \ref{prop:clt}, we consider $N^{-1/2}\mathbf{H}_{j,k}$ for large $N$, which is asymptotically normal.

\begin{remark}\label{rmk:scaling}
Notice that $\langle \cdot,\cdot \rangle_{\text{data}}$ is defined in \citet{shnitzer2020diffusion} without the normalization by $N+1$. Note also that the diffusion coordinates themselves are of order $O(N^{-1/2})$, as they are the coordinates of $\R^{N+1}$-orthonormal vectors. These scaling factors are of no consequence for the state-space model presented in Section \ref{sec:jdkf_model}, since the diffusion coordinates are treated as hidden variables and may be rescaled as one wishes.
\end{remark}

Next, we consider the dynamics of the eigenfunctions of $\L$, which we will approximate using diffusion coordinates. Suppose that $\varphi_1,\dots,\varphi_\ell$ satisfy $\L \varphi_i=-\lambda_i \varphi_i$ for $i= 1,\dots,\ell$. Just as with \eqref{eq:XSDE}, Ito's lemma gives
\begin{equation}\label{eq:phiSDE}
\begin{aligned}
\d\varphi_i(\theta_t)&=-\nabla U(\theta_t)\cdot \nabla \varphi_i(\theta_t)\d t+\Delta\varphi_i(\theta_t) \d t+\sqrt{2}\nabla \varphi_i(\theta_t) \cdot \d W_t\\
&=\L \varphi_i (\theta_t)\d t+\sqrt{2}\|\nabla \varphi_i(\theta_t)\|_2 \d B^i_t\\
&=-\lambda_i \varphi_i(\theta_t)\d t+\sqrt{2}\|\nabla \varphi_i(\theta_t)\|_2 \d B^i_t,
\end{aligned}
\end{equation}
where $B^i$ is a Brownian motion and we have applied L\'{e}vy's Theorem for the second equality.
Note also that for $i\neq j$ we have
\begin{align*}
\d\langle\varphi_i(\theta_\cdot),\varphi_j(\theta_\cdot)\rangle _t&=2\nabla \varphi_i(\theta_t)\cdot\nabla\varphi_j(\theta_t) \d t
\end{align*}
The right hand side is twice the \emph{carr\'{e} du champ} operator for $\L$ applied to $\varphi_i$ and $\varphi_j$. The identity \eqref{eq:dirichlet} gives
\begin{align*}
\int \nabla \varphi_i(w)\cdot\nabla\varphi_j(w) \mu(\d w)=- \int \varphi_i(w) \L \varphi_j(w) \mu(\d w)=0,
\end{align*}
by orthogonality of eigenfunctions.
The above analysis suggests that the dynamics of the eigenvectors $\psi(t)$ of $\mathbf{P}$ can be approximated using uncorrelated processes with linear drifts determined by the associated eigenvalues. Indeed, this is justified rigorously in Subsection \ref{sec:linearapprox}. Along with the lifting operator $\mathbf{H}$, this motivates the use of a linear state space model and Kalman filter as described in the next subsection.

Before turning to our theoretical results and data-driven methodology, we return to Example \ref{ex:example} and illustrate the use of diffusion maps on synthetic data. 
We perform a simulation of the observation vector $z(t)=(x(t),y(t))$ composed of 500 discrete-time samples of the processes $X$ and $Y$, respectively. The data has been mean-centered and the covariation process is approximated using a windowed covariance matrix, with a trailing window of size 50. Since we have access to the true $\theta$ and the ground-truth Hermite polynomial eigenfunctions, we can assess how well the diffusion coordinates recover the eigenfunctions of $\L$, as well as the reconstruction quality of the $X$ and $Y$ processes via the lifting operator $\mathbf{H}$, using the dominant $10$ diffusion map eigenvectors. Figures \ref{fig:reconstructed_time_series_1} and \ref{fig:reconstructed_time_series_2} display the reconstructed $X$ and $Y$ processes both in terms of time series and hypersurfaces.

\begin{figure}
    \centering
    \subfloat[True surfaces $X_t = (\theta_t^1)^2 + (\theta_t^2)^2$ and $Y_t = \theta_t^1$ 
(colored by value) overlaid with lifted reconstructions $\mathbf{H}\psi(t)$ (stem plots).]{
    \includegraphics[width=0.9\linewidth, trim={0 0 .3cm, 1cm},clip]{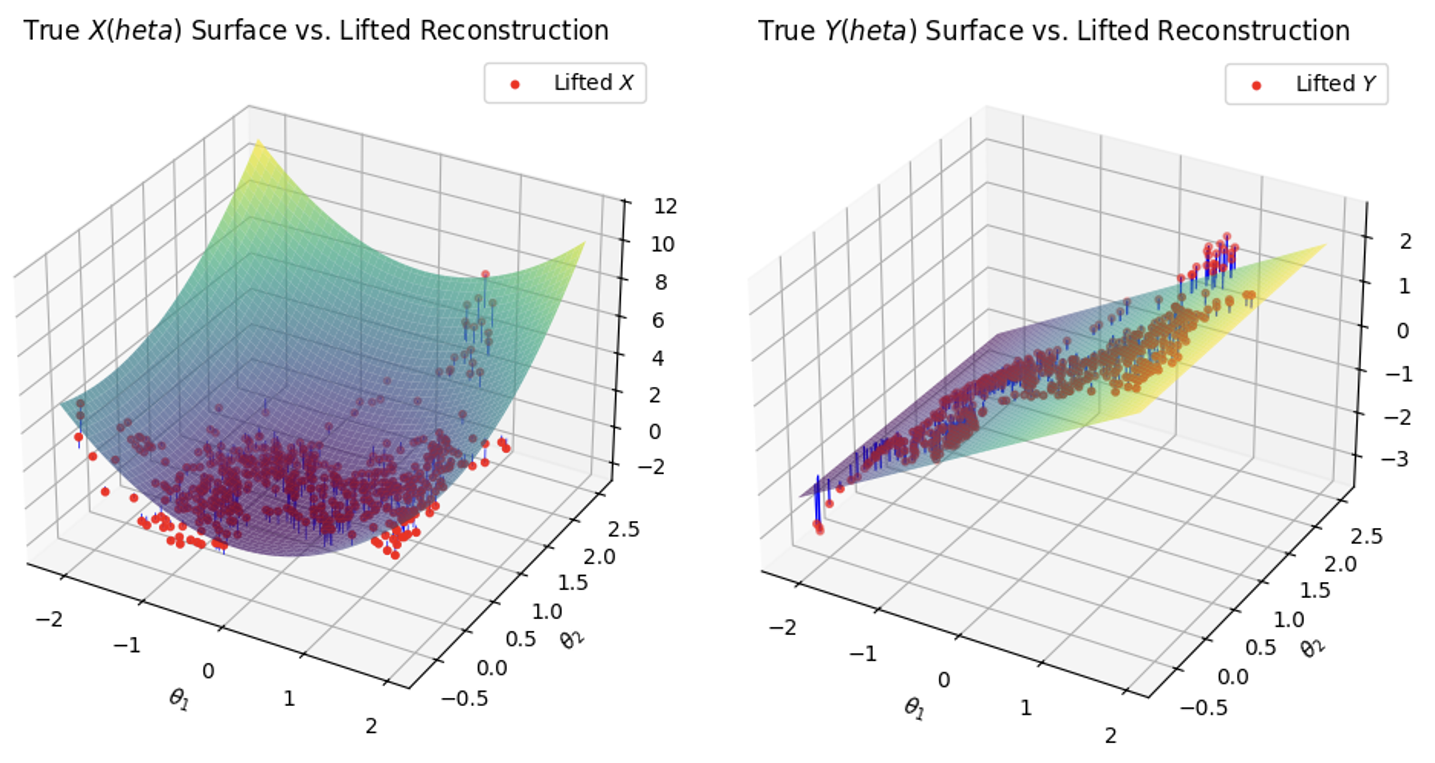} \label{fig:reconstructed_time_series_1}}
    \\
    \subfloat[
    True $X_t$ (left) and $Y_t$ (right) are plotted as functions of time alongside the lifted time series via $\mathbf{H}\psi(t)$.
     ]{  \includegraphics[width=0.98\linewidth, trim={0.5cm 0 .19cm, 1.1cm},clip]{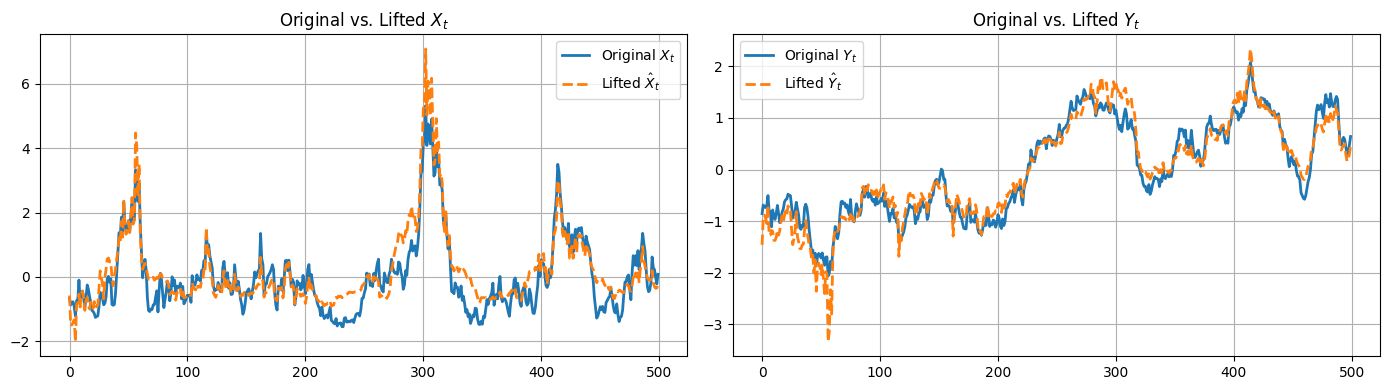} \label{fig:reconstructed_time_series_2}}
     \caption{
Comparisons showing that $\ell=10$ diffusion coordinates accurately reconstruct the OU and CIR dynamics from  Example \ref{ex:example}.
}
\end{figure}

Similarly, Figures \ref{fig:eigenfunction_1} and \ref{fig:eigenfunction_2} below display two ground-truth Hermite eigenfunctions of the backward Fokker-Planck operator $\L$ and their approximation via the diffusion map eigenvectors, both in terms of time series and hypersurfaces. The observed fit is satisfactory in these cases; however, we note that the diffusion maps algorithm may be learning a \emph{different} basis than these Hermite eigenfunctions (even restricting to polynomials, there are an uncountable number of such bases due to the rotational invariance of the Gaussian density).

\begin{figure}[H]
    \centering
    \subfloat[Hermite polynomials $h_{1,0}$ (left) and $h_{1,3}$ (right) surfaces as a function of $\theta_t$ alongside stem plots of $\psi_1(t)$ and $\psi_3(t)$ respectively.]{
    \includegraphics[width=0.94\linewidth, trim={0 0 0cm, .8cm},clip]{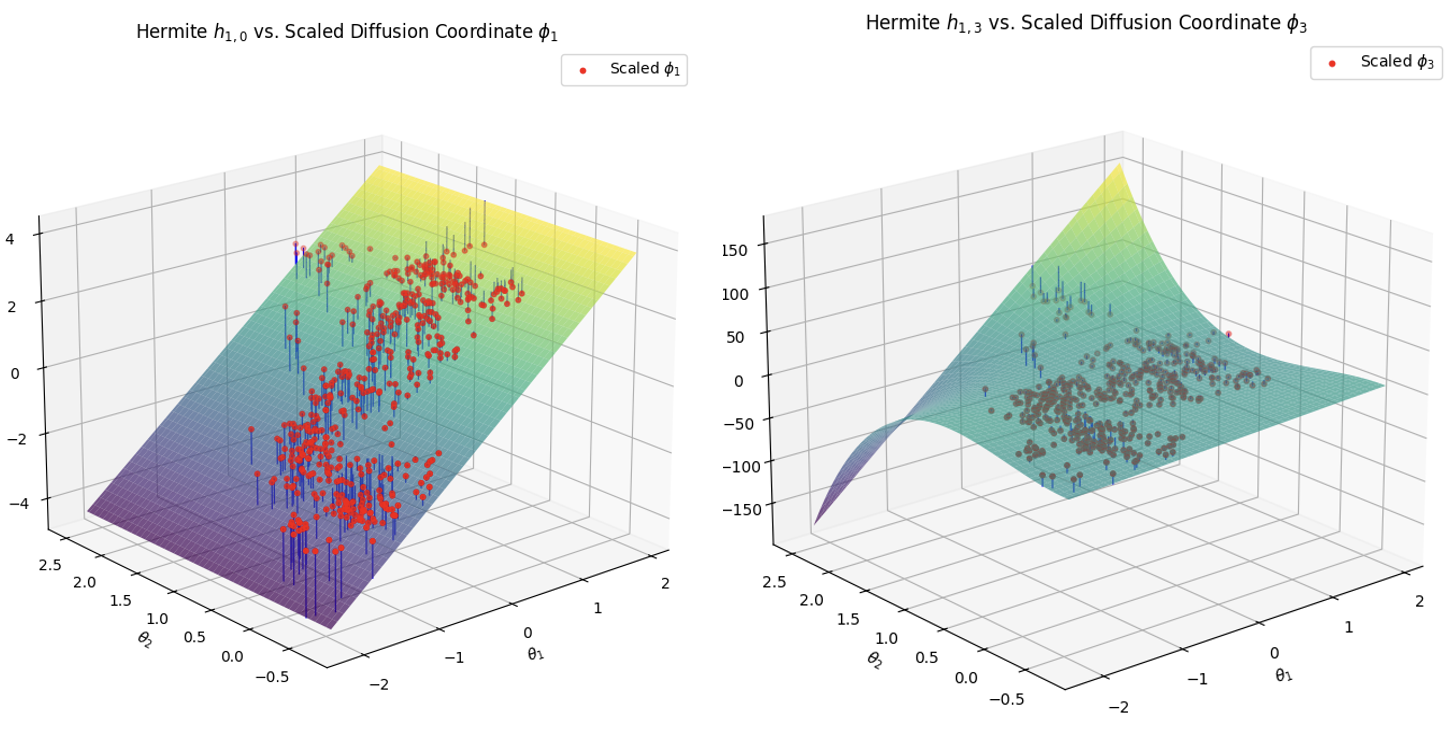} \label{fig:eigenfunction_1}}
    \\
    \subfloat[Hermite polynomials $h_{1,0}$ (left) and $h_{1,3}$ (right) as a function of time against the time series $\psi_1(t)$ and $\psi_3(t)$ respectively.]{
    \includegraphics[width=0.98\linewidth, trim={0 .5cm .11cm, .45cm},clip]{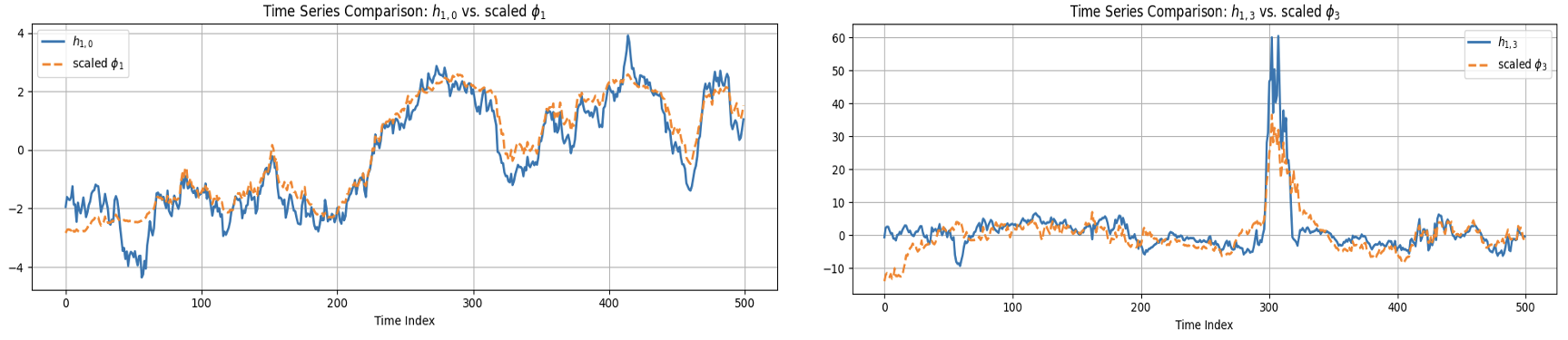} \label{fig:eigenfunction_2}}
    \caption{
   Eigenfunction recovery and path reconstructions for the simulation of Example \ref{ex:example}
}
\end{figure}


\section{Convergence of Averages and Robustness of Approximations}
\label{sec:results}

Having described the diffusion maps algorithm and illustrated its performance on synthetic data, in this section we consider the theoretical foundations. We establish three key results: (i) the graph Laplacian constructed from time series data converges to the true generator (Theorem \ref{thm:concentration}), (ii) the lifting operator converges to the $\mathbb{L}^2(\mu)$ projection (Proposition \ref{prop:clt}), and (iii) the linear approximation of eigenfunction dynamics is robust under ergodic averaging (Section \ref{sec:linearapprox}).  These results rigorously justify the Kalman filtering framework we develop in Section \ref{sec:jdkf_model}.

The works by \citet{shnitzer2020diffusion,shnitzer2022manifold} leverage the algorithm put forth in \citet{singer2008nonlinear} to learn the eigenvalues of $\L$, to estimate the dynamics of the eigenfunctions, and to approximate the observations using a linear lift. The Kalman filtering framework of \citet{shnitzer2020diffusion} is the motivation for the Joint Diffusion Kalman Filter and conditional sampling procedure we subsequently develop in Section \ref{sec:jdkf_model}. However, there are some mathematical questions that first need to be addressed in order to rigorously justify this procedure. Under the assumptions in Subsection \ref{sec:assumps}, we show in this section that it is admissible to learn the eigenvalues and the lift from time series data. In particular, a Poincar\'{e} Inequality (Assumption \ref{ass3}) implies that initialization errors are forgotten, and time averages converge (see Subsection \ref{sec:lifts}). In Subsection \ref{sec:concentration}, we bound the approximation error of the eigenfunctions of $\L$ by the diffusion map eigenvectors using a concentration inequality. Furthermore, in Subsection \ref{sec:linearapprox} we quantify the error of approximating the eigenfunction dynamics using linear diffusions, demonstrating robustness.

At the heart of the error analysis for diffusion maps and related algorithms is the consideration of empirical averages of the form
\begin{align}\label{eq:ergodic}
S_N=\frac{1}{N+1}\sum_{i=0}^{N} V(\theta(t_i)).
\end{align}
For instance, in the setting of \citet{singer_graph_2006}, $(\theta(t_i))_{i=0}^N$ are independent uniform samples from a compact Riemannian manifold $\M$, and the graph Laplacian is shown to approximate the Laplace--Beltrami operator $\Delta_\M$, that is, the following holds for smooth $f$ and a point $\theta(t_i)$ fixed:
\begin{align}\label{eq:quotienttaylor}
\frac{\sum_{j\neq i} \exp\lb(-\frac{\|\theta(t_i)-\theta(t_j)\|^2}{2\veps}\rb)f(\theta(t_i))}{\sum_{j\neq i}\exp\lb(-\frac{\|\theta(t_i)-\theta(t_j)\|^2}{2\veps}\rb)}\to \frac{\E \lb[\exp\lb(-\frac{\|\theta-\theta(t_i)\|^2}{2\veps}\rb)f(\theta)\rb]}{\E \lb[\exp\lb(-\frac{\|\theta-\theta(t_i)\|^2}{2\veps}\rb)\rb]}=f(\theta(t_i))+\frac{\veps}{2}\Delta_\M f(\theta(t_i))+O(\veps^{2}).
\end{align}
The first arrow denotes convergence of the numerator and denominator to their averages by the law of large numbers, and these averages undergo Taylor expansions to give the error estimate in $\veps$. Error analysis in the number of samples is then carried out by showing concentration of the empirical averages around their expectation.

In our setting, the summands in \eqref{eq:ergodic} arise not from independent samples, but from a sampled Langevin diffusion, which yields a Markov chain. In this case, the convergence of $S_N$ to the expectation $\E_{\mu}[V]$ as $N\to\infty$ is an \emph{ergodic property}, which is guaranteed for $V\in\bL^2(\mu)$ under our assumption of a Poincar\'{e} inequality or (equivalently) a spectral gap.
If the Langevin diffusion is started in equilibrium, i.e.~$\theta_0\sim\mu$, then $(\theta(t_i))_{i=0}^N$ is a stationary sequence and $\E_{\theta_0\sim\mu}[S_N]=\E_{\mu}[V]$ for any $N\ge0$. In this case, the fluctuations $\sqrt{N}S_N$ converge as $N\to\infty$ to a Gaussian distribution by the celebrated Central Limit Theorem of \citet{kipnis_central_1986}. 
A fast rate of convergence to equilibrium for Markov chains is a desirable property in the context of Markov chain Monte Carlo algorithms, and many central limit theorem results are collected in the survey paper \citet{roberts_general_2004}. For continuous time averages, we refer the reader to \citet{cattiaux_central_2012}, where the trend to equilibrium is studied under weaker assumptions than ours (for instance, some laws with sub-exponential tails are shown to be ergodic). In general, the rate of convergence depends both on the integrability of $V$ and the weight of the tails of $\mu$. To fix notation, throughout the remainder of this section we consider a constant fixed step size $\Delta t=t_i-t_{i-1}$ and set $t_0=0$.

\subsection{Short-Term Equilibration and Long-Term Convergence}\label{sec:lifts}

We collect here some results on Markov chains which hold under Assumption \ref{ass3}, and apply these results to the sampled Markov process $(\theta(t_i))_{i=0}^N$. For any $t\ge0$, consider the operator $P_t$ which acts on $f\in\bL^2(\mu)$ by $P_tf(x)=\E[f(\theta_t)|\theta_0=x]$. Some calculus gives
\begin{align*}
\pp_t \|P_tf\|_{\bL^2(\mu)}^2=\pp_t \int_{\R^d} (P_tf)^2\d \mu=2\int_{\R^d} P_tf \L P_tf \d \mu =-2\int_{\R^d} \Gamma(P_tf,P_tf)\d \mu\le -\frac{2}{C} \|P_tf\|_{\bL^2(\mu)}^2,
\end{align*}
where we have used Assumption \ref{ass3} for the inequality. The differential form of Gr\"{o}nwall's Inequality yields $\|P_tf\|_{\bL^2(\mu)}^2\le \exp(-2t/C)\|f\|_{\bL^2(\mu)}^2$. Hence, $P_tf$ converges to equilibrium in $\bL^2$ exponentially fast. In fact, this property is equivalent to a Poincar\'{e} Inequality (see for instance \citet[Theorem 4.2.5]{bakry_analysis_2014}). In the next lemma, we exploit this fact to bound the deviation of observables from their values at equilibrium when $\theta_0$ is initialized at some $\mu_0$ which is absolutely continuous with respect to the equilibrium measure $\mu$. In practice, this error can be diminished by discarding an initial \emph{burn-in} period (as we have done in Subsection \ref{sec:backtest}). Similar results hold in other $\bL^p$-spaces by Riesz--Thorin Interpolation (see, for instance, \citet[Proposition 3.17]{rudolf_explicit_2012} and the application to MCMC methods in \citet[Theorem 12]{fan_hoeffdings_2021}), but we restrict to the $\bL^2$ case to reduce notation.

\begin{lemma}\label{lem:burnin}
Suppose $\theta_0\sim \mu_0$ where $\mu_0$ admits a Radon--Nikodym derivative with respect to the invariant measure $\mu$ such that $\|\frac{\d\mu_0}{\d\mu}-1\|_{\bL^2(\mu)}<\infty$, and let $\mu_t$ denote the distribution of $\theta_t$. If $f\in \bL^2(\mu)$ then for $t\ge0$,
\begin{align*}
\E_{\theta_0\sim\mu_0}\lb[f(\theta_t) \rb]\le \E_\mu[f]+
\exp\lb(-\lambda_1 t\rb)\lb\|\frac{\d\mu_0}{\d\mu}-1\rb\|_{\bL^2(\mu)}\lb\|f\rb\|_{\bL^2(\mu)}
\end{align*}
\end{lemma}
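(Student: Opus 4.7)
The plan is to combine the $\bL^2(\mu)$-contraction of the semigroup $P_t$ established in the paragraph preceding the lemma with a duality argument. First, I would rewrite the expectation using the Radon--Nikodym derivative $h := \frac{\d\mu_0}{\d\mu}$: since $\E_{\theta_0\sim\mu_0}[f(\theta_t)]=\int P_t f(x)\mu_0(\d x)=\int P_tf \cdot h\,\d\mu$, and since $P_t$ preserves $\mu$-integrals ($\int P_tf\,\d\mu=\E_\mu f$), I can split
\begin{align*}
\E_{\theta_0\sim\mu_0}[f(\theta_t)]=\E_\mu[f]+\int (P_tf-\E_\mu f)(h-1)\,\d\mu,
\end{align*}
where I have used that $\int (h-1)\,\d\mu=0$ allows me to subtract the constant $\E_\mu f$ from $P_tf$ inside the integrand without changing its value.

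Next, I would bound the remainder term. The computation shown right before the lemma shows $\pp_t\|P_tg\|_{\bL^2(\mu)}^2\le -\tfrac{2}{C}\|P_tg\|_{\bL^2(\mu)}^2$, but the Poincar\'{e} inequality is a statement about variances, so this contraction really applies to \emph{mean-zero} $g$. Applying it to $g:=f-\E_\mu f$ (which is indeed mean-zero and whose image $P_tg=P_tf-\E_\mu f$ remains mean-zero by invariance of $\mu$) together with Gr\"{o}nwall gives
\begin{align*}
\|P_tf-\E_\mu f\|_{\bL^2(\mu)}\le e^{-t/C}\,\|f-\E_\mu f\|_{\bL^2(\mu)}\le e^{-\lambda_1 t}\,\|f\|_{\bL^2(\mu)},
\end{align*}
using $\lambda_1\ge 1/C$ (the consequence of Assumption~\ref{ass3} recorded just after it) and the trivial bound $\|f-\E_\mu f\|_{\bL^2(\mu)}\le \|f\|_{\bL^2(\mu)}$.

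Finally, applying Cauchy--Schwarz to the remainder term yields
\begin{align*}
\lb|\int (P_tf-\E_\mu f)(h-1)\,\d\mu\rb|\le \|P_tf-\E_\mu f\|_{\bL^2(\mu)}\,\|h-1\|_{\bL^2(\mu)}\le e^{-\lambda_1 t}\|f\|_{\bL^2(\mu)}\lb\|\tfrac{\d\mu_0}{\d\mu}-1\rb\|_{\bL^2(\mu)},
\end{align*}
which combined with the decomposition above gives the claim.

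The steps are all routine; the only subtle point, which I flag as the main thing to get right, is that the exponential $\bL^2$-decay of $P_t$ under the Poincar\'{e} inequality only holds on the orthogonal complement of the constants, so one must center $f$ (and exploit that $h-1$ is orthogonal to constants) before invoking the spectral gap. Everything else is Cauchy--Schwarz and bookkeeping.
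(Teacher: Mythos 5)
Your proof is correct and is essentially the paper's argument viewed on the dual side: the paper writes $\E_{\theta_0\sim\mu_0}[f(\theta_t)]=\int f\,\frac{\d\mu_t}{\d\mu}\,\d\mu$ and applies Cauchy--Schwarz together with the $\bL^2(\mu)$-decay of $\frac{\d\mu_t}{\d\mu}-1$, whereas you let the semigroup act on $f$ and decay $P_tf-\E_\mu f$; since $P_t$ is self-adjoint on $\bL^2(\mu)$ for this reversible diffusion, the two estimates are identical, and your observation that the contraction only applies on the orthogonal complement of the constants is a point the paper's preceding computation glosses over. One correction, though: the step $e^{-t/C}\|f-\E_\mu f\|_{\bL^2(\mu)}\le e^{-\lambda_1 t}\|f\|_{\bL^2(\mu)}$ is not justified by $\lambda_1\ge 1/C$ --- that inequality gives $e^{-t/C}\ge e^{-\lambda_1 t}$, i.e.\ points the wrong way. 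To obtain the stated rate $e^{-\lambda_1 t}$ you should either take $C=1/\lambda_1$, the optimal Poincar\'{e} constant (which is exactly what the paper does, remarking that $C=1/\lambda_1$), or bypass Gr\"{o}nwall entirely and read off $\|P_tg\|_{\bL^2(\mu)}\le e^{-\lambda_1 t}\|g\|_{\bL^2(\mu)}$ for mean-zero $g$ directly from the spectral decomposition $g=\sum_{k\ge1}a_k\varphi_k$, since every nonzero eigenvalue of $-\L$ is at least $\lambda_1$.
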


\begin{proof}
{Since $\mu_t$ is absolutely continuous with respect to $\mu$ (as the Langevin diffusion preserves absolute continuity), we can write}
\begin{align*}
\E_{\theta_0\sim\mu_0}\lb[f(\theta_t)\rb] 
&= \int_{\R^d} f(w) \mu_t(\d w) = \int_{\R^d} f(w) \frac{\d\mu_t}{\d\mu}(w) \mu(\d w)\\
&= \int_{\R^d} f(w) \lb(1 + \frac{\d\mu_t}{\d\mu}(w) - 1\rb) \mu(\d w)\\
&= \E_\mu[f] + \int_{\R^d} f(w) \lb(\frac{\d\mu_t}{\d\mu}(w) - 1\rb) \mu(\d w).
\end{align*}
{Applying H\"{o}lder's inequality to the second term yields
\begin{align*}
\lb|\int_{\R^d} f(w) \lb(\frac{\d\mu_t}{\d\mu}(w) - 1\rb) \mu(\d w)\rb|
&\leq \|f\|_{\bL^2(\mu)} \lb\|\frac{\d\mu_t}{\d\mu} - 1\rb\|_{\bL^2(\mu)}.
\end{align*}}

{The density evolution $\frac{\d\mu_t}{\d\mu}$ satisfies $\frac{\d\mu_t}{\d\mu} = P_t\lb(\frac{\d\mu_0}{\d\mu}\rb)$ where $P_t$ is the Markov semigroup associated with $\theta$. Since $\E_\mu\lb[\frac{\d\mu_0}{\d\mu}\rb] = 1$, we have $\E_\mu\lb[\frac{\d\mu_0}{\d\mu} - 1\rb] = 0$, and thus}
\begin{align*}
\lb\|\frac{\d\mu_t}{\d\mu} - 1\rb\|_{\bL^2(\mu)} 
&= \lb\|P_t\lb(\frac{\d\mu_0}{\d\mu} - 1\rb)\rb\|_{\bL^2(\mu)}.
\end{align*}
{By the $\bL^2$-exponential convergence of $P_t$ established via Gr\"{o}nwall's inequality in the text preceding this lemma (with rate $\lambda_1 = 1/C$ where $C$ is the Poincar\'{e} constant), we obtain
\begin{align*}
\lb\|P_t\lb(\frac{\d\mu_0}{\d\mu} - 1\rb)\rb\|_{\bL^2(\mu)}
&\leq \exp\lb(-\lambda_1 t\rb) \lb\|\frac{\d\mu_0}{\d\mu} - 1\rb\|_{\bL^2(\mu)}.
\end{align*}
Combining these estimates yields the desired result.}
\end{proof}


Next, we turn to the long-term behavior of the Markov chain $(\theta(t_i))_{i=0}^N$.
As a particular application, we focus on the empirical sums
\begin{align*}
\mathbf{H}_{j,k}=\langle z_j,\psi_k \rangle_{\text{data}}:=\frac{1}{N+1}\sum_{i=0}^N z_j(t_i)\psi_k(t_i)
\end{align*}
which define the lifting operator $\mathbf{H}$. In light of Remark \ref{rmk:scaling}, we will assume that $\psi_k$ has been rescaled so that $(N+1)^{-1}\sum_{i=0}^N\psi_k(t_i)=\E_\mu[\varphi_k(\theta)]$. Recall that $z_j(t_i)=F^j(\theta(t_i))$ and we have assumed mean-centering: $\E_\mu[F^j]=0$. Setting $a_{jk}=\langle F^j,\varphi_k\rangle_{\mu}$,
the triangle inequality gives
\begin{align*}
\lb|\mathbf{H}_{j,k}-a_{j,k}\rb|\le \frac{1}{N+1}\lb|\sum_{i=0}^N F^j(\theta(t_i))(\psi_k(t_i)-\varphi_k(\theta(t_i)))\rb|+\frac{1}{N+1}\lb|\sum_{i=0}^N \lb\{F^j(\theta(t_i))\varphi_k(\theta(t_i))-a_{jk}\rb\}\rb|.
\end{align*}
The first sum concerns the approximation of the eigenfunctions of $\L$ by diffusion coordinates, which is addressed in the next subsection. 
Focusing on the second term, we shall apply the Kipnis--Varadhan Central Limit Theorem \citet{kipnis_central_1986}. We state the result with $\theta_0\sim\mu$; however, under suitable assumptions on $F^j\varphi_k$ similar results hold even when the Markov chain is started from a deterministic initial condition (see, for instance, \citet{derriennic_central_2001,derriennic_central_2003,cuny_central_2012,cattiaux_central_2012}).

\begin{proposition}[\citet{kipnis_central_1986}]\label{prop:clt}
Suppose $\theta_0\sim\mu$ and there exists $g$ such that $F^j\varphi_k=\sqrt{I-P_{\Delta t}}g$. Then the weak convergence
\begin{align*}
\frac{1}{\sqrt{N}}\sum_{i=0}^N \lb\{F^j(\theta(t_i))\varphi_k(\theta(t_i))-a_{jk}\rb\}\implies \calN\lb(0,\sigma^2\rb)\text{ as }N\to\infty
\end{align*}
holds, where $\sigma^2$ satisfies
\begin{align*}
\sigma^2=\lim_{N\to\infty}\frac{1}{N}\Var_{\mu}\lb(\sum_{i=0}^N F^j(\theta(t_i))\varphi_k(\theta(t_i))\rb)=\int_{-1}^1 \frac{1+\lambda}{1-\lambda}\rho_{F^j\varphi_k}(\d \lambda)<\infty
\end{align*}
and $\rho_{F^j\varphi_k}$ is the spectral measure of $F^j\varphi_k$.
\end{proposition}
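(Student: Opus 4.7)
The plan is to apply the Kipnis--Varadhan martingale approximation technique to the centered stationary sequence $h(\theta(t_i))$, where $h := F^j \varphi_k - a_{jk}$ has mean zero under $\mu$. Since the Langevin diffusion \eqref{eq:thetaSDE} is reversible with respect to $\mu$, the one-step transition operator $P_{\Delta t} = e^{\Delta t \L}$ is self-adjoint on $\bL^2(\mu)$ with spectrum contained in $[0,1]$. The spectral theorem then furnishes a scalar spectral measure $\rho_h$ associated with $h$, and the hypothesis $F^j\varphi_k = \sqrt{I - P_{\Delta t}}g$ (interpreted via the Borel functional calculus) is equivalent to $\int_{-1}^1 (1-\lambda)^{-1}\rho_h(\d\lambda) < \infty$. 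In particular, $\sigma^2 = \int(1+\lambda)(1-\lambda)^{-1}\rho_h(\d\lambda)$ is finite since $1+\lambda \le 2$.

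For each $\epsilon>0$, I would use the functional calculus to define the resolvent approximation
\[
\hat h_\epsilon := \bigl((1+\epsilon)I - P_{\Delta t}\bigr)^{-1} h,
\]
which satisfies $(1+\epsilon)\hat h_\epsilon - P_{\Delta t}\hat h_\epsilon = h$. Rearranging yields the pointwise decomposition
\[
h(\theta(t_i)) \;=\; M_{i+1}^\epsilon \;-\; \bigl[\hat h_\epsilon(\theta(t_{i+1})) - \hat h_\epsilon(\theta(t_i))\bigr] \;+\; \epsilon\, \hat h_\epsilon(\theta(t_i)),
\]
where $M_{i+1}^\epsilon := \hat h_\epsilon(\theta(t_{i+1})) - P_{\Delta t}\hat h_\epsilon(\theta(t_i))$ is a stationary martingale difference sequence with respect to the natural filtration of $\theta(t_\cdot)$. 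After summing over $i=0,\dots,N$ and dividing by $\sqrt{N}$, the telescoping middle term becomes $o_P(1)$ by stationarity and $\bL^2$-integrability of $\hat h_\epsilon$, while the residual $\epsilon N^{-1/2}\sum_i \hat h_\epsilon(\theta(t_i))$ has stationary variance controlled by $\epsilon^2 \int (1+\lambda)/[(1-\lambda)((1+\epsilon)-\lambda)^2]\,\rho_h(\d\lambda)$, which vanishes as $\epsilon\downarrow 0$ by dominated convergence (the integrand is uniformly dominated by $2/(1-\lambda)$, $\rho_h$-integrable under the range hypothesis).

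For the martingale term, I would invoke the stationary ergodic martingale CLT (e.g., Billingsley). The conditional second moment satisfies $\E[(M_{i+1}^\epsilon)^2 \mid \theta(t_i)] = (P_{\Delta t}\hat h_\epsilon^2 - (P_{\Delta t}\hat h_\epsilon)^2)(\theta(t_i))$, and the spectral gap from Assumption \ref{ass3} supplies the ergodicity needed so that its Ces\`{a}ro averages converge in $\bL^1$ to $\E_\mu[(M_1^\epsilon)^2] = \int (1-\lambda^2)/((1+\epsilon)-\lambda)^2\,\rho_h(\d\lambda) =: \sigma_\epsilon^2$. Monotone convergence gives $\sigma_\epsilon^2 \uparrow \sigma^2$ as $\epsilon\downarrow 0$. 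A standard diagonal argument—first sending $N\to\infty$ for fixed $\epsilon$, then sending $\epsilon\to 0$—combines these ingredients to yield weak convergence of $N^{-1/2}\sum_i h(\theta(t_i))$ to $\calN(0,\sigma^2)$.

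The identification of $\sigma^2$ with the asymptotic variance follows from the spectral expansion of covariances in the stationary regime, $\Cov_\mu(h(\theta(t_0)), h(\theta(t_i))) = \langle h, P_{\Delta t}^i h\rangle_\mu = \int \lambda^i \rho_h(\d\lambda)$, summing the geometric series in $N^{-1}\Var_\mu(\sum_{i=0}^N h(\theta(t_i)))$, and passing to the limit by dominated convergence again licensed by the range hypothesis. The main obstacle is the exchange-of-limits step in the resolvent approximation: controlling the residual term uniformly enough that the martingale CLT at fixed $\epsilon$ can be upgraded, as $\epsilon\downarrow 0$, to a CLT with the correct limiting variance $\sigma^2$. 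This is the crux of the Kipnis--Varadhan argument and relies essentially on reversibility of the underlying diffusion.
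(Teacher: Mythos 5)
The paper gives no proof of this proposition: it is stated as a direct citation of the Kipnis--Varadhan central limit theorem for additive functionals of stationary reversible Markov chains, which is precisely the argument you have reconstructed (self-adjointness of $P_{\Delta t}=e^{\Delta t\L}$ on $\bL^2(\mu)$, the resolvent corrector $\hat h_\epsilon$, the martingale-plus-telescope-plus-residual decomposition, control of the residual through the spectral measure under the range condition, the martingale CLT with $\sigma_\epsilon^2\uparrow\sigma^2$, and the final exchange of limits). Your outline is correct and matches the cited source's approach; the only cosmetic remarks are that the range hypothesis should be read as applying to the centered function $h=F^j\varphi_k-a_{jk}$ (as you implicitly do), and that here the spectrum of $P_{\Delta t}$ lies in $[0,1]$, so the integral written over $[-1,1]$ is supported on the nonnegative part of the spectrum and no issues near $\lambda=-1$ arise.
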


In the context of Example \ref{ex:example}, $F^j\varphi_k$ is a polynomial and $\sigma^2$ can be computed using knowledge of the Hermite eigenfunctions.

\subsection{Error Analysis of the Graph Laplacian}\label{sec:concentration}

The ADM approach of \citet{singer2008nonlinear} relies on the approximation of the Fokker--Planck operator by the graph Laplacian. Suppose $(\theta(t_i))_{i=0}^N$ are i.i.d.~samples from $\mu$, let $f\in \bL^2(\mu)$, and set $z(t_i)=H(\theta_i)$. Then an analogous argument to \eqref{eq:quotienttaylor} (see \citet[Equation (30)]{singer2008nonlinear} along with extra accounting for a factor of 2 from differences in some definitions given there) yields
\begin{align*}
\sum_{j=0}^N\mathbf{P}_{ij} f(z(t_j))\to \frac{\E_\mu \lb[\exp\lb(-\frac{\|\theta-\theta(t_i)\|^2}{2\veps}\rb)f\circ H(\theta)\rb]}{\E_\mu \lb[\exp\lb(-\frac{\|\theta-\theta(t_i)\|^2}{2\veps}\rb)\rb]}= f\circ H(\theta(t_i)) +\veps \L (f\circ H)(\theta(t_i))+O(\veps^2)
\end{align*}
as $N\to\infty$, where $\mathbf{P}$ is the stochastic matrix described in Subsection \ref{sec:embedding}.
This suggests that an eigenvector of $\mathbf{P}-I$ is close to an eigenfunction of $\L$ on the support of the samples $(\theta(t_i))_{i=0}^N$.

Our aim in this section is to generalize this approximation for the case when $(\theta(t_i))_{i=0}^N$ is a time series corresponding to a sampled Langevin diffusion, and to quantify the error using a concentration inequality.
We begin by fixing some notation. Fix $f\in \mathbb{L}^2(\mu)$, $w\in\R^d$, and let 
\begin{align}\label{eq:uvdef}
u(\theta)=\exp\lb(-\frac{\|\theta-w\|^2}{2\veps}\rb)f(\theta) \quad\text{and}\quad
v(\theta)=\exp\lb(-\frac{\|\theta-w\|^2}{2\veps}\rb).
\end{align}
Setting $w=\theta(t_i)$, we have that
\begin{align*}
\sum_{j=0}^N\mathbf{P}_{ij} f(z(t_j))=\frac{\sum_{j=0}^N u(\theta(t_j))}{\sum_{j=0}^Nv(\theta(t_j))}
\end{align*}
The following theorem demonstrates concentration of the right hand side around $\E_{\mu} [u]/\E_{\mu} [v]$.

\begin{theorem}\label{thm:concentration}
Suppose that Assumptions \ref{ass1}, \ref{ass2}, and \ref{ass3} hold. Let $f\in \mathbb{L}^2(\mu)$, fix $w\in \R^d$, and define $u,v$ by \eqref{eq:uvdef}. Then
\begin{align*}
\bP_{\theta_0\sim \mu}\lb(\lb|\frac{\sum_{j=0}^N u(\theta(t_j))}{\sum_{j=0}^Nv(\theta(t_j))} -\frac{\E_{\mu} [u]}{\E_{\mu} [v]}\rb|>\alpha \rb)\le 2\exp\lb(\frac{-(1-\exp(-\lambda_1 \Delta t))(N+1)p(w) \alpha^2}{10 C \alpha p(w)+2^{-d+2} \pi^{-d/2} \veps^{-d/2}\lb(\veps\|\nabla f(w)\|^2+O(\veps^2)\rb)}\rb)
\end{align*}
where $C$ depends on $\|u\|_\infty$.
\end{theorem}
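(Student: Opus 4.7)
My plan is to reduce the deviation of the ratio to a single Markov-chain concentration problem, apply a Bernstein-type inequality for stationary reversible chains, and use Laplace asymptotics around the fixed point $w$ to identify the constants. Set $a_0 := \E_\mu[u]$, $b_0 := \E_\mu[v]$, and write $S_u = \sum_{j=0}^{N} u(\theta(t_j))$ and similarly $S_v$. The key linearization is the observation that, defining $g_\pm(\theta) := u(\theta) - (a_0/b_0 \pm \alpha) v(\theta)$, one has $\E_\mu[g_\pm] = \mp \alpha b_0$, and the event $\pm(S_u/S_v - a_0/b_0) > \alpha$ is equivalent to $|\sum_j (g_\pm(\theta(t_j)) - \E_\mu[g_\pm])| > \alpha (N+1) b_0$. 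This recasts the ratio deviation as a concentration statement for the centered sum of a known function of the chain, sidestepping the need for a separate control of the random denominator $S_v$.

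To concentrate this centered sum, I would use that the sampled chain $(\theta(t_i))_{i=0}^{N}$ has one-step transition operator $P_{\Delta t} = e^{\Delta t\L}$, whose $\bL^2(\mu)$-spectral gap equals $1 - e^{-\lambda_1 \Delta t}$ by the functional calculus --- this is exactly where Assumption \ref{ass3} enters, as already discussed in Subsection \ref{sec:lifts}. Applying a Bernstein-type inequality for stationary reversible Markov chains (e.g.\ \cite{paulin_concentration_2015,adamczak_tail_2008,fan_hoeffdings_2021}) to each of $g_\pm$ and union-bounding over the two signs produces a bound of the form
\begin{align*}
\bP_{\theta_0\sim\mu}\left(\left|\frac{S_u}{S_v}-\frac{a_0}{b_0}\right|>\alpha\right) \le 4\exp\left(-\frac{(1-e^{-\lambda_1\Delta t})(N+1)\alpha^2 b_0^2}{K_1\Var_\mu(g_\pm)+K_2\,\alpha\, b_0\,\|g_\pm-\E_\mu[g_\pm]\|_\infty}\right),
\end{align*}
for some universal constants $K_1,K_2$.

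The remaining work is to evaluate the leading-order asymptotics in $\veps$ of $b_0$, $a_0/b_0$, and $\Var_\mu(g_\pm) \approx \Var_\mu(h)$ with $h := u-(a_0/b_0) v$, via Taylor expansions of $p$ and $f$ around $w$ together with Gaussian moments such as $\int \exp(-\|z\|^2/(2\veps)) z_i z_j \d z = (2\pi\veps)^{d/2}\veps\,\delta_{ij}$. One obtains $b_0 = (2\pi\veps)^{d/2} p(w) + O(\veps^{d/2+1})$, $a_0/b_0 = f(w) + O(\veps)$, and, crucially,
\begin{align*}
\Var_\mu(h) \le \E_\mu[h^2] = \tfrac{1}{2}(\pi\veps)^{d/2}\,p(w)\,\veps\,\|\nabla f(w)\|^2 + O(\veps^{d/2+2}).
\end{align*}
Substituting these into the exponent above, cancelling a factor of $b_0$ between numerator and denominator, and absorbing bounded ratios involving $\|h\|_\infty \lesssim \|u\|_\infty$ into a single constant $C$ depending on $\|u\|_\infty$ then yields the bound stated in Theorem \ref{thm:concentration}.

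The main obstacle is the variance expansion displayed above. Because the leading piece of $h$ is $\exp(-\|\theta-w\|^2/(2\veps))\,\nabla f(w)\cdot(\theta-w)$, which is \emph{odd} in $(\theta-w)$, the naive $O((\pi\veps)^{d/2})$ contribution to $\E_\mu[h^2]$ cancels, and one must push the Taylor expansion one order further to extract the prefactor $\veps\|\nabla f(w)\|^2$ together with its $O(\veps^2)$ remainder. This is a mechanical but bookkeeping-heavy computation, parallel to \cite{singer_graph_2006} but with the non-uniform density $p$ in place of a compact manifold's uniform volume measure, and is almost certainly the ``lengthy calculation used for the analysis of the graph Laplacian'' that the paper's outline defers to the appendices.
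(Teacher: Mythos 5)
Your proposal is correct and follows essentially the same route as the paper: your $g_{\pm}=u-(a_0/b_0\pm\alpha)v$ is, up to normalization by $b_0$, exactly the paper's auxiliary function $V$, the concentration step is the same Bernstein-type inequality for stationary reversible chains with spectral gap $1-e^{-\lambda_1\Delta t}$ (the paper uses \cite[Theorem 3.3]{paulin_concentration_2015}), and your variance asymptotics $\E_\mu[h^2]=\tfrac12(\pi\veps)^{d/2}p(w)\veps\|\nabla f(w)\|^2+O(\veps^{d/2+2})$ agree with the computation relegated to Appendix \ref{sec:appc}. The only cosmetic difference is that the paper bounds each tail one-sidedly and union-bounds to get the prefactor $2$ rather than your $4$.
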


\begin{remark}
If $\theta$ is initialized at $\mu_{-r}$ with $\|\frac{\d\mu_{-r}}{\d\mu}-1\|_{\bL^2(\mu)}<\infty$, then Lemma \ref{lem:burnin} may be applied after discarding a burn-in period of length $r$. This yields an extra factor of $1+\exp\lb(-\lambda_1 r\rb)\|\frac{\d\mu_{-r}}{\d\mu}-1\|_{\bL^2(\mu)}$ on the right hand side of the above inequality.
\end{remark}

\begin{proof}
We adapt the argument from \citet[Section 3]{singer_graph_2006} to samples $(\theta(t_i))_{i=0}^N$ from the Langevin diffusion \eqref{eq:thetaSDE}. In this case, $(\theta(t_i))_{i=0}^N$ forms a reversible Markov chain with spectral gap $1-\exp(-\lambda_1 \Delta t)$. We will show an upper bound for the right tail in the desired inequality. The left tail follows similarly and the statement of the theorem holds by a union bound. If we let
\begin{align*}
V(\theta(t_j))=\frac{\E_{\mu} [v]u(\theta(t_j))-\E_{\mu} [u]v(\theta(t_j))+\alpha \E_{\mu} [v](\E_{\mu} [v]-v(\theta(t_j)))}{\E_{\mu} [v]^2}.
\end{align*}
then we see that
\begin{align*}
\bP_{\theta_0\sim \mu}\lb(\frac{\sum_{j=0}^N u(\theta(t_j))}{\sum_{j=0}^Nv(\theta(t_j))}  -\frac{\E_{\mu} [u]}{\E_{\mu} [v]}>\alpha \rb)=\bP_{\theta_0\sim \mu}\lb(\sum_{j=0}^N V(\theta(t_j))>\alpha(N+1)\rb)
\end{align*}
Note that $\E_{\mu}[V]=0$ and $V$ is bounded in absolute value by some constant $C$ which depends on $\|u\|_\infty$. We seek to apply a concentration inequality which will capture the second moment of $V$ in the variance proxy. Since $(\theta(t_i))_{i=0}^N$ forms a reversible Markov chain with respect to $\mu$ and spectral gap $1-\exp(-\lambda_1 \Delta t)$ (where $\lambda_1$ is the spectral gap of $-\L$ given by the Poincaré inequality in Assumption \ref{ass3}), we can apply Bernstein's inequality for Markov chains \citep[Theorem 3.3]{paulin_concentration_2015}. This inequality requires bounding the second moment $\mathbb{E}_\mu[V^2]$, which serves as a variance proxy in the non-i.i.d. setting. The inequality  gives
\begin{align*}
\bP_{\theta_0\sim \mu}\lb(\lb|\frac{\sum_{j=0}^N u(\theta(t_j))}{\sum_{j=0}^Nv(\theta(t_j))} -\frac{\E_{\mu} [u]}{\E_{\mu} [v]}\rb|>\alpha \rb)\le 2\exp\lb(\frac{-(1-\exp(-\lambda_1 \Delta t))(N+1) \alpha^2}{4\Var_{\mu}(V)+10 C \alpha}\rb)
\end{align*}
A lengthy computation (which is relegated to Appendix \ref{sec:appc}) shows that
\begin{align*}
\Var_{\mu}(V)=\frac{2^{-d} \pi^{-d/2} \veps^{-d/2}\lb(\veps\|\nabla f(w)\|^2+O(\veps^2)\rb)}{p(w)}
\end{align*}
and the conclusion of the theorem follows. \qedhere
\end{proof}

\subsection{Approximating the Eigenfunctions by Independent Linear Equations}\label{sec:linearapprox}

Let $\varphi=(\varphi_1,\dots,\varphi_\ell)$ be a vector of $\L$-eigenfunctions with eigenvalues $0> -\lambda_1\ge \dots \ge -\lambda_\ell$ such that ${\L \varphi_i=-\lambda_i \varphi_i}$ for $i= 1,\dots,\ell$.
In Subsection \ref{sec:embedding}, we saw that
\begin{equation}\label{eq:varphiSDE}
\begin{aligned}
\d\varphi_i(\theta_t)&=-\nabla U(\theta_t)\cdot \nabla \varphi_i(\theta_t)\d t+\Delta\varphi_i(\theta_t) \d t+\sqrt{2}\nabla \varphi_i(\theta_t) \cdot \d W_t\\
&=\L \varphi_i (\theta_t)\d t+\sqrt{2}\|\nabla \varphi_i(\theta_t)\|_2 \d B^i_t\\
&=-\lambda_i \varphi_i(\theta_t)\d t+\sqrt{2}\|\nabla \varphi_i(\theta_t)\|_2 \d B^i_t,
\end{aligned}
\end{equation}
where $B^i$ is a Brownian motion,
and for $i\neq j$ we have
\begin{align*}
\d\langle\varphi_i(\theta_\cdot),\varphi_j(\theta_\cdot)\rangle _t&=2\nabla \varphi_i(\theta_t)\cdot\nabla\varphi_j(\theta_t) \d t.
\end{align*}
If $\theta_0$ is initialized according to the invariant measure $\mu$, then
\begin{equation}\label{eq:uncorr}
\begin{aligned}
\E_{\theta_0\sim\mu}\lb[\langle\varphi_i(\theta_\cdot),\varphi_j(\theta_\cdot)\rangle _t\rb]&=\E_{\theta_0\sim\mu}\lb[\int_0^t 2\nabla \varphi_i(\theta_t)\cdot\nabla\varphi_j(\theta_t) \d t\rb]\\
&=2t\int_{\R^d} \Gamma(\varphi_i(z),\varphi_j(z))\mu(\d z)\\
&=-2t\int_{\R^d} \varphi_i(z)\L \varphi_j(z)\mu(\d z)\\
&=2\lambda_jt\int_{\R^d} \varphi_i(z) \varphi_j(z)\mu(\d z)=0
\end{aligned}
\end{equation}
where the third equality holds by identity \eqref{eq:dirichlet}, the fourth uses the fact that $\varphi_j$ is an eigenfunction, and the final equality holds by the orthogonality of eigenfunctions.
If $\theta_0$ is started out of equilibrium, then by the argument of Lemma \ref{lem:burnin}, it holds that $\varphi_i(\theta_t)$ and $\varphi_j(\theta_t)$ are close to being uncorrelated after an initial burn-in period:
\begin{corollary}\label{coro:uncorr}
Suppose $\theta_0\sim \mu_0$ where $\|\frac{\d\mu_0}{\d\mu}-1\|_{\bL^2(\mu)}<\infty$. Then, for $i,j\in\{1,\dots,\ell\}$ with $i\neq j$ and $0\le s\le t$ we have that
\begin{align*}
&\lb|\E_{\theta_0\sim\mu_0}\lb[\langle\varphi_i(\theta_\cdot),\varphi_j(\theta_\cdot)\rangle _t-\langle\varphi_i(\theta_\cdot),\varphi_j(\theta_\cdot)\rangle _s \rb]\rb|\\
&\quad\le \lambda_1^{-1}\lb(\exp(-\lambda_1 s)-\exp(-\lambda_1 t)\rb)\lb\|\frac{\d\mu_0}{\d\mu}-1\rb\|_{\bL^2(\mu)}\lb\|\nabla \varphi_i\cdot\nabla\varphi_j\rb\|_{\bL^2(\mu)}.
\end{align*}
\end{corollary}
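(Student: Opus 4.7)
The plan is to rewrite the difference of quadratic covariations as a time integral, subtract the equilibrium contribution (which vanishes by \eqref{eq:uncorr}), and bound the residual using Cauchy--Schwarz together with the exponential $\bL^2$-decay of densities that follows from Assumption~\ref{ass3}.

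First I would use the identity
\begin{align*}
\langle\varphi_i(\theta_\cdot),\varphi_j(\theta_\cdot)\rangle_t - \langle\varphi_i(\theta_\cdot),\varphi_j(\theta_\cdot)\rangle_s = \int_s^t 2\nabla\varphi_i(\theta_r)\cdot\nabla\varphi_j(\theta_r)\,\d r,
\end{align*}
bring the $\mu_0$-expectation inside by Fubini, and denote by $\mu_r$ the law of $\theta_r$ and by $h_r = \d\mu_r/\d\mu$ its density. Decomposing $h_r = 1 + (h_r - 1)$, the contribution of the constant $1$ equals $\int 2\nabla\varphi_i\cdot\nabla\varphi_j\,\d\mu$, which vanishes by the exact computation carried out in \eqref{eq:uncorr} (integration by parts via \eqref{eq:dirichlet} followed by orthogonality of the distinct $\L$-eigenfunctions $\varphi_i$ and $\varphi_j$).

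Second, I would bound the remaining term by Cauchy--Schwarz in $\bL^2(\mu)$:
\begin{align*}
\lb|\int 2\nabla\varphi_i\cdot\nabla\varphi_j\,(h_r-1)\,\d\mu\rb|\le 2\|\nabla\varphi_i\cdot\nabla\varphi_j\|_{\bL^2(\mu)}\,\|h_r-1\|_{\bL^2(\mu)}.
\end{align*}
The Langevin diffusion \eqref{eq:thetaSDE} is reversible, so $P_r$ is self-adjoint on $\bL^2(\mu)$ and the density propagates as $h_r = P_r h_0$; hence $h_r - 1 = P_r(h_0 - 1)$, using $P_r 1 = 1$. Since $h_0 - 1$ integrates to zero against $\mu$, the mean-zero exponential contraction derived at the start of Subsection~\ref{sec:lifts} (the same bound invoked inside the proof of Lemma~\ref{lem:burnin}) gives $\|h_r - 1\|_{\bL^2(\mu)} \le \exp(-\lambda_1 r)\lb\|\frac{\d\mu_0}{\d\mu} - 1\rb\|_{\bL^2(\mu)}$. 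Integrating $\exp(-\lambda_1 r)$ over $r\in[s,t]$ yields the factor $\lambda_1^{-1}(\exp(-\lambda_1 s) - \exp(-\lambda_1 t))$ and the claimed inequality follows.

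I expect the only non-routine step to be the identification $h_r = P_r h_0$, which requires self-adjointness of $P_r$ in $\bL^2(\mu)$ (equivalently, reversibility of the Langevin dynamics). This is the point at which the exponential decay established for $P_r$ on the mean-zero subspace is transferred from the test function to the evolving density; everything else reduces to Fubini, Cauchy--Schwarz, and orthogonality of $\L$-eigenfunctions.
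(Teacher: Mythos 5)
Your proposal follows essentially the same route as the paper's proof: write the covariation increment as a time integral, decompose the evolving density as $1+(\d\mu_u/\d\mu-1)$, kill the equilibrium term via \eqref{eq:uncorr}, and apply H\"{o}lder/Cauchy--Schwarz together with the exponential $\bL^2$-contraction from Lemma \ref{lem:burnin} before integrating in time. The only addition is your explicit justification of $h_r=P_rh_0$ via reversibility, which the paper leaves implicit; note that both your argument and the paper's appear to carry a factor of $2$ from $\d\langle\varphi_i(\theta_\cdot),\varphi_j(\theta_\cdot)\rangle_t=2\nabla\varphi_i\cdot\nabla\varphi_j\,\d t$ that is not reflected in the stated constant.
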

\begin{proof}
For $u\ge 0 $, let $\mu_u$ denote the law of $\theta_u$. We have that
\begin{align*}
&\lb|\E_{\theta_0\sim\mu_0}\lb[\langle\varphi_i(\theta_\cdot),\varphi_j(\theta_\cdot)\rangle _t-\langle\varphi_i(\theta_\cdot),\varphi_j(\theta_\cdot)\rangle _s \rb]\rb|\\
&\quad = \lb|2\int_s^t\int_{\R^d}\nabla \varphi_i(w)\cdot\nabla\varphi_j(w)\lb(1+\frac{\d\mu_u}{\d\mu}(w)-1\rb)\mu(\d w)\d u\rb|\\
&\quad \le 2\int_s^t\lb|\int_{\R^d}\nabla \varphi_i(w)\cdot\nabla\varphi_j(w)\mu(\d w)\rb|\d u+2\int_s^t\lb|\int_{\R^d}\nabla \varphi_i(w)\cdot\nabla\varphi_j(w)\lb(\frac{\d\mu_u}{\d\mu}(w)-1\rb)\mu(\d w)\rb|\d u
\end{align*}
The first term is null by \eqref{eq:uncorr}.
As seen previously, $\|\frac{\d\mu_u}{\d\mu}-1\|_{\bL^2(\mu)}\le\exp\lb(-\lambda_1 u\rb)\|\frac{\d\mu_0}{\d\mu}-1\|_{\bL^2(\mu)}$. Applying this along with H\"{o}lder's Inequality to the second term above and integrating in time gives the desired inequality. \qedhere
\end{proof}

In \citet{shnitzer2020diffusion}, the gradients $\|\nabla \varphi_i(\theta_t)\|_2$, for $i=1,\dots,\ell$ are assumed to be constant, so that the SDE for $\varphi_i(\theta_t)$ is linear-Gaussian and a Kalman filtering framework may be applied. As another corollary of Lemma \ref{lem:burnin}, we show next that the approximation error between the contractive dynamics \eqref{eq:varphiSDE} and a linear SDE can remain small. For $i= 1,\dots,\ell$, let $\xi^1,\dots,\xi^\ell$ satisfy
\begin{align*}
\d \xi^i_t=-\lambda_i \xi^i_t \d t+\sqrt{2}\gamma_i\d B_t^i
\end{align*}
where $\gamma_1,\dots,\gamma_\ell>0$ are fixed constants, $B^1,\dots,B^\ell$ are the same Brownian motions which drive $\varphi^1(\theta_t),\dots,\varphi^\ell(\theta_t)$, and we fix a deterministic initial condition $(\xi^1_0,\dots,\xi^\ell_0)\in\R^\ell$.
\begin{corollary}
Suppose $\theta_0\sim \mu_0$ where $\|\frac{\d\mu_0}{\d\mu}-1\|_{\bL^2(\mu)}<\infty$.
For $i\in\{1,\dots,\ell\}$ the following holds:
\begin{equation}\label{eq:squarederror}
\begin{aligned}
&\E_{\theta_0\sim\mu_0}\lb[(\varphi_i(\theta_t)-\xi_t^i)^2\rb]\\
&\quad\le \exp(-2\lambda_i t)\lb(\E_{\theta_0\sim\mu_0}\lb[(\varphi_i(\theta_0)-\xi_0^i)^2\rb]+2t\E_{\mu}\lb[\lb( \|\nabla \varphi_i\|_2-\gamma_i\rb)^2\rb]+\alpha(t)\rb)
\end{aligned}
\end{equation}
where
\begin{align*}
\alpha(t)=\lambda_1^{-1}\lb(1-\exp(-\lambda_1 t)\rb)\lb\|\frac{\d\mu_0}{\d\mu}-1\rb\|_{\bL^2(\mu)}\E_{\mu}\lb[\lb( \|\nabla \varphi_i\|_2-\gamma_i\rb)^{4}\rb]^{1/2}.
\end{align*}
\end{corollary}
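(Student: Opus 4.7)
The plan is to apply It\^{o}'s formula to the squared error process, derive and solve a scalar ODE for its second moment, and then invoke Lemma \ref{lem:burnin} to convert time-dependent expectations of the gradient mismatch into equilibrium quantities plus an exponentially small burn-in correction.

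First, set $\eta_t^i := \varphi_i(\theta_t) - \xi_t^i$. Because $\xi^i$ and $\varphi_i(\theta_\cdot)$ are driven by the same Brownian motion $B^i$ and share the same linear drift coefficient $-\lambda_i$, subtracting the two SDEs yields
\begin{align*}
\d\eta_t^i = -\lambda_i \eta_t^i\, \d t + \sqrt{2}\lb(\|\nabla \varphi_i(\theta_t)\|_2 - \gamma_i\rb)\, \d B_t^i.
\end{align*}
Applying It\^{o}'s formula to $(\eta_t^i)^2$ and taking expectations causes the martingale term to vanish (the integrand is square-integrable since $\nabla \varphi_i \in \bL^2(\mu)$), yielding the scalar ODE
\begin{align*}
m'(t) = -2\lambda_i m(t) + 2\E\lb[(\|\nabla \varphi_i(\theta_t)\|_2 - \gamma_i)^2\rb], \qquad m(t) := \E_{\theta_0\sim\mu_0}\lb[(\eta_t^i)^2\rb].
\end{align*}

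Next, I would solve this linear ODE using the integrating factor $e^{2\lambda_i t}$, giving the representation
\begin{align*}
m(t) = e^{-2\lambda_i t} m(0) + 2\int_0^t e^{-2\lambda_i(t-s)} \E\lb[(\|\nabla \varphi_i(\theta_s)\|_2 - \gamma_i)^2\rb]\,\d s.
\end{align*}
I would then apply Lemma \ref{lem:burnin} with $f = (\|\nabla \varphi_i\|_2 - \gamma_i)^2 \in \bL^2(\mu)$ --- whose $\bL^2(\mu)$-norm equals $\E_\mu[(\|\nabla \varphi_i\|_2 - \gamma_i)^4]^{1/2}$, precisely the quantity appearing in $\alpha(t)$ --- to replace the time-dependent expectation with its stationary counterpart plus an exponentially decaying correction:
\begin{align*}
\E\lb[(\|\nabla \varphi_i(\theta_s)\|_2 - \gamma_i)^2\rb] \le \E_\mu\lb[(\|\nabla \varphi_i\|_2 - \gamma_i)^2\rb] + e^{-\lambda_1 s}\lb\|\tfrac{\d\mu_0}{\d\mu} - 1\rb\|_{\bL^2(\mu)} \E_\mu\lb[(\|\nabla \varphi_i\|_2 - \gamma_i)^4\rb]^{1/2}.
\end{align*}
Plugging this into the integral, the stationary piece contributes at most $2t\E_\mu[(\|\nabla \varphi_i\|_2 - \gamma_i)^2]$ (bounding $e^{-2\lambda_i(t-s)} \le 1$ to extract a factor of $t$), while the burn-in piece integrates to $\alpha(t)$ via $\int_0^t e^{-\lambda_1 s}\,\d s = \lambda_1^{-1}(1-e^{-\lambda_1 t})$.

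The main obstacle is the bookkeeping of exponential prefactors: the Gronwall representation naturally places $e^{-2\lambda_i(t-s)}$ \emph{inside} the integral, while the claim \eqref{eq:squarederror} displays a single global $e^{-2\lambda_i t}$ prefactor. Reconciling these requires judiciously choosing when to bound $e^{-2\lambda_i(t-s)}$ by $1$ and when to factor it as $e^{-2\lambda_i t}e^{2\lambda_i s}$ (combined with $e^{2\lambda_i s} \le e^{2\lambda_i t}$ and $\lambda_i \ge \lambda_1$), so that the contraction is routed through each of the three resulting terms --- the decayed initial error, the equilibrium gradient mismatch, and the burn-in correction --- yielding \eqref{eq:squarederror}.
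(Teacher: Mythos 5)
Your route is essentially the paper's: form the difference SDE for $\eta^i_t=\varphi_i(\theta_t)-\xi^i_t$, apply It\^{o} to $(\eta^i_t)^2$, take expectations so the stochastic integral drops out, control $\E[(\|\nabla\varphi_i(\theta_u)\|_2-\gamma_i)^2]$ via Lemma \ref{lem:burnin} applied to $f=(\|\nabla\varphi_i\|_2-\gamma_i)^2$, and resolve the resulting linear integral equation (you use variation of constants where the paper invokes the integral form of Gr\"{o}nwall's inequality; these are the same computation). One caveat: your parenthetical claim that the martingale term vanishes ``since $\nabla\varphi_i\in\bL^2(\mu)$'' is too quick --- the local martingale $\int_0^t\eta^i_u(\|\nabla\varphi_i(\theta_u)\|_2-\gamma_i)\,\d B^i_u$ being a true martingale is an additional integrability condition that the paper explicitly states as an assumption to be checked case by case (e.g.\ for Hermite polynomials in Example \ref{ex:example}), not a consequence of square-integrability of $\nabla\varphi_i$ at a fixed time under $\mu$, especially as $\theta_0\sim\mu_0\neq\mu$.

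The genuine gap is the final step you flag yourself, and the ``judicious splitting'' you propose cannot close it. From
\begin{align*}
m(t)=e^{-2\lambda_i t}m(0)+2\int_0^t e^{-2\lambda_i(t-s)}\,\E\lb[(\|\nabla\varphi_i(\theta_s)\|_2-\gamma_i)^2\rb]\d s,
\end{align*}
bounding the kernel by $1$ yields $2tG+2\alpha(t)$ with \emph{no} decay factor (where $G=\E_\mu[(\|\nabla\varphi_i\|_2-\gamma_i)^2]$; note also the factor $2$ in front of $\alpha(t)$, which the displayed statement omits), while writing $e^{-2\lambda_i(t-s)}=e^{-2\lambda_i t}e^{2\lambda_i s}$ and then bounding $e^{2\lambda_i s}\le e^{2\lambda_i t}$ cancels the prefactor entirely; there is no intermediate choice that routes a net contraction through the inhomogeneous terms. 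Indeed no bound of the displayed form can hold when $G>0$: the Duhamel term converges to $G/\lambda_i>0$ as $t\to\infty$, whereas the right-hand side of \eqref{eq:squarederror} tends to $0$. What your computation actually establishes is $m(t)\le e^{-2\lambda_i t}m(0)+2tG+2\alpha(t)$, with the exponential contraction applying only to the initial error. The paper's own proof stops at the same integral identity and then asserts that ``the integral form of Gr\"{o}nwall's inequality'' produces the global prefactor --- but Gr\"{o}nwall with a negative coefficient does not place a decay factor on the forcing terms --- so the step you could not complete is not an artifact of your route; it is the step at which the stated inequality itself fails.
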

\begin{remark}
Setting $\gamma_i=\E_\mu[\|\nabla \varphi_i\|_2]$ minimizes $\E_{\mu}\lb[\lb( \|\nabla \varphi_i\|_2-\gamma_i\rb)^2\rb]$ yielding $\Var_{\mu}\lb( \|\nabla \varphi_i\|_2\rb)$.
\end{remark}

\begin{proof}
By It\^{o}'s formula, the squared error at time $t\ge 0$ satisfies
\begin{align*}
(\varphi_i(\theta_t)-\xi^i_{t})^2&=(\varphi_i(\theta_0)-\xi^i_{0})^2+2\int_0^{t} (\varphi_i(\theta_{u})-\xi^i_{u})(\d \varphi_i(\theta_{u})-\d\xi^i_{u}))+\int_0^t\d\lb\langle \varphi_i(\theta_{\cdot})-\xi^i_\cdot\rb\rangle_u\\
&= (\varphi_i(\theta_0)-\xi^i_{0})^2 -2\lambda_1 \int_0^t (\varphi_i(\theta_u)-\xi^i_{u})^2\d u+M_t+2\int_0^t\lb( \|\nabla \varphi_i(\theta_{u})\|_2-\gamma_i\rb)^2\d u,
\end{align*}
where $M_t=2\int_0^{t} (\varphi_i(\theta_{u})-\xi^i_{u})(\|\nabla\varphi_i(\theta_{u})\|_2-\gamma_i)\d B_u^i$ is a local martingale term, which we assume to be a true martingale (this can be verified depending on the dynamics of $\theta$, for instance, it holds in the case of Example \ref{ex:example} where $\theta$ is an OU process and $\varphi_i$ is a Hermite polynomial). Taking expectations and applying Fubini's Theorem yields
\begin{align}\label{eq:squaredprelim}
\E\lb[(\varphi_i(\theta_t)-\xi^i_{t})^2\rb]
= (\varphi_i(\theta_0)-\xi^i_{0})^2 -2\lambda_i \int_0^t \E\lb[(\varphi_i(\theta_u)-\xi^i_{u})^2\rb]\d u+2\int_0^t\E\lb[\lb( \|\nabla \varphi_i(\theta_{u})\|_2-\gamma_i\rb)^2\rb]\d u.
\end{align}
For $u\ge0$, let $\mu_u$ denote the law of $\theta_u$. By Lemma \ref{lem:burnin}, we have
\begin{align*}
&\E_{\theta_0\sim\mu_0}\lb[\lb( \|\nabla \varphi_i(\theta_{u})\|_2-\gamma_i\rb)^2\rb]\\
&\quad\le \E_{\mu}\lb[\lb( \|\nabla \varphi_i\|_2-\gamma_i\rb)^2\rb] +\exp(-\lambda_1 u)\lb\|\frac{\d\mu_0}{\d\mu}-1\rb\|_{\bL^2(\mu)}\E_{\mu}\lb[\lb( \|\nabla \varphi_i\|_2-\gamma_i\rb)^{4}\rb]^{1/2}
\end{align*}
The final result follows from using the above inequality in \eqref{eq:squaredprelim} and applying the integral form of Gr\"{o}nwall's Inequality. \qedhere
\end{proof}

\section{Joint Diffusion Kalman Filter}
\label{sec:jdkf_model}

In Section~\ref{sec:results}, we have developed the main results to justify a state space model in terms of the diffusion coordinate embeddings: 1) the generator eigenfunction dynamics can be approximated by a linear SDE with drift equal to the corresponding eigenvalue (Subsection \ref{sec:linearapprox}); 2) the diffusion coordinates $\psi$ approximate the eigenfunctions of the generator associated with the latent state $\theta$ (Subsection \ref{sec:concentration}); 3) the lifting operator learned from time series data converges to the true $\mathbb{L}^2(\mu)$ projection of the observation map onto the generator eigenfunctions (Subsection \ref{sec:lifts}). In this section, we describe our linear state space approximation for the dynamics of the diffusion coordinates, leveraging these theoretical results developed in Section \ref{sec:results}.

Assuming that the gradients in \eqref{eq:phiSDE} are constant, \citet{shnitzer2020diffusion}
define a state-space model, dubbed the Diffusion Kalman Filter (DKF), for the case where the data arises from a single data set $(x(t_i))_{i=0}^N$ (see Appendix \ref{sec:appa}).
We now modify DKF for a supervised learning framework to incorporate information in the series of the dependent variables $y(t_i)$, which depend jointly with $x(t_i)$ on $\theta(t_i)$. If one were to use the estimated diffusion coordinates from DKF as developed in \citet{shnitzer2020diffusion}, a simple OLS regression of the dependent variables on these coordinates will show that most of the explanatory power of the original covariates $x(t_i)$ for the dependent variables is lost (see, for instance, Appendix \ref{sec:appb} for the regression of individual stock returns on a high-dimensional set of macroeconomic covariates). Thus, it is important to extend DKF to also account for the variation in the dependent variable in order for the diffusion coordinates to preserve explanatory power. We call this approach the Joint Diffusion Kalman Filter (JDKF).

Our framework relies on constructing dynamic diffusion coordinate embeddings of the observations that serve as proxies for the dynamics of $\theta$, combining state space modeling with diffusion maps. Commonly, dimensionality reduction methods, when applied to time series data, do not explicitly consider the temporal component in the data. This can be limiting in many applications, for instance, when dealing with high-dimensional financial market data where a portfolio manager (PM) might want to study how shocks to a given market factor propagate through time. By integrating Kalman filtering techniques with diffusion maps and extending the framework first introduced in \citet{shnitzer2020diffusion}, we construct dynamic diffusion coordinate embeddings which explicitly factor in the temporal component of the data. We construct our state space equations by leveraging a linear lifting operator that takes diffusion map embeddings back into the original measurement space.

Since we approximate the intrinsic dynamics with a linear SDE in terms of the diffusion coordinate embeddings (which we rigorously justify in Section \ref{sec:linearapprox}), our framework does not require us to discover the explicit joint dynamics of the observations $(x(t), y(t))$. Leveraging linear state-space equations, we devise a procedure to sample dynamic embeddings while conditioning on an information set defined in the original measurement space. 
It is well known that dimensionality reduction techniques give rise to the challenge of interpretability of the embeddings. In a framework such as ours that takes place completely in the embedding space, it is hence desirable to define fully interpretable conditioning events in the original measurement space. To define such a sampling procedure, we leverage the power of the linear lifting operator in a three step process: (1) lift statistics from the embedding space to the original measurement space, (2) condition on the desired information set in the  covariate space and obtain the conditional distribution in terms of $(x(t), y(t))$, (3) restrict back to the embedding space with a pseudoinverse of the lifting operator. Our sampling procedure simultaneously learns the intrinsic geometry and dynamics from the data, and then \textit{generates} sample paths for the diffusion coordinate embeddings and by extension of $(x(t), y(t))$. Of course, it is known that dimensionality reduction techniques do not necessarily preserve statistics in the data, hence approximating statistics in the measurement space with their lifted counterparts incurs some error.
Our robustness results presented in Section \ref{sec:results} provide theoretical support for the applicability of our method.


To formalize the Joint Diffusion Kalman Filter described above, we now specialize to the supervised learning framework and adopt the dynamical system setting from Subsection~\ref{sec:assumps}:

\begin{equation*}
\begin{aligned}
d \theta_t&= -\nabla U(\theta_t)+\sqrt{2}dW_t\in \R^d,\\
X_t&=F(\theta_t)\in \R^m,\text{ and }Y_t=G(\theta_t)\in \R^n.
\end{aligned}
\end{equation*}
Following the embedding procedure in Subsection \ref{sec:embedding}, the diffusion coordinates $\psi(t_i)$ are computed jointly on samples of $(X,Y)$, denoted $(x(t_i),y(t_i))_{i=0}^N$, and $x$ and $y$ can be similarly reconstructed by the linear lifts:
\begin{align*}
x(t_i)&\approx \mathbf{H}^x \psi(t_i)\\
y(t_i)&\approx \mathbf{H}^y \psi(t_i)
\end{align*}
where $\mathbf{H}_{i,j}^x=\langle x_i, \psi_j \rangle_{\text{data}}$ and $\mathbf{H}_{i,j}^y=\langle y_i, \psi_j \rangle_{\text{data}}$. Let $\Lambda$ denote the transition matrix for the linear dynamics containing the mapped eigenvalues ${-\veps^{-1}\log \kappa_1,\dots -\veps^{-1}\log \kappa_\ell}$ on its diagonal.

Going forward, we denote variables in our linear state space using lower case letters and subscripts taking values in $\Z_{\ge0}$ (e.g.~$x_t$ and $\psi_t$). As with DKF, we consider a state transition equation where $\psi_t$ evolves over time according to the linear dynamics
\begin{align}\label{eq:psistep}
\psi_{t+1} = \mathbf{A} \psi_t + w_t, \quad w_t \sim \mathcal{N}(0, \mathbf{Q})
\end{align}
where $\psi_t \in \mathbb{R}^{\ell}$ is the latent state vector at time $t$; $\mathbf{A}=I-\Lambda \in \mathbb{R}^{\ell \times \ell}$ is the state transition matrix; and $\mathbf{Q} \in \mathbb{R}^{\ell \times \ell}$ is the process noise covariance.
We observe two variables, $x_t$ (the discrete time covariate samples) and $y_t$ (the discrete time response samples) and postulate the following two measurement state-space equations:
\begin{equation}\label{eq:xystep}
\begin{split}
 x_t = \mathbf{H}^x \mathbf{\psi}_t +{v}_{x,t}\\
 y_t = \mathbf{H}^y \mathbf{\psi}_t +{v}_{y,t}
\end{split}
\end{equation}
where $v_{x,t}\sim \mathcal{N}(0, {\mathbf{R}_x})$ and $v_{y,t}\sim \mathcal{N}(0, {\mathbf R}_y)$ with cross-covariance ${\mathbf R}_{xy}$ model noise due to reconstruction error.
We estimate $\mathbf{R}_x$ and $\mathbf{R}_y$ and construct estimates $\hat{\psi}_{t+1} = \mathbb{E} [\psi_{t+1} | \left\{x_i,y_i \right\}_{i=0}^{t}]$ using the EM algorithm described below: $\mathbf{A}=I-\Lambda$ is computed from applying diffusion maps jointly on $(x(t_i),y(t_i))_{i=0}^N$, and $\mathbf{Q}$ is computed as the empirical covariance matrix of the resulting diffusion coordinates. 

An important special case is the linear multi-factor model, with $G=\mathbf{B} \circ F$ where $\mathbf{B}$ is a matrix of regression coefficients (also known as factor loadings). In this case, we compute $\Lambda$ and $\mathbf{H}^x$ using $(x(t_i))_{i=0}^N$ and postulate:
\begin{align*}
Y_t&=G(\theta_t)=\mathbf{B} \circ F(\theta_t)=\mathbf{B}X_t,\\
y_t&=\mathbf{B}\mathbf{H}^x \psi_t+v_{y,t}.
\end{align*}
Note that $v_{y,t}$ here is a noise \textit{correlated} with $v_{x,t}$ through $\mathbf{B}$. Concretely, for a \textit{linear factor model} of the form $y_t=\mathbf{B}x_t+\veps_{y,t}$, where $\veps_{y,t}$ represents Gaussian idiosyncratic noise, we can cast the state space equation for $y_t$ in terms of $\psi_t$ as follows:
\[y_t=\mathbf{B}x_t+\veps_{y,t}
=\mathbf{B}(\mathbf{H}^x\psi_t+v_{x,t})+\veps_{y,t}
={\mathbf B} {\mathbf H}^x \psi_t + (\underbrace{\mathbf{B}v_{x,t}+\veps_{y,t}}_{v_{y,t}})\]
In this linear case, we additionally aim to estimate $\mathbf{B}$ and we have the following state-space representation in terms of $\psi_t$:

 \begin{tcolorbox}[colback=blue!5!white,colframe=blue!75!black,title=Special Case: Linear Multi-Factor Model]
 State-Space Equations:
\begin{equation*}
\begin{aligned}
x_t&={\bf H}^x \psi_t + v_{x,t}\\
y_t &= {\mathbf B} {\mathbf H}^x \psi_t + v_{y,t}\\
\psi_{t+1} &= \mathbf{A} \psi_t + w_t
\end{aligned}
\end{equation*}
$\mathbf{A}=I-\Lambda$ is computed using anisotropic diffusion maps on covariates $(x(t_i))_{i=0}^N$. We estimate $\mathbf{B}, \mathbf{R}$, and estimate $\hat{\psi}_{t+1} = \mathbb{E} [\psi_{t+1} | \left\{x_1,\ldots,x_{t} \right\}]$ using the EM algorithm. 
\end{tcolorbox}
 
We now aim to estimate the unknown parameters in the state-space using the standard EM algorithm. The procedure involves alternating between:
\begin{enumerate}
    \item E-step: Run the Kalman filter to estimate the latent state $\psi_t$.
    \item M-step: Update the parameters $\mathbf{R}_x$, $\mathbf{R}_y$, and the cross-covariance $\mathbf{R}_{xy}$ (and $\mathbf{B}$ in the linear case) based on the current estimates of $\psi_t$. 
\end{enumerate}

\subsection{E-Step: Kalman Filter with Joint Observations}

The E-step estimates the latent diffusion coordinates $\psi_t$ and their covariance $P_{t|t}$ using a Kalman filter that processes joint observations from both covariates and responses. We define the joint observation vector:
\[z_t = \begin{bmatrix} x_t \\ y_t \end{bmatrix} = \begin{bmatrix} \mathbf{H}^x \\ \mathbf{H}^y \end{bmatrix} \psi_t + \begin{bmatrix} v_{x,t} \\ v_{y,t} \end{bmatrix} = \mathbf{H}^z \psi_t + v_t\]
where $\mathbf{H}^z = [\mathbf{H}^x; \mathbf{H}^y]$ and the joint noise covariance is:
\[\mathbf{R} = \begin{bmatrix} \mathbf{R}_x & \mathbf{R}_{xy} \\ \mathbf{R}^\top_{xy} & \mathbf{R}_y \end{bmatrix}.\]

The crucial observation is that by filtering on the stacked observation vector $z_t$, the Kalman gain optimally weights information from both covariates and responses when updating the latent state estimate. This contrasts with DKF \citep{shnitzer2020diffusion}, which processes only a single observation stream and thus may lose explanatory power for the responses.

Running the forward Kalman filter yields filtered estimates $\hat{\psi}_{t|t}$ and covariances $P_{t|t}$ for $t = 1, \ldots, T$. We then apply the Rauch--Tung--Striebel (RTS) smoother \citet{rauch_tung_striebel} in a backward pass to obtain smoothed estimates $\hat{\psi}_{t|T}$ and $P_{t|T}$ that incorporate information from the entire observation sequence. The standard Kalman filtering and RTS smoothing equations are provided in Appendix~\ref{app:kalman_em_details} for completeness.

In the \textit{linear multi-factor case} where $\mathbf{H}^y = \mathbf{B}\mathbf{H}^x$, this formulation naturally accommodates the correlation structure induced by the factor loadings $\mathbf{B}$.

\subsection{M-Step: Parameter Updates}

The M-step updates the observation noise covariances based on the smoothed state estimates from the E-step. Given $\hat{\psi}_{t|T}$ for $t = 1, \ldots, T$, we update:

\begin{itemize}
\item $\mathbf{R}_x$: covariance of residuals $x_t - \mathbf{H}^x \hat{\psi}_{t|T}$
\item $\mathbf{R}_y$: covariance of residuals $y_t - \mathbf{H}^y \hat{\psi}_{t|T}$  
\item $\mathbf{R}_{xy}$: cross-covariance of these residuals
\end{itemize}

In the linear multi-factor case, we additionally estimate the factor loading matrix $\mathbf{B}$ via least squares regression on the lifted coordinates:
\[\mathbf{B} = \left( \sum_{t=1}^{T} y_t (\mathbf{H}^x \hat{\psi}_{t|T})^\top \right) \left( \sum_{t=1}^T \mathbf{H}^x \hat{\psi}_{t|T} (\mathbf{H}^x \hat{\psi}_{t|T})^\top \right)^{-1}.\]

This update is consistent with the observation equation $y_t = \mathbf{B}\mathbf{H}^x\psi_t + v_{y,t}$ and ensures that the estimated loadings reflect the relationship between responses and the lifted covariate embeddings.

The complete parameter update formulas are standard and provided in Appendix~\ref{app:kalman_em_details}. The key distinction from classical EM for state-space models is that our observation matrices $\mathbf{H}^x, \mathbf{H}^y$ are \textit{learned from data} via diffusion maps rather than specified a priori, and the transition matrix $\mathbf{A} = I - \Lambda$ is determined by the diffusion map eigenvalues rather than estimated from the data.

\subsection{EM Algorithm}

\begin{algorithm}[H]
\caption{EM Algorithm Iteration}
\begin{algorithmic}[1]

\State \textbf{Initialize} the parameters $\mathbf{R}_x$, $\mathbf{R}_y$, and $\mathbf{R}_{xy}$ (and $\mathbf{B}$ in the linear case).
\State \textbf{Set} tolerance $\veps$ and maximum iterations $N_{\text{max}}$.
\State \textbf{Initialize} log-likelihood $\mathcal{L}^{(0)} \leftarrow -\infty$, iteration counter $k \leftarrow 0$.

\While{not converged and $k < N_{\text{max}}$}
    \State \textbf{E-step}: Run Joint Diffusion Kalman filter using current estimates of $\mathbf{R}_x$, $\mathbf{R}_y$ and $\mathbf{R}_{xy}$.
    \State \quad \textbf{Output}: Estimate $\psi_t$ and compute the log-likelihood $\mathcal{L}^{(k+1)}$.
    
    \State \textbf{M-step}: Update $\mathbf{R}_x$, $\mathbf{R}_y$, and $\mathbf{R}_{xy}$ using the smoothed estimates of $\psi_t$.
    
    \State \textbf{Convergence check}: 
    \If{ $\frac{|\mathcal{L}^{(k+1)} - \mathcal{L}^{(k)}|}{|\mathcal{L}^{(k)}|} < \veps$}
        \State \textbf{Converged:} Stop the algorithm.
    \Else
        \State $k \leftarrow k + 1$
    \EndIf
\EndWhile

\end{algorithmic}
\label{alg:em}
\end{algorithm}
 We have added an extra layer with the measurement equation for the dependent variables to maintain the explanatory power in the embeddings and taken this into account via the modified state-space. JDKF thus gives us diffusion coordinates that propagate linearly and preserve much of the original explanatory power from the full high-dimensional set of covariates. Algorithm \ref{alg:em} summarizes the EM procedure. Figure \ref{fig:model_flow} displays a broad flow map of the JDKF procedure. In the next subsection, we describe the conditional sampling procedure. 

 \begin{figure}
  \centering
  {\includegraphics[width=0.8\linewidth]{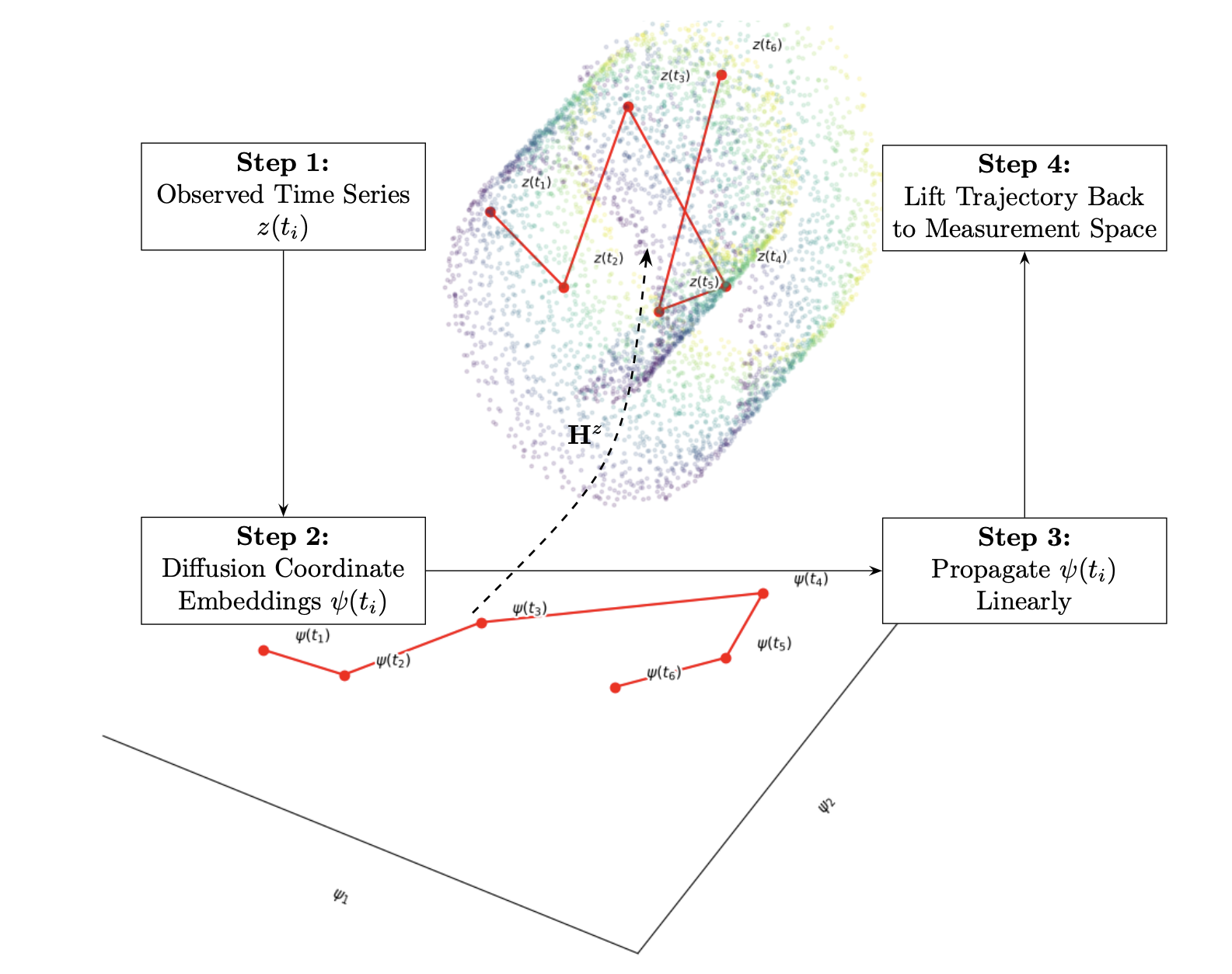}}
  \hfill
  \caption{A flow map of the Joint Diffusion Kalman Filter (JDKF) procedure. Starting from observations $z(t_i)$, we apply anisotropic diffusion maps to obtain embeddings $\psi(t_i)$ that capture the manifold geometry. We propagate a linear approximation of the diffusion coordinates forward in time. We predict coordinates using a Kalman filter and lift back to the measurement space using the linear operator $\mathbf{H}$.}
  \label{fig:model_flow}
\end{figure}

\subsection{Sampling $\psi_{t+1}$ Conditional on Information from $(x_{t+1}, y_{t+1})$}
\label{sec:sample}
When implementing the Joint Diffusion Kalman Filter, we seek to sample diffusion coordinates conditional on information in the original measurement spaces of $x_{t+1}$ and $y_{t+1}$, where this information is typically that a subset of coordinates of $x_{t+1}$ (and/or $y_{t+1}$) are fixed to some pre-determined values. To do this, we can leverage the power of the linear lifting operator to go back and forth between the measurement and embedding spaces.
Given the setup in equations \eqref{eq:psistep} and \eqref{eq:xystep}, we desire to sample $\hat{\psi}_{t+1}$ given $\hat{\psi}_t$ while conditioning on a subset of coordinates of the reconstructed $(x_{t+1}, y_{t+1})$, or equivalently $(\mathbf{H}^x \hat{\psi}_{t+1}, \mathbf{H}^y \hat{\psi}_{t+1})$ being fixed to specific values. Let $\psi_t$ be the latent state at time $t$. Recall the state transition model is given by:
\[\psi_{t+1} = \mathbf{A} \psi_t + w_t\]
where \(\mathbf{A}\) is the state transition matrix and $w(t) \sim \mathcal{N}(0, \mathbf{Q})$ is the process noise with covariance $\mathbf{Q}$. The filtered observation estimates are given from the observation equation by:
\begin{align*}
\hat{x}_{t+1} &= \mathbf{H}^x \hat{\psi}_{t+1}\\
\hat{y}_{t+1} &= \mathbf{H}^y \hat{\psi}_{t+1}
\end{align*}
where $\mathbf{H}^x$ and $\mathbf{H}^y$ are the lifting operators that map the latent state to the observation spaces.

Now, given the current state $\psi_t$, the predicted mean and covariance of the next state $\psi_{t+1}$ are:
\begin{align*}
\mu_{\psi_{t+1}} &= \mathbf{A} \psi_t\\
\Sigma_{\psi_{t+1}} &= \mathbf{Q}
\end{align*}
The predicted mean and covariance of the reconstructed observation $(x_{t+1}, y_{t+1}) = (\mathbf{H}^x \psi_{t+1}, \mathbf{H}^y \psi_{t+1})$ are:
\begin{align*}\mu_{(x_{t+1}, y_{t+1})} &= (\mathbf{H}^x \mu_{\psi_{t+1}}, \mathbf{H}^y \mu_{\psi_{t+1}})= (\mathbf{H}^x \mathbf{A} \psi_t, \mathbf{H}^y \mathbf{A} \psi_t)\\
\Sigma_{(x_{t+1}, y_{t+1})} &= \begin{pmatrix} \mathbf{H}^x \Sigma_{\psi_{t+1}} {\mathbf{H}^x}^T & \mathbf{H}^y \Sigma_{\psi_{t+1}} {\mathbf{H}^y}^T \end{pmatrix} = \begin{pmatrix} \mathbf{H}^x \mathbf{Q} {\mathbf{H}^x}^T & \mathbf{H}^y \mathbf{Q} {\mathbf{H}^y}^T \end{pmatrix}
\end{align*}
\newline
Now, to condition on the subset of coordinates, let $(x_{t+1}, y_{t+1})$ be partitioned into two parts: $(x_{t+1}, y_{t+1})^{(1)}$ and $(x_{t+1}, y_{t+1})^{(2)}$, where $(x_{t+1}, y_{t+1})^{(1)}$ are the coordinates we condition on, and $(x_{t+1}, y_{t+1})^{(2)}$ are the coordinates we want to sample. Suppose $(x_{t+1}, y_{t+1})^{(1)}$ is fixed to some vector $\mathbf{c}$. We can express $(x_{t+1}, y_{t+1})$ as being distributed as:
\[
   \begin{pmatrix}
   (x_{t+1}, y_{t+1})^{(1)} \\
   (x_{t+1}, y_{t+1})^{(2)}
   \end{pmatrix}
   \sim \mathcal{N}\left(
   \begin{pmatrix}
   \mu_{(x_{t+1}, y_{t+1})}^{(1)} \\
   \mu_{(x_{t+1}, y_{t+1})}^{(2)}
   \end{pmatrix},
   \begin{pmatrix}
   \Sigma_{(x_{t+1}, y_{t+1})}^{(11)} & \Sigma_{(x_{t+1}, y_{t+1})}^{(12)} \\
   \Sigma_{(x_{t+1}, y_{t+1})}^{(21)} & \Sigma_{(x_{t+1}, y_{t+1})}^{(22)}
   \end{pmatrix}
   \right)
   \]

The conditional distribution of $(x_{t+1}, y_{t+1})^{(2)}$ given $(x_{t+1}, y_{t+1})^{(1)} = \mathbf{c}$ is then:
\begin{align*}(x_{t+1}, y_{t+1})^{(2)} \mid (x_{t+1}, y_{t+1})^{(1)} = \mathbf{c} \sim \mathcal{N}\left(\mu_{\mathrm{cond}}, \Sigma_{\mathrm{cond}}\right)
\end{align*}
   where
\begin{align*}
   \mu_{\mathrm{cond}} &= \mu_{(x_{t+1}, y_{t+1})}^{(2)} + \Sigma_{(x_{t+1}, y_{t+1})}^{(21)} (\Sigma_{(x_{t+1}, y_{t+1})}^{(11)})^{-1} (\mathbf{c} - \mu_{(x_{t+1}, y_{t+1})}^{(1)})\\
   \Sigma_{\mathrm{cond}} &= \Sigma_{(x_{t+1}, y_{t+1})}^{(22)} - \Sigma_{(x_{t+1}, y_{t+1})}^{(21)} (\Sigma_{(x_{t+1}, y_{t+1})}^{(11)})^{-1} \Sigma_{(x_{t+1}, y_{t+1})}^{(12)}
\end{align*}

Next, to sample $\psi_{t+1}$ conditioned on the fixed values for a subset of coordinates in the observation spaces, we need to convert the conditional mean and covariance from the observation spaces back to the latent space. Let $\mathbf{H}^x_{2}$ and $\mathbf{H}^y_{2}$ be the rows of $\mathbf{H}^x$ and $\mathbf{H}^y$ corresponding to the free indices (the coordinates we want to sample), and let $\mathbf{H}^x_{1}$ and $\mathbf{H}^y_{1}$  be the rows of $\mathbf{H}^x$ and $\mathbf{H}^y$ corresponding to the fixed indices. We can solve the linear system to find the corresponding conditional mean and covariance in the latent space:
\begin{align*}\mu_{\mathrm{cond, diff}}& = {(\mathbf{H}^x_{2}}^{+} \mu_{\mathrm{cond}}, {\mathbf{H}^y_{2}}^{+} \mu_{\mathrm{cond}})\\\Sigma_{\mathrm{cond, diff}} &= \begin{pmatrix} {\mathbf{H}^x_{2}}^{+} \Sigma_{\mathrm{cond}} ({\mathbf{H}^x_{2}}^{+})^T & {\mathbf{H}^y_{2}}^{+} \Sigma_{\mathrm{cond}} ({\mathbf{H}^y_{2}}^{+})^T\end{pmatrix}
\end{align*}
where ${\mathbf{H}^x_{2}}^{+}$ and ${\mathbf{H}^y_{2}}^{+}$ are the Moore-Penrose pseudo-inverses of $\mathbf{H}^x_{2}$ and $\mathbf{H}^y_{2}$.
Finally, we sample from the conditional multivariate normal distribution to get $\psi_{t+1}$ given the conditional information in the original space:
\[\hat\psi_{t+1} \sim \mathcal{N}(\mu_{\mathrm{cond, diff}}, \Sigma_{\mathrm{cond, diff}})\]

\section{Application to Financial Stress Testing}
\label{sec:validation}

We now turn to an application of our framework to financial stress testing and scenario analysis of equity portfolios. As in \citet{chen1986economic}, we consider a standard macroeconomic multi-factor model where stock returns respond to unexpected changes in economy-wide factors. We stress our portfolios using the supervisory scenarios specified in \citet{federal_reserve_2022}. It is important to note that even unconditionally uncorrelated factors can display moderate dependence, for instance, factors can be uncorrelated and yet exhibit significant conditional dependence when a subset of them is learned. As a result the distribution of these factors conditional on an extreme scenario can display strong dependence.

Rather than selecting a handful of macro variables to explain stock returns, we incorporate \textbf{all} covariates in our stress-testing framework and instead allow the diffusion mapping to uncover the lower-dimensional intrinsic state $\theta$ that is driving the joint dynamics while jointly preserving most of the explanatory power of the higher-dimensisonal set. This allows for a more realistic scenario analysis where a Portfolio Manager (PM) can incorporate his beliefs about the state of the world rather than limiting defining a scenario in terms of three or four variables.\footnote{Note that the goal of this paper is not to come up with the best factors (in terms of explanatory power) to explain stock returns or the best factor model, but to illustrate the power of our data-driven framework to stress test portfolios where the risk factors are a function of a high-dimensional vector of characteristics.}

The standard multi-factor model assumes that the (excess) return of an asset $j$ at time $t$, denoted as $Y_{j,t}$, can be expressed as a linear function of a set of $p$ common factors $X_{1,t}, X_{2,t}, \dots, X_{p,t}$ and a firm-specific error term $\veps_{j,t}$:

\[Y_{j,t} = \beta_{1,j} X_{1,t} + \beta_{2,j} X_{2,t} + \cdots + \beta_{p,j} X_{p,t} + \veps_{j,t}\]
\newline
where $Y_{j,t}$ is the return (excluding dividends) of asset $j$ at time $t$; $X_{i,t}$ represents the value of the $i$-th factor at time $t$, where $i = 1, \dots, p$ for $p$ factors; $\beta_{i,j}$ represents the (time-independent) loading or sensitivity of asset $j$'s returns to factor $i$; $\veps_{j,t}$ is the idiosyncratic (or specific) error term for asset $j$ at time $t$, which is assumed to have a mean of zero and be uncorrelated across assets and time. We use this as our observation equation in the JDKF framework, which can be written in vector-matrix form as:
\[
\mathbf{Y}_t = \mathbf{B} \mathbf{X}_t + \mathbf{\veps}_t
\]
where \( \mathbf{Y}_t \) is the \( n \times 1 \) vector of excess returns at time \( t \) (where $n$ is the number of assets), \( \mathbf{X}_t\) is the \( p \times 1 \) vector of factor realizations at time \( t \), \( \mathbf{B} = (\mathbf{\beta}_1, \mathbf{\beta}_2, \dots, \mathbf{\beta}_p) \) is the \( n \times p \) matrix of factor loadings, where each column \( \mathbf{\beta}_i = (\beta_{i,1}, \beta_{i,2}, \dots, \beta_{i,n})^\top \) represents the loadings of all assets on factor \( i \), and \( \mathbf{\veps}_t = (\veps_{1,t}, \veps_{2,t}, \dots, \veps_{n,t})^\top \) is the \( n \times 1 \) vector of idiosyncratic errors. We note that this linear multi-factor model is the \textit{special case in our JDKF framework} where $G=\mathbf{B} \circ F$ described in Section \ref{sec:jdkf_model}, where we recall that under this situation, the diffusion coordinates are computed only on the observations ${x(t)}$ and there is only a lifting operator $\mathbf{H}^x$ from the diffusion coordinate space to the space of the observed common factors.

\subsection{Scenario Analysis Models}
\label{sec:stress_benchmarks}

We compare JDKF against three benchmark approaches for stress testing equity portfolios under macroeconomic scenarios. Each method addresses the challenge of predicting portfolio returns conditional on shocks to a subset $\mathcal{S}$ of risk factors, while accounting for co-movements in the remaining factors $\mathcal{S}^C$. Figure~\ref{fig:scenario_flow} provides an overview of the scenario analysis framework common to all methods.

\begin{figure}[H]
  \centering
  {\includegraphics[width=1\linewidth]{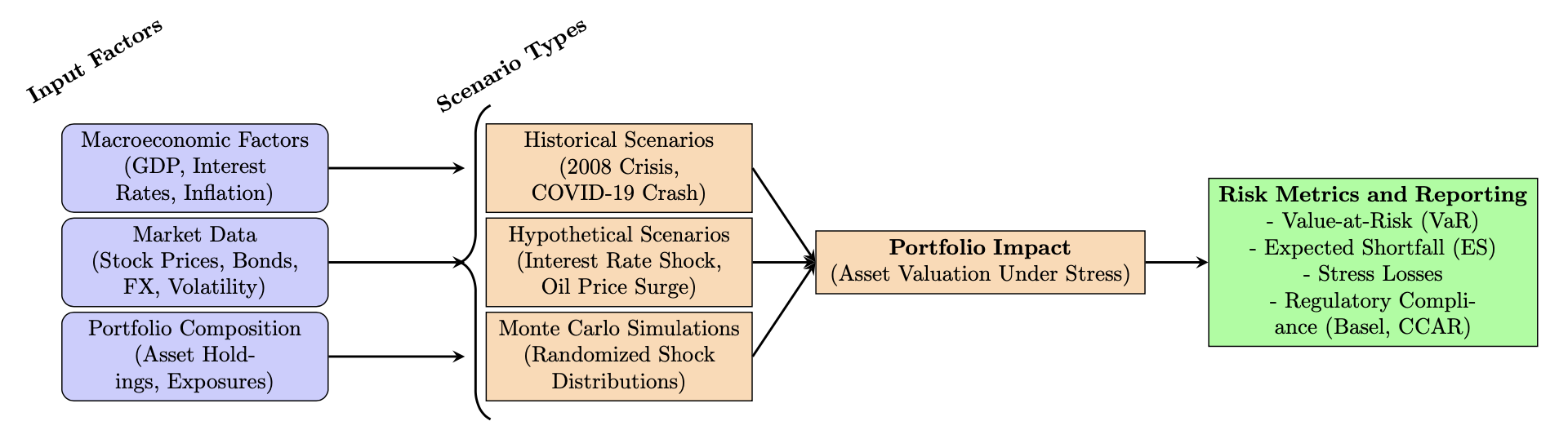}}
  \hfill
  \caption{{Diagram for scenario analysis in stress testing. Input factors 
    (macroeconomic variables, market data, portfolio composition) feed into scenario definitions, which can be historical (e.g., 2008 crisis, COVID-19), hypothetical (e.g., interest rate shock, oil price surge), or generated via Monte Carlo simulations. We then assess the portfolio change under stress and evaluate it through risk metrics 
    such as Value-at-Risk (VaR), Expected Shortfall, stress losses, and regulatory capital 
requirements (Basel, CCAR).}}
  \label{fig:scenario_flow}
\end{figure}

\subsubsection{Standard Scenario Analysis}

Standard Scenario Analysis (SSA) for financial portfolios aims to compute a portfolio's profit and loss (P\&L) or overall return under various combinations of stresses to a sub-collection of the risk factors that drive the portfolio's valuation. It is well known that scenario portfolio numbers calculated under SSA can be quite inaccurate for several reasons. First, post-stress P\&L is computed under the assumption that the un-stressed factors remain unchanged, however, there is often clear statistical dependence among factors, and in particular conditional dependence. Applying a stress to a sub-collection of factors at a given time period $t$ should also have dynamic propagation effects through future time periods. A good stress test must not only take into account cross-factor dependencies but also the joint dynamics of the common factors themselves to assess how a stress propagates through time.

One drawback of SSA is its lack of backtesting to validate the accuracy of scenario-conditional P\&L. Since SSA does not explicitly impose a probability model on the factors, it does not naturally lend itself to backtesting. In \citet{haugh2020scenario}, they propose a standard linear Gaussian state space model for the dynamics of the risk factors and scenario analysis is married with dynamic factor models. A dynamic factor model for scenario analysis has the advantage that it now allows for backtesting the performance of a stress test for a given portfolio. However, when considering equity portfolios where the assets are modeled with a high-dimensional set of common factors and their joint dynamics are highly nonlinear, such a standard state space model no longer applies.

Under the standard multi-factor model, SSA stresses the components of $\mathbf{X}_{\mathcal{S},t}$ according to the given scenario while keeping components in $\mathbf{X}_{\mathcal{S}^C,t}$ unchanged (i.e., equal to their current value). The portfolio return is then computed as $V_t^{\text{SSA}} = \mathbf{w} \cdot \mathbf{Y}_t^{\text{SSA}}$, where $\mathbf{Y}_t^{\text{SSA}} = \tilde{\mathbf{B}}\mathbf{X}_t^{\text{SSA}}$ and $\tilde{\mathbf{B}}$ is estimated via OLS regression. We provide the complete algorithmic details  in Appendix~\ref{app:ssa}.  {This makes the implicitly assumption $\mathbb{E}[\mathbf{X}_{\mathcal{S}^C,t+1}|\mathcal{F}_t, \mathbf{X}_{\mathcal{S},t+1}] = 0$, which is a fundamental limitation. It ignores dynamic correlations: stressing inflation and unemployment should affect 
expectations of unstressed risk factors, yet SSA holds the unstressed factors fixed. Our JDKF framework addresses 
this by learning the conditional distribution of unstressed factors given stressed ones through the joint covariation structure captured in diffusion coordinates.}

\subsubsection{Static PCA}

We employ a PCA-based benchmark to assess the impact of stressed scenarios on asset returns. This benchmark is closer to SSA in that we do not impose model dynamics on the PC components; however, unlike SSA, the un-stressed factors are not assumed to remain unchanged and their co-movement is given from the projection onto the principal component space.

Given a factor matrix $\mathbf{X} \in \mathbb{R}^{T \times p}$ and a return matrix $\mathbf{Y} \in \mathbb{R}^{T \times n}$, we first estimate the linear multi-factor regression model to obtain $\tilde{\mathbf{B}}$, then compute principal components from the differenced and centered factor data $\tilde{\mathbf{X}}_t = \Delta \mathbf{X}_t - \boldsymbol{\mu}_{\Delta X}$. For each specified PCA dimension $d$, we compute the PCA transformation and extract the principal component loading matrix $\mathbf{W} = \mathbf{V}^T$ from the singular value decomposition.

To apply a stressed scenario, we define the unstressed scenario by setting $\mathbf{X}^{\text{scenario}}_{t+1,i} = \mathbf{X}_{t,i}$ for all $i \notin \mathcal{S}$ (unchanged factors), and $\mathbf{X}^{\text{scenario}}_{t+1,i} = \mathbf{X}_{t+1,i}$ for $i \in \mathcal{S}$ (stressed factors set to realized historical values). The perturbation vector $\Delta \mathbf{X} = \mathbf{X}^{\text{scenario}}_{t+1} - \mathbf{X}_t$ is centered and projected onto the PCA space, i.e., $\text{proj}_{\text{PCA}} = \mathbf{W}(\mathbf{W}^T \Delta \tilde{\mathbf{X}})$. The perturbed scenario factor vector is then $\mathbf{X}^{\text{PCA-stress}}_{t+1} = \mathbf{X}_t + \text{proj}_{\text{PCA}} + \boldsymbol{\mu}_{\Delta X}$, and stressed returns are predicted as $\mathbf{Y}^{\text{PCA-stress}}_{t+1} = \mathbf{X}^{\text{PCA-stress}}_{t+1} \tilde{\mathbf{B}}$.

For each backtest, we select the PCA dimension $d$ that minimizes mean absolute error (MAE), giving Static PCA the best possible performance. Complete implementation details including the preprocessing steps are provided in Algorithm~\ref{alg:static_pca} (Appendix~\ref{app:static_pca}).

\subsubsection{Dynamic PCA}

A natural comparison benchmark is a dynamic version of Principal Component Analysis (PCA), which provides a simpler alternative to the arguably more complex diffusion mapping. Dynamic PCA combines principal component dimensionality reduction with a linear-Gaussian
state-space model for the PC time series. The model structure mirrors JDKF but replaces diffusion
coordinates with PCs.

We define the following PCA-based linear Gaussian state space model:

\begin{align}
\text{Observation Equation:} \quad & \mathbf{X}_t = \boldsymbol{\Gamma} \mathbf{Z}_t + \boldsymbol{\varepsilon}_t, \quad \boldsymbol{\varepsilon}_t \sim \mathcal{N}(\mathbf{0}, \mathbf{R}) \\
\text{State Equation:} \quad & \mathbf{Z}_t = \mathbf{A}\mathbf{Z}_{t-1} + \boldsymbol{\eta}_t, \quad \boldsymbol{\eta}_t \sim \mathcal{N}(\mathbf{0}, \mathbf{Q})
\end{align}
where $\mathbf{X}_t$ is the observed data vector at time $t$; $\mathbf{Z}_t$ is the latent state vector (principal component embeddings) at time $t$; $\boldsymbol{\Gamma}$ is the eigenvector matrix (computed from PCA on the data) serving as the observation (loading) matrix; $\mathbf{A}$ is the transition matrix governing the dynamics of the principal component embeddings; and $\mathbf{Q}$ is the process noise covariance matrix.

We estimate the transition matrix $\mathbf{A}$ by fitting a VAR(1) process to the PC components computed from the data. We then implement the standard Kalman filter (see Appendix~\ref{sec:appd}) to obtain dynamic estimates $\hat{\mathbf{Z}}_t$ of the PC embeddings. In our empirical experiments we select the dimension of the PC embeddings as $d = 5$ such that $>99\%$ of the variance in our data is explained.

Once we have obtained our estimates for $\mathbf{Z}_t$, we perform conditional sampling given a scenario to obtain the stressed factor vector. The conditional sampling procedure closely follows that outlined in Section~\ref{sec:sample} with the exception that we replace the lifting operator matrix with $\boldsymbol{\Gamma}$ and the diffusion coordinates with $\mathbf{Z}_t$. We obtain $K$ Monte Carlo conditional samples $\{\mathbf{Z}^{(k)}_{t+1} | \mathbf{Z}_t, \mathbf{X}_{\mathcal{S},t}\}_{k=1}^K$, predict post-stress asset returns for each sample as $\mathbf{Y}^{(k)}_{t+1,\text{Dynamic PCA}} = \tilde{\mathbf{B}}\boldsymbol{\Gamma}^T \mathbf{Z}^{(k)}_{t+1}$, and obtain the final dynamic PCA post-stress returns by averaging across all Monte Carlo predictions.

The complete algorithmic details are provided in Appendix~\ref{app:dynamic_pca}, and summarized in Algorithm~\ref{alg:dynamic_pca}.

\subsubsection{Linear Multi-Factor JDKF}

Unlike SSA, the key advantage of our JDKF framework is that it accounts for dynamic correlations among the factors to compute conditional portfolio return under a given stress scenario, all this without imposing any parametric model on the joint factor dynamics. The conditional sampling procedure described in Section~\ref{sec:sample} allows us to sample dynamic diffusion coordinates $\boldsymbol{\psi}_t$ conditional on a scenario in the observation space, $\mathbf{X}_{\mathcal{S},t}$. 

Under the multi-factor model for asset returns, the multivariate regression equation becomes our observation equation, so that we operate under the following state space:
\begin{align}
\text{Observation Equations:} \quad & \mathbf{Y}_t = \mathbf{B}\mathbf{X}_t + \boldsymbol{\varepsilon}_t = \mathbf{B}\mathbf{H}^X\boldsymbol{\psi}_t + \mathbf{v}_{Y,t}, \quad \mathbf{v}_{Y,t} \sim \mathcal{N}(\mathbf{0}, \mathbf{R}_Y) \nonumber \\
& \mathbf{X}_t = \mathbf{H}^X\boldsymbol{\psi}_t + \mathbf{v}_{X,t}, \quad \mathbf{v}_{X,t} \sim \mathcal{N}(\mathbf{0}, \mathbf{R}_X) \\
\text{State Equation:} \quad & \boldsymbol{\psi}_{t+1} = \mathbf{A}\boldsymbol{\psi}_t + \boldsymbol{\eta}_t, \quad \boldsymbol{\eta}_t \sim \mathcal{N}(\mathbf{0}, \mathbf{Q}) \nonumber
\end{align}
where $\mathbf{X}_t$ is the observed data vector at time $t$; $\boldsymbol{\psi}_t$ is the latent state vector (diffusion coordinates) at time $t$; $\mathbf{H}^X$ is the linear lifting operator (computed from applying diffusion maps on the data); $\mathbf{v}_{Y,t}$ is a Gaussian observation noise with covariance matrix $\mathbf{R}_Y$, correlated with $\mathbf{v}_{X,t}$; $\mathbf{A}$ is the transition matrix governing the dynamics of the diffusion coordinates; and $\mathbf{Q}$ is the process noise covariance matrix.

We first apply diffusion maps on the common factors data matrix $\mathbf{X} \in \mathbb{R}^{T \times p}$ to obtain the matrix of diffusion coordinates $\boldsymbol{\psi} \in \mathbb{R}^{T \times \ell}$, the lifting operator $\mathbf{H}^X$, and the transition matrix $\mathbf{A} = (\mathbf{I} + \boldsymbol{\Lambda})$. We then estimate the JDKF parameters $\mathbf{B}$, $\mathbf{R}_X$, $\mathbf{R}_Y$, and $\mathbf{R}_{XY}$ by fitting the joint Kalman filter as outlined in Section~\ref{sec:jdkf_model} to obtain filtered dynamic diffusion coordinate estimates $\boldsymbol{\psi}^{\text{JDKF}}_t$.

Using the estimated diffusion coordinates from JDKF, we perform conditional sampling given the scenario in the factor observation space following the procedure outlined in Section~\ref{sec:sample}. We obtain $K$ Monte Carlo conditional samples $\{\boldsymbol{\psi}^{(k)}_{t+1} | \boldsymbol{\psi}_t, \mathbf{X}_{\mathcal{S},t}\}_{k=1}^K$, predict post-stress asset returns for each conditional Monte Carlo sample as $\mathbf{Y}^{(k)}_{t+1,\text{JDKF}} = \mathbf{B}\mathbf{H}^X\boldsymbol{\psi}^{(k)}_{t+1}$, and obtain the JDKF post-stress returns by averaging across all Monte Carlo predictions: $\mathbf{Y}_{t+1,\text{JDKF}} = \frac{1}{K}\sum_{k=1}^K \mathbf{Y}^{(k)}_{t+1,\text{JDKF}}$.

The complete implementation details are provided in Appendix~\ref{app:jdkf_implementation},  including the step-by-step procedure described in Algorithm~\ref{alg:jdkf}. 

\subsection{Historical Backtesting}
\label{sec:backtest}

To validate our model with data, we conduct historical backtests covering major periods of financial crisis and set our scenarios equal to the actual realized historical scenarios. For a given factor $i \in \mathcal{S}$ in our scenario, the magnitude of the stress is $\Delta \mathbf{X}_{t}^i=\mathbf{X}_{t}^i-\mathbf{X}_{t-1}^i$. For each historical backtest, we implement JDKF and the comparison benchmarks (SSA, Static PCA, and Dynamic PCA) described in Section \ref{sec:stress_benchmarks}. We then compare the conditional (on the scenario) predicted portfolio returns of each model relative to the realized time $t$ portfolio return. On each period of our backtest, we consider a portfolio manager who constructs portfolio weights via the $1/N$ equal weighting rule and predicts the portfolio return conditional on the scenario under each model.

Historical backtesting involves estimating the performance of a given model \textit{if it had been employed in the past}. The main idea is to predict portfolio return conditional on a scenario that we have observed in the past. More concretely, if at time period $t$ we know the historical magnitude of the stress $\Delta \mathbf{X}_{t}^i$ for our scenario-defining factors $i \in \mathcal{S}$, can we accurately predict the co-movements in the un-stressed factors $i \not \in \mathcal{S}$? The goal of our historical backtest is to assess how well JDKF captures the dynamic correlation among the factors driving a portfolio's overall return as compared to the other benchmarks. Our backtest consists of two phases (see Figure \ref{fig:backtest_flow}):

\begin{itemize}
\item {Refitting Period.} For the linear multi-factor JDKF and Dynamic PCA, we periodically refit the Kalman filters based on a fixed refitting period $R$. We start the Kalman filters with lead time $R$ before the backtesting period and run them through to the end, allowing the filters to pass through their burn-in period and reach stable estimates.

\item {Backtesting Period.} We retrieve the filtered estimates for the diffusion coordinates (for the linear multi-factor JDKF) and PC components (for Dynamic PCA) from the pre-fitted Kalman filters. We then train our models on a one-step-ahead rolling basis using data from $t-s:t$ (where $s$ is the rolling window size) and predict portfolio return one period out.
\end{itemize}
Our rationale for this two-phase design is to avoid refitting the Kalman filter every time the training window rolls forward, which would require the filter to repeatedly pass through its burn-in period. By allowing the filter to run continuously with periodic refitting every $R$ periods, we ensure the estimates remain stable and accurate. Figure \ref{fig:backtest_flow} displays a schematic representation of our backtesting design.

\begin{figure}
  \centering
  {\includegraphics[width=1\linewidth, trim=0cm 0cm 0cm .72cm, clip]{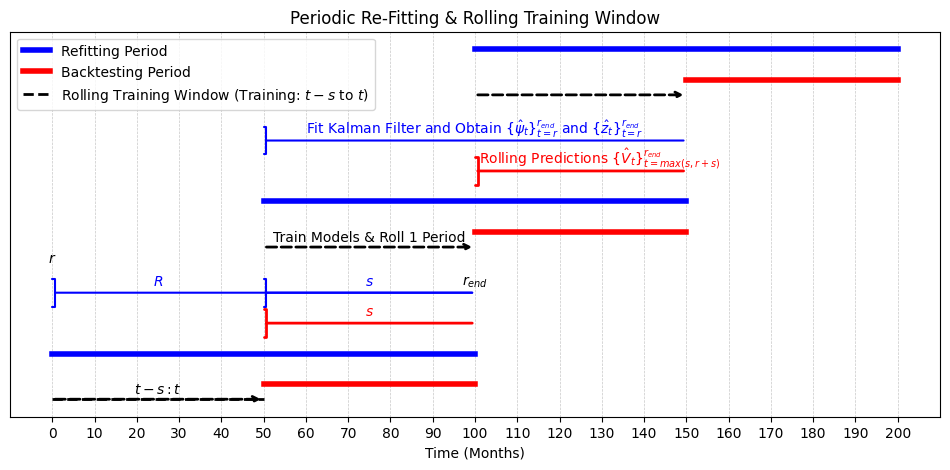}\label{fig:f1}}
  \hfill
  \caption{Structure of the historical rolling backtest with periodic Kalman filter 
refitting. The timeline is divided into refitting periods and backtesting periods. During each refitting period, the Kalman filter is 
fitted once to obtain filtered estimates 
$\{\hat{\psi}_t\}$ and $\{\hat{z}_t\}$. Within each backtesting period, models are 
trained on rolling windows and predictions 
are made one period ahead. We  
show the full 200-month timeline spanning June 2004-December 2016 with three complete 
refitting cycles. The insets illustrate that after a burn-in period, the 
Kalman filter runs through the refitting period to stabilize estimates before rolling 
predictions begin.} 
  \label{fig:backtest_flow}
\end{figure}

Given the time series of portfolio return predictions $\hat{V}_t$ and absolute errors $E_t$ for each model from running the historical backtests, we then consider two main metrics for assessing model performance. Following the literature, as in \citet{haugh2020scenario}, we consider the Mean Absolute Error (MAE) for each model defined as:

\[\frac{1}{T_{\text{test}}}\sum_{t=1}^{T_{\text{test}}} |\hat{V}_t-V_{t, \mathrm{TRUE}}| \]
where $T_{\text{test}}$ is the number of time periods for each test financial crisis. We also consider the percentage accuracy of JDKF against the comparison benchmarks, defined as the percentage of time JDKF yielded a predicted portfolio return closer to the truth, expressed as:

\[\frac{\sum_{t=1}^{T_{\text{test}}} \mathds{1}_{\{E_{\mathrm{JDKF}, t}/E_{\text{bench}, t}<1 \}}}{T_{\text{test}}} \times 100 \\ \textrm{ where } E_t=|\hat{V}_t-V_{t, \mathrm{TRUE}}|\]
We also employ a standard Value-at-Risk (VaR) exceptions test to statistically test whether the predicted portfolio returns $\hat{V}_{t, \mathrm{JDKF}}$ from the backtest are consistent with the realized $V_{t, \mathrm{TRUE}}$. As is standard practice in risk management, and using $V_t$ as the random variable representing portfolio return, the time $t$ level-$\alpha$ VaR, which is an $\mathcal{F}_t$-measurable random variable, is defined for $\alpha \in (0,1)$ as:
\[\mathbb{P}(V_t \leq \operatorname{VaR}_t(\alpha)|\mathcal{F}_t)=1-\alpha\]
In short, the time $t$ VaR at level $\alpha$ is simply the $(1-\alpha)$-quantile of the distribution of the portfolio return conditional on $\mathcal{F}_t$. Now, a VaR exception is defined as the event that the realized portfolio return is below $\operatorname{VaR}_t(\alpha)$, as is standard practice, one can then define an indicator variable:
\[\mathds{1}_t(\alpha)= \begin{cases} 1 &, \text{ if } V_t <\operatorname{VaR}_t(\alpha) \\ 0 & , \text{ otherwise }\end{cases}\]
Our backtest gives us a sequence of empirical VaR exception indicators $\hat{\mathds{1}}_t(\alpha)$ from the empirical distribution of the JDKF portfolio return predictions. Following \citet{kupiec1995}, under the null hypothesis that JDKF is correct, $\sum_{t=1}^T \hat{\mathds{1}}_t(\alpha)$ follows a Binomial distribution with $T$ trials and success probability $1-\alpha$. This yields a two-sided binomial test  with test statistic:

\[Z_T=\frac{\sum_{t=1}^T \hat{\mathds{1}}_t(\alpha)-T(1-\alpha)}{\sqrt{T \alpha (1-\alpha)}}\]
which we use to test the null hypothesis.

The complete implementation details with periodic Kalman filter refitting for the historical rolling backtest are provided in Appendix~\ref{app:backtest_algorithm}, including the detailed steps of the procedure stated in Algorithm~\ref{alg:historical_backtest}.

\section{Model Estimation using Real Data}
\label{sec:empirical}

\subsection{Data Description}
 We model the series of stock returns via a classical macroeconomic factor model with the 124 FRED-MD macro-level factors from \citet{mccracken2016fred} plus the 8 macroeconomic factors from \citet{goyal2008comprehensive}. This gives us a total of D=132 common factors. We utilize the curated data version as provided from \citet{chen1986economic}, which applies some standard transformations to the macro factors to ensure we deal with stationary time series (see Appendix \ref{sec:appe} for the transformation details). The monthly stock returns data is obtained from CRSP. The time period considered is 1967-2016. We consider stocks trading in the NYSE, NASDAQ, and the American Stock Exchange. For each financial crisis period in our backtests, we maximize the size of the universe of firms by using the largest number of firms present throughout the entire time span of the considered period. We select four periods for out-of-sample prediction:
 
\begin{enumerate}
    \item July 1990 - Mar 1991. We fit the Kalman filters only once with sufficient lead time before the start of the crisis. We re-train for each rolling  window time step on all data available from $t-s:t$ and predict conditional portfolio return for each of the 8 periods (months) in the backtesting phase. This period captures to the oil price shock of 1990, debt accumulation, and consumer pessimism present during those times. For this period, we have a universe of N=1,200 firms.
    \item Mar 2001 - Nov 2001. We fit the Kalman filters only once with sufficient lead time before the start of the crisis. We re-train for each rolling window time step on all data available from $t-s:t$ and predict conditional portfolio return for each of the 8 periods in the backtesting phase. This period captures the dot-com bubble and the 9/11 attacks. For this period we have a universe of N=1,344 firms. 
    \item Dec 2007 - June 2009. We fit the Kalman filters only once, with sufficient lead time before the start of the crisis period. We re-train for each rolling window time step on all data available from $t-s:t$ and predict conditional portfolio return for each of the 18 periods in the backtesting phase. This period contains the subprime mortgage crisis. For this period we have a universe of N=1,409 firms. 
    \item June 2004-December 2016, where we periodically refit the Kalman filter every 50 months and predict conditional portfolio return for each of the 150 periods within the crisis; we have three backtesting periods, each of length 50, corresponding exactly as in Figure \ref{fig:backtest_flow}. This period captures the 2007-2009 housing crisis and this test was conducted to allow for a large training set and a large out-of-sample test set to allow for a more robust binomial VAR exceptions test. For this period we have a universe of N=1,776 firms.
\end{enumerate}

 For our empirical tests, we consider a portfolio manager who, in each period, uses the $1/N$ rule to construct the weights of his portfolio. To define our scenarios, we consider the set of macroeconomic variables in Table~\ref{tab:scenario_variables}, which roughly correspond to the FED's 2022 variables for their supervisory scenarios. These variables are taken as our subset $\mathcal{S}$ of the common factors that will define the stress variables in the historical backtests:

\begin{longtable}{|l|p{8cm}|}
\caption{Scenario Stress Macroeconomic Variables. {These 15 variables constitute the stressed subset from the full 132 
FRED-MD and Goyal--Welch factors. This selection 
covers key transmission channels in financial stress testing: equity markets (S\&P 500), 
inflation (CPI), foreign exchange (4 currency pairs vs. USD), monetary policy (Fed 
Funds rate, Treasury rates at 3-month, 5-year, and 10-year maturities), credit spreads 
(AAA and BAA corporate bonds), real economic activity (real personal income, civilian 
unemployment), and market volatility (VXO index).}} 
\label{tab:scenario_variables}
\\
\hline
\textbf{Variable Name} & \textbf{Description} \\ \hline
\endfirsthead

\hline
\textbf{Variable Name} & \textbf{Description} \\ \hline
\endhead

\hline
\endfoot

\hline
\endlastfoot

S\&P 500 & S\&P's Common Stock Price Index: Composite \\ \hline
CPIAUCSL & CPI: All Items \\ \hline
EXSZUSx & Switzerland / U.S. Foreign Exchange Rate \\ \hline
EXJPUSx & Japan / U.S. Foreign Exchange Rate \\ \hline
EXUSUKx & U.S. / U.K. Foreign Exchange Rate \\ \hline
EXCAUSx & Canada / U.S. Foreign Exchange Rate \\ \hline
FEDFUNDS & Effective Federal Funds Rate \\ \hline
RPI & Real Personal Income \\ \hline
UNRATE & Civilian Unemployment Rate \\ \hline
TB3MS & 3-Month Treasury Bill Rate \\ \hline
GS5 & 5-Year Treasury Rate \\ \hline
GS10 & 10-Year Treasury Rate \\ \hline
AAA & Moody’s Seasoned Aaa Corporate Bond Yield \\ \hline
BAA & Moody’s Seasoned Baa Corporate Bond Yield \\ \hline
VXOCLSx & CBOE S\&P 100 Volatility Index: VXO \\ 

\end{longtable}

\subsection{Stress Test Analysis Data}
\label{sec:experiments}
We conduct a historical backtest as outlined in Algorithm \ref{alg:historical_backtest} in Appendix \ref{app:backtest_algorithm} for each one of the financial crises. For each backtest, in addition to reporting the performance metrics MAE and percentage accuracy, we also display the time series plots of the predicted conditional portfolio returns for each one of the models along with the true historical portfolio returns, where we re-emphasize that each portfolio is constructed with the $1/N$ rule. An important note is that for Static PCA, we perform each backtest for every possible PC component dimension and select the best performing one in terms of MAE. We begin with the richest of the four backtests, spanning June 2004-December 2016, where we have 150 prediction periods. As mentioned earlier, in this backtest we periodically refit the Kalman filters every 50 months. Figure \ref{fig:2004_to_2016} displays the time series of the conditional predictions from the backtest. We note that for a few months, Dynamic PCA yielded predictions that were significantly far off from the truth, we removed some of these outlier predictions from the plot to preserve the visibility of all the other time series displayed. We display a gray bar over each month where Dynamic PCA yielded such an outlier prediction.

\begin{figure}[H] 
    \centering 
    {\includegraphics[width=0.79\linewidth, trim=0cm 0cm 0cm .72cm, clip]{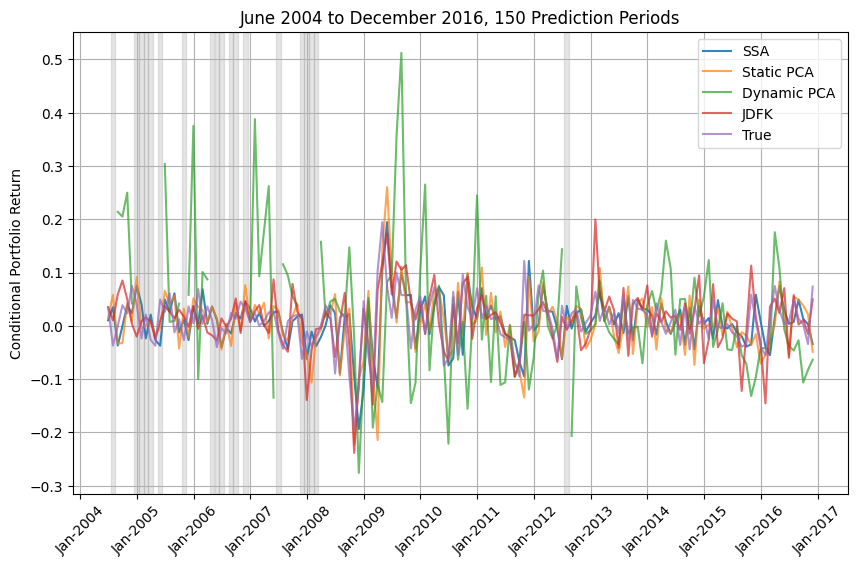}} 
    \caption{{Historical backtest: June 2004-December 2016 (150 out-of-sample monthly predictions). 
True portfolio returns (purple) versus conditional predictions from JDKF (red), 
SSA (blue), Static PCA (orange), and Dynamic PCA (green). Gray vertical bars indicate 
periods where Dynamic PCA produced outlier predictions. 
JDKF tracks realized returns more closely throughout, particularly during the 2007-2009 
financial crisis. We consider an equal-weighted portfolio across 
1,776 stocks. Scenarios are defined by stressing the 15 macroeconomic factors in 
Table \ref{tab:scenario_variables}.}}\label{fig:2004_to_2016}
\end{figure}

We report the MAE for each model in Table \ref{tab:mae}. We find that JDKF achieves a significantly lower MAE than all of the other benchmarks, and a 35.86\% reduction over the MAE achieved by SSA, which is the second best performing benchmark in terms of MAE. In terms of percentage accuracy, as shown in Table \ref{tab:accuracy}, over the 150 predicted periods, JDKF performs better than SSA 66\% of the time, better than Dynamic PCA 66.67\% of the time, and better than Static PCA 76.67\% of the time. Notice that even though Static PCA achieved a slightly higher MAE than SSA (about 5.9\% higher), the percentage of time that JDKF performs better than Static PCA is lower than that of SSA. This indicates that, while on average SSA achieves a slightly lower absolute error, Static PCA is more consistent in yielding predictions closer to JDKF. This is suggestive of SSA exhibiting more frequent large deviations, whereas Static PCA, despite its slightly higher MAE, yields predictions that are more stable relative to JDKF across the 150 prediction periods. 

As is common practice, we also assess whether the realized portfolio values are consistent with the conditionally predicted portfolio values via a 95\% VaR exceptions test on the predictions obtained under JDKF. We have already seen that JDKF is superior to all comparison benchmarks not only in terms of MAE, but also in terms of percentage accuracy, now we statistically assess the null hypothesis that JDKF is the correct model via the two-sided binomial VaR exceptions test, and obtain the following results:
\begin{table}[H]
    \centering
    \begin{tabular}{ll}
        \toprule
        \textbf{Metric} & \textbf{Value} \\
        \midrule
        VaR Threshold ($\alpha = 0.05$) & -0.0691\\
        Actual Exceptions & 7 \\
        Expected Exceptions & 7.5 \\
        Binomial Test p-value & 0.99\\ 
        \bottomrule
    \end{tabular}
    \caption{Value-at-Risk exceptions test at 95\% confidence level for JDKF predictions. Under the null hypothesis that JDKF is correct and its predicted portfolio returns are consistent with the realized returns, the number of exceptions (realized returns below 
the VaR threshold) should follow Binomial$(150, 0.05)$ with expected value 7.5. We observed 7 exceptions. Two-sided binomial test yields $p=0.99$, providing no evidence 
against JDKF's distributional accuracy. The VaR threshold of $-6.91\%$ corresponds to 
the empirical 5th percentile of JDKF's predicted conditional return distributions.}
\label{tab:var_results_95}
\end{table}

Note that for the 95\% VaR exceptions test in Table \ref{tab:var_results_95} there is no strong evidence to reject the null hypothesis that JDKF is correct, i.e., that the VaR estimates from JDKF appear statistically consistent with the expected distribution. In both cases the number of observed exceptions is in line with the expected exceptions. For the 95\% test we observe 7 exceptions with an expected 7.5, JDKF is slightly conservative but certainly well within an acceptable range. This indicates JDKF is not displaying systematic under or over-estimation of risk. 

We now proceed to perform similar backtests on the other shorter periods of financial turmoil, which contain a very small number of prediction periods. Note that there are more periods of recessions in our data sample, but we decided to begin in 1990 to have a reasonable training window size. As mentioned earlier, for each of these shorter crisis periods, we fit the Kalman filter once starting a few months before the beginning of the interval to overcome the burn-in period. For the diffusion mapping, throughout all backtests (including the already discussed June 2004-December 2016), we made the empirical choice of $\ell=39$ for the diffusion coordinate embedding dimension and set $\veps$ to the median $d(x(t_i),x(t_j))$, based on the decay of the spectrum of the $\mathbf{P}$ matrix, displayed in Figure \ref{fig:macro_spectrum} for different choices of $\veps$. The rate of spectrum decay generally determines how well the manifold structure has been captured, slower decay requires more eigenvectors to represent the data, while faster decay usually suggests fewer dimensions are sufficient. We also display in Figure \ref{fig:macro_mse} the mean squared reconstruction error (using the lifting operator $\mathbf{H}^x$) for the diffusion maps computed on the macroeconomic factors for different choices of $\ell$ and $\veps$, our choice of $\veps$ balances a lower error than that for smaller $\veps$ with a better capture of the manifold structure than that for larger $\veps$, as evidenced by Figure \ref{fig:macro_spectrum}, where the larger $\veps$ capture mostly noise. These choices of $\ell$ and $\veps$ are fixed throughout the historical backtests and was obtained by inspecting the point where we observe the largest pairwise consecutive drop in the ordered eigenvalues obtained from applying the diffusion map to the entire time series of macroeconomic factors.

\begin{figure}
\centering

\subfloat[Diffusion Map Spectrum: Eigenvalues of the Markov \newline Matrix $\mathbf{P}$ for different choices of $\veps$ as a function of the \newline diffusion coordinate embedding dimension $l$
]{\includegraphics[width=0.5\linewidth, trim=0cm 0cm 0cm .77cm, clip]{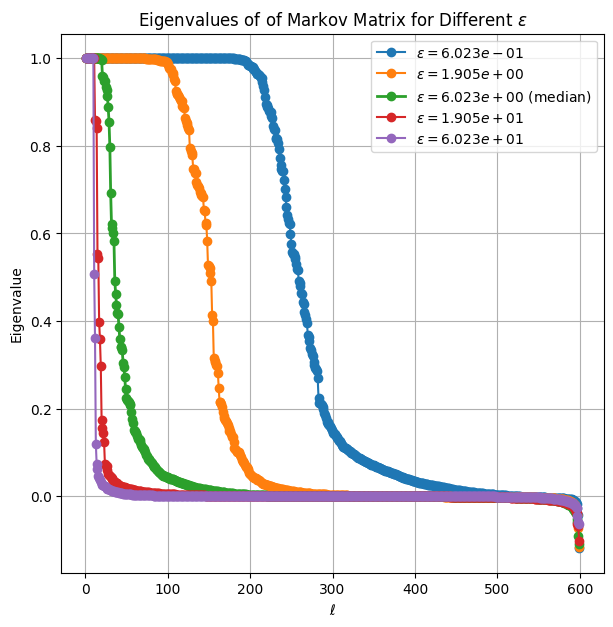}\label{fig:macro_spectrum}}
      \subfloat[Mean Squared Reconstruction Error:  Mean Squared Error from reconstructing diffusion coordinates via the lifting  operator $\mathbf{H}^x$ for different choices of $\veps$ as a function of \newline the  diffusion coordinate embedding dimension $l$]{\includegraphics[width=0.5\linewidth, trim=0cm 0cm 0cm .77cm, clip]{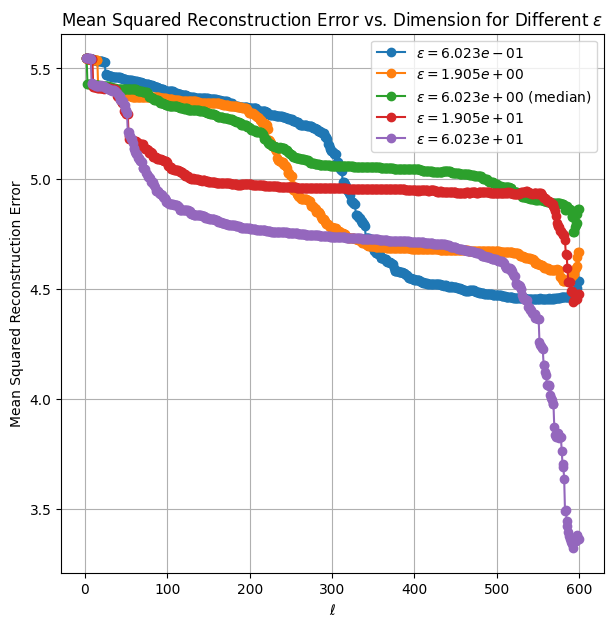}\label{fig:macro_mse}} \\
\caption{Diffusion map diagnostics for selecting kernel bandwidth $\varepsilon$ and 
embedding dimension $\ell$ using the 132 macroeconomic factors. We select a bandwith parameter equal to the median $d(x(t_i),x(t_j))$.}
\label{fig:macro_diagnostics}
\end{figure}

Figure \ref{fig:all_tests} displays the conditional predicted portfolio returns under each model as well as the truth for each of the three shorter crisis periods. Again, we note that for Dec 2007-June 2009, Dynamic PCA yielded outlier predictions during the first few periods, which we removed from the plot in panel \ref{fig:mortgage} for visibility purposes, and marked those periods with a gray bar. Analyzing the panel \ref{fig:oil} for the Oil Price Shock crisis from July 1990-March 1991, it is striking to observe the consistency of the movements between JDKF and the true portfolio return. Focusing on the sharp downturn from September to October 1990, followed by the consistent increase from October 1990 to January 1991, JDKF correctly anticipates the fall and rise, indicating that the dynamic correlation among the common factors is being captured from our sampling procedure. SSA and Static PCA for instance, seem to identify the sudden fall and subsequent steady rise but with a clear delay. Dynamic PCA struggles to capture any consistent trends from the true portfolio returns, this is highly indicative that neither the structure of the common factors nor their dynamics are well explained by linear models (PCA, and VAR(1)). Furthermore, as shown in Table \ref{tab:mae}, JDKF achieves the lowest MAE out of all benchmarks, followed closely by Static PCA. While it is true that Static PCA is a close competitor of JDKF in terms of MAE, Table \ref{tab:accuracy} shows that JDKF achieves higher accuracy than Static PCA 62.5\% of the time, indicating that JDKF is more consistent in capturing month-to-month fluctuations.

\begin{figure}
\centering
\begin{tabular}{@{}c@{}}
\subfloat[\label{fig:oil}]{\includegraphics[width=0.655\linewidth, trim=0cm 0cm 0cm .77cm, clip]{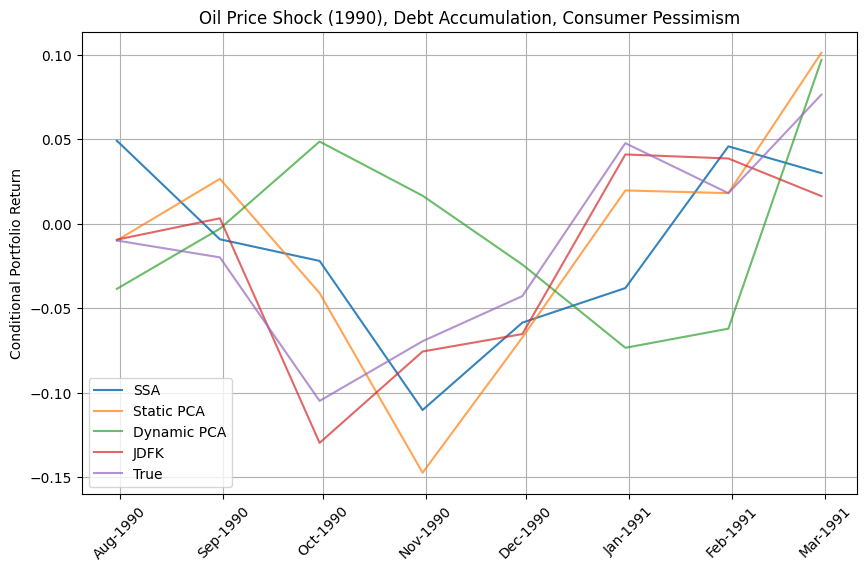}}\\
\subfloat[\label{fig:dotcom}]{\includegraphics[width=0.655\linewidth, trim=0cm 0cm 0cm .77cm, clip]{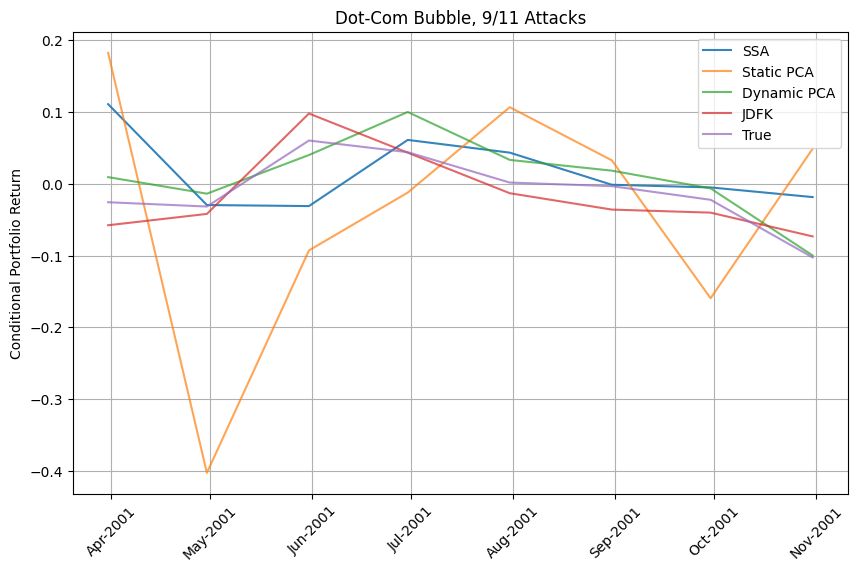}} \\
\subfloat[\label{fig:mortgage}]{\includegraphics[width=0.655\linewidth, trim=0cm 0cm 0cm .77cm, clip]{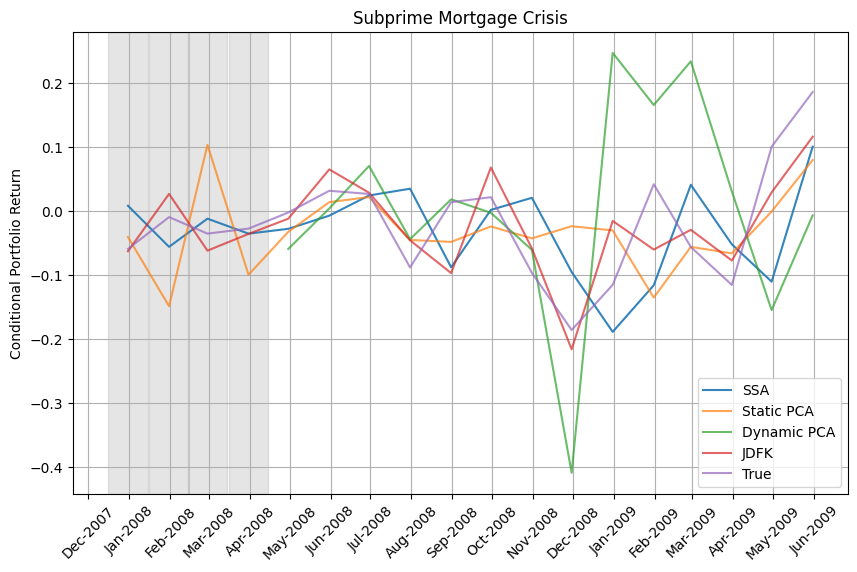}}
\end{tabular}\hfill
\begin{minipage}[t]{0.31\linewidth}
\caption{Historical backtests during major financial crises. \protect\subref{fig:oil} July 1990 - March 1991: True and Conditionally Predicted Portfolio Returns Spanning the Oil Price Shock of 1990, Debt Accumulation, Consumer Pessimism. \protect\subref{fig:dotcom} March 2001 - November 2001: True and Conditionally Predicted Portfolio Returns Spanning the Dot-Com Bubble, 9/11 Attacks \protect\subref{fig:mortgage} December 2007 - June 2009: True and Conditionally Predicted Portfolio Returns Spanning the Subprime Mortgage Crisis.}
\label{fig:all_tests}
\end{minipage}
\end{figure}

For the period encapsulating the Dot-Com Bubble and the 9/11 attacks, displayed in panel \ref{fig:dotcom}, it is notable that from April to July 2001, the time series for the true portfolio return follows a zig-zag pattern followed by a slight downward trend thereafter, which is captured by JDKF and notably also by Dynamic PCA. Indeed JDKF outperforms all metrics in terms of MAE as per Table \ref{tab:mae}, but Dynamic PCA is a close competitor with respect to this metric. Even still, JDKF achieves approximately a 12.75\% reduction in MAE over Dynamic PCA. Another notable fact from this backtest is that, unlike in the Oil Price shock crisis, Dynamic PCA is performing significantly better than Static PCA in terms of MAE and also time series trend consistency. This could be suggestive of dynamics during this period that are not highly nonlinear and are perhaps also well captured by linear models. 

Finally, it is during the Mortgage crisis period from December 2007 to June 2009, displayed in panel \ref{fig:mortgage}, where JDKF significantly outperforms all benchmarks by a wide margin in terms of both performance metrics. A visually striking indication of JDKF's superiority can be observed during the months of October 2008 to March 2009, where there is a sharp spike down in the true portfolio return, which then roughly recovers towards the end of February 2009. JDKF is the only model to accurately capture this trend, again indicating that JDKF is successfully uncovering the joint dynamics of the factors driving the portfolio's return and the conditional sampling procedure is uncovering their dynamic correlations. SSA again appears to predict spikes with a delay and Dynamic PCA completely confuses the temporal trends. Table \ref{tab:mae} shows JDKF achieving a 39.17\% reduction over the MAE achieved by Static PCA, which was the second best performing benchmark in terms of MAE. JDKF, as shown in Table \ref{tab:accuracy} also achieves higher accuracy than Static PCA 72.22\% of the time. The significantly higher MAE achieved by Dynamic PCA, combined with the obvious confusion in capturing the temporal trend of the true portfolio return and the fact that it performed relatively well during the Dot-Com Bubble, is suggestive of more nonlinearities present in the data during the Subprime Mortgage Crisis, which are being successfully picked up by JDKF due to the diffusion map embeddings.

These backtests provide solid evidence that taking the joint dynamics of the common factors into account result in more accurate scenario analysis. Furthermore, JDKF has provided a simple data-driven framework to define high-dimensional scenarios that are fully interpretable and embed them in a dynamic scenario analysis seamlessly without losing much explanatory power. We summarize the complete MAE numbers in the figures above in the following table:
\begin{table}[H]
    \centering
    \begin{tabular}{lcccc}
        \toprule
        \textbf{Time Period} & \textbf{JDFK}  & \textbf{SSA} & \textbf{Dynamic PCA} & \textbf{Static PCA} \\
        \midrule
        July 1990 - Mar 1991 & \textbf{.0206}  & .0462 & .0658 & .0332\\
        Mar 2001 - Nov 2001 & \textbf{.0219} & .0490 & .0251 & .1523\\
        Dec 2007 - June 2009 & \textbf{.0441} & .0750 & 1.017 & .0725\\
        Jun 2004 - Dec 2016 & \textbf{.0304} & .0474 & .2615 & .0502\\
        \bottomrule
    \end{tabular}
\caption{Mean Absolute Error (MAE) of conditional portfolio return predictions across 
four historical periods. JDKF achieves the lowest MAE in all periods, with improvements 
of 35.86\% (2004-2016), 55.43\% (1990-1991), 12.75\% (2001), and 39.17\% (2007-2009) 
relative to the best-performing benchmark. Dynamic PCA performs poorly during  crisis periods (1990-1991, 2007-2009) but remains competitive during the 
Dot-Com bubble (2001). Static PCA 
consistently outperforms SSA, demonstrating the value of capturing factor co-movements 
even without explicit temporal dynamics. All methods estimate factor loadings 
$\tilde{\mathbf{B}}$ via OLS on rolling windows. JDKF and Dynamic PCA additionally 
employ Kalman filtering in their respective embedding spaces (diffusion coordinates 
vs. principal components).}
\label{tab:mae}
    
    \label{tab:mae}
\end{table}

As defined earlier, we also compute the proportion of time the ratio $E_{\mathrm{JDKF},t}/E_{\mathrm{SSA},t}$ is less than 100\%, which indicates the proportion of time JDKF gave a more accurate scenario portfolio return than SSA. And we similarly compute the proportion of time JDKF is more accurate than the PCA-based benchmarks. The proportions for all of the financial crisis periods against the benchmark models are listed in the following table:
\begin{table}[H]
    \centering
    \begin{tabular}{lccc}
        \toprule
        \textbf{Time Period} & \textbf{$E_{\mathrm{JDKF},t}/E_{\mathrm{SSA},t}$} & \textbf{$E_{\mathrm{JDKF},t}/E_{\mathrm{Dynamic PCA},t}$} & \textbf{$E_{\mathrm{JDKF},t}/E_{\mathrm{Static PCA},t}$}\\
        \midrule
        July 1990 - Mar 1991 & 62.5\% & 62.5\% & 62.5\%\\
        Mar 2001 - Nov 2001 & 62.5\% & 50\% & 100\%\\
        Dec 2007 - June 2009 & 72.22\% & 77.78\% & 72.22\% \\
        Jun 2004 - Dec 2016 & 66\% & 66.67\% & 76.67\% \\
        \bottomrule
    \end{tabular}
\caption{Percentage of prediction periods where JDKF achieves lower absolute error 
than each benchmark. Each cell shows $100 \times \#\{t : E_{\text{JDKF},t} < E_{\text{benchmark},t}\}/T$ 
where $E_{t} = |V_t^{\text{true}} - V_t^{\text{pred}}|$ is the absolute prediction error 
at time $t$. JDKF outperforms benchmarks 62.5\%-77.78\% of the time across different 
crises. Perfect performance (100\% vs. Static PCA in 2001) indicates JDKF was closer to 
realized returns in all 8 months of that period.}
\label{tab:accuracy}
    \label{tab:accuracy}
\end{table}

\section{Conclusions}
\label{sec:conclusion}
We have proposed a novel data-driven framework for dynamic factor modeling within a supervised learning setting where one observes both, a dataset of responses $y(t) \in \mathbb{R}^n$, and a high-dimensional set of covariates $x(t) \in \mathbb{R}^m$. Our framework requires no knowledge of the observation dynamics and is versatile enough that it can simultaneously handle the two sets of observations. The key assumption is that the intrinsic state $\theta$ follows a Langevin equation with certain spectral properties, which we demonstrate is not too restrictive, particularly for financial applications. We uncover the joint dynamics of the covariates and responses in a purely data-driven way via lower-dimensional covariate embeddings that retain  explanatory power while exhibiting simple linear dynamics.

We combine anisotropic diffusion maps with Kalman filtering to infer the latent dynamic covariate embeddings, and predict the response variable directly from the diffusion map embedding space. We develop a conditional sampling procedure that accounts for dynamic correlations between covariates when conditioning on fixed values for a subset of them, and note that the procedure is general enough that it allows for conditioning on subsets of responses as well.
Our approach is justified via theoretical results on the convergence of anisotropic diffusion maps as well as the robustness of approximating the learned diffusion coordinates by linear SDEs. The former result applies concentration inequalities for Markov processes to generalize the arguments of \citet{singer_graph_2006,singer2008nonlinear} to the time series case; the latter adapts arguments typically seen in literature on Markov chain Monte Carlo methods.

We study an application of our framework to financial stress testing, where we consider the special case of a linear multi-factor model. Our historical backtests demonstrate the superiority of our methodology in capturing dynamic correlations between the macroeconomic factors driving the return of equity portfolios. We make use of several comparison benchmarks, including the standard approach to scenario analysis and PCA-based models. We demonstrate through our empirical experiments that working on the diffusion coordinate space indeed uncovers the dynamics of the common factors and captures nonlinearities in the data during historical periods of financial stress. 

While the primary application of our framework in this paper has been financial stress testing, it is important to emphasize the rich set of applications where our framework can be employed. 
First, our dynamic factor modeling framework is well-suited for asset pricing problems, where one seeks to construct a sparse set of dynamic factors from a high-dimensional factor zoo that possess high explanatory power for the cross-section of stock returns. Another application of our framework is to the problem of nowcasting, where one seeks to utilize a higher-frequency dataset to improve predictions on lower-frequency data. 
Another application is \emph{nowcasting} (see \citet{giannone_nowcasting_2008}), where one makes use of higher frequency/quality data to improve predictions on lower frequency/quality data. Our approach is well-suited for such a task as it produces lower-dimensional dynamic embeddings which propagate linearly, and also possess explanatory power comparable to the original high-dimensional set of factors. 

\section*{References}
\printbibliography[heading=none]

@incollection{scholkopf_convergence_2007,
	title = {Convergence of {Laplacian} {Eigenmaps}},
	isbn = {978-0-262-25691-9},
	url = {https://direct.mit.edu/books/book/3168/chapter/87384/Convergence-of-Laplacian-Eigenmaps},
	abstract = {Geometrically based methods for various tasks of machine learning have attracted considerable attention over the last few years. In this paper we show convergence of eigenvectors of the point cloud Laplacian to the eigenfunctions of the Laplace-Beltrami operator on the underlying manifold, thus establishing the ﬁrst convergence results for a spectral dimensionality reduction algorithm in the manifold setting.},
	language = {en},
	urldate = {2025-03-14},
	booktitle = {Advances in {Neural} {Information} {Processing} {Systems} 19},
	publisher = {The MIT Press},
	author = {Belkin, Mikhail and Niyogi, Partha},
	editor = {Schölkopf, Bernhard and Platt, John and Hofmann, Thomas},
	month = sep,
	year = {2007},
	doi = {10.7551/mitpress/7503.003.0021},
	pages = {129--136},
	file = {Belkin and Niyogi - 2007 - Convergence of Laplacian Eigenmaps.pdf:/Users/graemeb/Zotero/storage/X4YGM8AL/Belkin and Niyogi - 2007 - Convergence of Laplacian Eigenmaps.pdf:application/pdf},
}

@book{hsu2002stochastic,
  title={Stochastic analysis on manifolds},
  author={Hsu, Elton P},
  number={38},
  year={2002},
  publisher={American Mathematical Society}
}

@article{von_luxburg_consistency_2008,
	title = {Consistency of spectral clustering},
	volume = {36},
	issn = {0090-5364},
	url = {https://projecteuclid.org/journals/annals-of-statistics/volume-36/issue-2/Consistency-of-spectral-clustering/10.1214/009053607000000640.full},
	doi = {10.1214/009053607000000640},
	language = {en},
	number = {2},
	urldate = {2025-08-19},
	journal = {The Annals of Statistics},
	author = {Von Luxburg, Ulrike and Belkin, Mikhail and Bousquet, Olivier},
	month = apr,
	year = {2008},
	pages = {555--586},
	file = {Von Luxburg et al. - 2008 - Consistency of spectral clustering.pdf:/Users/graemeb/Zotero/storage/7923N7KD/Von Luxburg et al. - 2008 - Consistency of spectral clustering.pdf:application/pdf},
}

@article{bouchaud1998langevin,
  title={A Langevin approach to stock market fluctuations and crashes},
  author={Bouchaud, Jean-Philippe and Cont, Rama},
  journal={The European Physical Journal B-Condensed Matter and Complex Systems},
  volume={6},
  number={4},
  pages={543--550},
  year={1998},
  publisher={Springer}
}

@article{lian_multivariate_2015,
	title = {Multivariate time-series analysis and diffusion maps},
	volume = {116},
	issn = {01651684},
	url = {https://linkinghub.elsevier.com/retrieve/pii/S016516841500136X},
	doi = {10.1016/j.sigpro.2015.04.003},
	abstract = {Dimensionality reduction in multivariate time series analysis has broad applications, ranging from financial data analysis to biomedical research. However, high levels of ambient noise and various interferences result in nonstationary signals, which may lead to inefficient performance of conventional methods. In this paper, we propose a nonlinear dimensionality reduction framework using diffusion maps on a learned statistical manifold, which gives rise to the construction of a low-dimensional representation of the high-dimensional nonstationary time series. We show that diffusion maps, with affinity kernels based on the Kullback–Leibler divergence between the local statistics of samples, allow for efficient approximation of pairwise geodesic distances. To construct the statistical manifold, we estimate time-evolving parametric distributions by designing a family of Bayesian generative models. The proposed framework can be applied to problems in which the time-evolving distributions (of temporally localized data), rather than the samples themselves, are driven by a low-dimensional underlying process. We provide efficient parameter estimation and dimensionality reduction methodologies, and apply them to two applications: music analysis and epileptic-seizure prediction.},
	language = {en},
	urldate = {2025-02-17},
	journal = {Signal Processing},
	author = {Lian, Wenzhao and Talmon, Ronen and Zaveri, Hitten and Carin, Lawrence and Coifman, Ronald},
	month = nov,
	year = {2015},
	pages = {13--28},
	file = {Lian et al. - 2015 - Multivariate time-series analysis and diffusion ma.pdf:/Users/graemeb/Zotero/storage/7IFR6CFS/Lian et al. - 2015 - Multivariate time-series analysis and diffusion ma.pdf:application/pdf},
}

@book{jeanblanc_mathematical_2009,
	address = {London},
	title = {Mathematical {Methods} for {Financial} {Markets}},
	isbn = {978-1-84882-819-3 978-1-84628-737-4},
	url = {https://link.springer.com/10.1007/978-1-84628-737-4},
	language = {en},
	urldate = {2023-12-14},
	publisher = {Springer London},
	author = {Jeanblanc, Monique and Yor, Marc and Chesney, Marc},
	year = {2009},
	doi = {10.1007/978-1-84628-737-4},
	file = {Jeanblanc et al. - 2009 - Mathematical Methods for Financial Markets.pdf:/Users/graemeb/Zotero/storage/GCNM88RS/Jeanblanc et al. - 2009 - Mathematical Methods for Financial Markets.pdf:application/pdf},
}

@article{coifman2006diffusion,
  title={Diffusion maps},
  author={Coifman, Ronald R and Lafon, Stephane},
  journal={Applied and Computational Harmonic Analysis},
  volume={21},
  number={1},
  pages={5--30},
  year={2006},
  publisher={Elsevier}
}

@article{coifman2008diffusion,
  title={Diffusion maps, reduction coordinates, and low-dimensional representation of stochastic systems},
  author={Coifman, Ronald R and Kevrekidis, Ioannis G and Lafon, St{\'e}phane and Maggioni, Mauro and Nadler, Boaz},
  journal={Multiscale Modeling \& Simulation},
  volume={7},
  number={2},
  pages={842--864},
  year={2008},
  publisher={SIAM}
}

@article{mccracken2016fred,
  title={FRED-MD: A Monthly Database for Macroeconomic Research},
  author={McCracken, Michael W and Ng, Serena},
  journal={Journal of Business \& Economic Statistics},
  volume={34},
  number={4},
  pages={574--589},
  year={2016},
  publisher={Taylor \& Francis}
}

@article{goyal2008comprehensive,
  title={A Comprehensive Look at the Empirical Performance of Equity Premium Prediction},
  author={Goyal, Amit and Welch, Ivo},
  journal={The Review of Financial Studies},
  volume={21},
  number={4},
  pages={1455--1508},
  year={2008},
  publisher={Oxford University Press}
}

@article{haugh2020scenario,
  title={Scenario Analysis for Derivatives Portfolios via Dynamic Factor Models},
  author={Haugh, Martin B and Lacedelli, Octavio R},
  journal={Quantitative Finance},
  volume={20},
  number={9},
  pages={1471--1490},
  year={2020},
  publisher={Taylor \& Francis}
}

@article{shnitzer2022manifold,
  title={Manifold learning with contracting observers for data-driven time-series analysis},
  author={Shnitzer, Tal and Talmon, Ronen},
  journal={Journal of Machine Learning Research},
  volume={23},
  number={77},
  pages={1--41},
  year={2022}
}

@incollection{shnitzer2020diffusion,
  title={Diffusion Maps Kalman Filter for a Class of Systems with Gradient Flows},
  author={Shnitzer, Tal and Talmon, Ronen and Slotine, Jean-Jacques},
  booktitle={The Koopman Operator in Systems and Control: Concepts, Methodologies, and Applications},
  pages={359--382},
  year={2020},
  publisher={Springer}
}

@book{Kailath99,
	address = {},
	title = {Brownian {Motion} and {Stochastic} {Calculus}},
	isbn = {978-1-4612-0949-2},
	url = {https://public.ebookcentral.proquest.com/choice/publicfullrecord.aspx?p=5587189},
	language = {en},
	urldate = {2021-04-26},
	publisher = {Prentice-Hall},
	author = {Kailath, T. and  Sayed, {A.H.} and Hassibi, B.},
	year = {2000},
	note = {},
	file = {},
}

@book{karatzas_brownian_2014,
	address = {New York, NY},
	title = {Brownian {Motion} and {Stochastic} {Calculus}},
	isbn = {978-1-4612-0949-2},
	url = {https://public.ebookcentral.proquest.com/choice/publicfullrecord.aspx?p=5587189},
	language = {en},
	urldate = {2021-04-26},
	publisher = {Springer},
	author = {Karatzas, Ioannis and Shreve, Steven},
	year = {2014},
	note = {},
	file = {Karatzas and Shreve - 2014 - Brownian Motion and Stochastic Calculus..pdf:/Users/graemeb/Zotero/storage/CJRZYZ9X/Karatzas and Shreve - 2014 - Brownian Motion and Stochastic Calculus..pdf:application/pdf},
}

@article{rauch_tung_striebel,
	title = {Maximum likelihood estimates of linear dynamic systems},
	volume = {55},
	copyright = {https://www.elsevier.com/tdm/userlicense/1.0/},
	issn = {03043932},
	shorttitle = {Nowcasting},
	url = {https://linkinghub.elsevier.com/retrieve/pii/S0304393208000652},
	doi = {10.1016/j.jmoneco.2008.05.010},
	language = {en},
	number = {8},
	journal = {AIAA Journal},
	author = {{H. E.} Rauch and F. Tung and {C. T.} Striebel},
	year = {1965},
	pages = {1445–-1450},
}

@article{adamczak_tail_2008,
	title = {A tail inequality for suprema of unbounded empirical processes with applications to {Markov} chains},
	volume = {13},
	issn = {1083-6489},
	url = {https://projecteuclid.org/journals/electronic-journal-of-probability/volume-13/issue-none/A-tail-inequality-for-suprema-of-unbounded-empirical-processes-with/10.1214/EJP.v13-521.full},
	doi = {10.1214/EJP.v13-521},
	abstract = {We present a tail inequality for suprema of empirical processes generated by variables with ﬁnite ψα norms and apply it to some geometrically ergodic Markov chains to derive similar estimates for empirical processes of such chains, generated by bounded functions. We also obtain a bounded diﬀerence inequality for symmetric statistics of such Markov chains .},
	language = {en},
	urldate = {2025-05-12},
	journal = {Electronic Journal of Probability},
	author = {Adamczak, Radoslaw},
	month = jan,
	year = {2008}
}

@article{giannone_nowcasting_2008,
	title = {Nowcasting: {The} real-time informational content of macroeconomic data},
	volume = {55},
	copyright = {https://www.elsevier.com/tdm/userlicense/1.0/},
	issn = {03043932},
	shorttitle = {Nowcasting},
	url = {https://linkinghub.elsevier.com/retrieve/pii/S0304393208000652},
	doi = {10.1016/j.jmoneco.2008.05.010},
	abstract = {A formal method is developed for evaluating the marginal impact that intra-monthly data releases have on current-quarter forecasts (nowcasts) of real gross domestic product (GDP) growth. The method can track the real-time ﬂow of the type of information monitored by central banks because it can handle large data sets with staggered data-release dates. Each time new data are released, the nowcasts are updated on the basis of progressively larger data sets that, reﬂecting the unsynchronized datarelease dates, have a ‘‘jagged edge’’ across the most recent months.},
	language = {en},
	number = {4},
	urldate = {2025-02-27},
	journal = {Journal of Monetary Economics},
	author = {Giannone, Domenico and Reichlin, Lucrezia and Small, David},
	month = may,
	year = {2008},
	pages = {665--676},
	file = {Giannone et al. - 2008 - Nowcasting The real-time informational content of.pdf:/Users/graemeb/Zotero/storage/5LKZXEY3/Giannone et al. - 2008 - Nowcasting The real-time informational content of.pdf:application/pdf},
}

@article{douc_consistency_2011,
	title = {Consistency of the maximum likelihood estimator for general hidden {Markov} models},
	volume = {39},
	issn = {0090-5364},
	url = {https://projecteuclid.org/journals/annals-of-statistics/volume-39/issue-1/Consistency-of-the-maximum-likelihood-estimator-for-general-hidden-Markov/10.1214/10-AOS834.full},
	doi = {10.1214/10-AOS834},
	language = {en},
	number = {1},
	urldate = {2025-05-23},
	journal = {The Annals of Statistics},
	author = {Douc, Randal and Moulines, Eric and Olsson, Jimmy and Van Handel, Ramon},
	month = feb,
	year = {2011}
}

@book{bakry_analysis_2014,
	address = {Cham},
	series = {Grundlehren der mathematischen {Wissenschaften}},
	title = {Analysis and {Geometry} of {Markov} {Diffusion} {Operators}},
	volume = {348},
	isbn = {978-3-319-00226-2 978-3-319-00227-9},
	url = {http://link.springer.com/10.1007/978-3-319-00227-9},
	language = {en},
	urldate = {2021-04-26},
	publisher = {Springer International Publishing},
	author = {Bakry, Dominique and Gentil, Ivan and Ledoux, Michel},
	year = {2014},
	doi = {10.1007/978-3-319-00227-9},
	file = {Bakry et al. - 2014 - Analysis and Geometry of Markov Diffusion Operator.pdf:/Users/graemeb/Zotero/storage/G3H2KSHT/Bakry et al. - 2014 - Analysis and Geometry of Markov Diffusion Operator.pdf:application/pdf},
}

@article{chesney_estimating_1995,
	title = {Estimating the instantaneous volatility and covariance of risky assets},
	volume = {11},
	issn = {1099-0747},
	url = {https://onlinelibrary.wiley.com/doi/abs/10.1002/asm.3150110107},
	doi = {10.1002/asm.3150110107},
	abstract = {Using the Mihlstein approximation for solutions to stochastic differential equations and the stochastic calculus an estimate for the volatility is obtained. The estimate is also valid for stochastic, Markov, volatilities. If the process has jumps, these reduce the previous estimate. The instantaneous covariance of two risky assets is also calculated.},
	language = {en},
	number = {1},
	urldate = {2025-01-30},
	journal = {Applied Stochastic Models and Data Analysis},
	author = {Chesney, Marc and Elliott, Robert J.},
	year = {1995},
	note = {},
	keywords = {lognormal diffusion, point estimate, volatility},
	pages = {51--58},
	file = {Full Text PDF:/Users/graemeb/Zotero/storage/3HNFV6JN/Chesney and Elliott - 1995 - Estimating the instantaneous volatility and covari.pdf:application/pdf;Snapshot:/Users/graemeb/Zotero/storage/TYAJD36Y/asm.html:text/html},
}

@article{singer2008nonlinear,
  title={Non-linear independent component analysis with diffusion maps},
  author={Singer, Amit and Coifman, Ronald R},
  journal={Applied and Computational Harmonic Analysis},
  volume={25},
  number={2},
  pages={226--239},
  year={2008},
  publisher={Elsevier}
}

@misc{chen2020internet,
  author       = {Chen, Luyang and Pelger, Markus and Zhu, Jason},
  title        = {Internet Appendix for Deep Learning in Asset Pricing},
  year         = {2020},
  month        = {sep},
  note         = {Available at SSRN: https://ssrn.com/abstract=3600206},
  url          = {https://ssrn.com/abstract=3600206},
  institution  = {Stanford University}
}

@article{chen1986economic,
  title={Economic Forces and the Stock Market},
  author={Chen, Nai-Fu and Roll, Richard and Ross, Stephen A},
  journal={The Journal of Business},
  volume={59},
  number={3},
  pages={383--403},
  year={1986},
  publisher={The University of Chicago Press}
}

@techreport{federal_reserve_2022,
  title={2022 Stress Test Scenarios},
  author={{Board of Governors of the Federal Reserve System}},
  year={2022},
  month={February},
  url={https://www.federalreserve.gov/publications/dodd-frank-act-stress-test-publications.htm},
  note={Board of Governors of the Federal Reserve System publication},
}

@article{berry_variable_2016,
	title = {Variable bandwidth diffusion kernels},
	volume = {40},
	issn = {10635203},
	url = {https://linkinghub.elsevier.com/retrieve/pii/S1063520315000020},
	doi = {10.1016/j.acha.2015.01.001},
	language = {en},
	number = {1},
	urldate = {2025-03-12},
	journal = {Applied and Computational Harmonic Analysis},
	author = {Berry, Tyrus and Harlim, John},
	month = jan,
	year = {2016},
	pages = {68--96}
}

@book{boucheron_concentration_2013,
	title = {Concentration {Inequalities}: {A} {Nonasymptotic} {Theory} of {Independence}},
	isbn = {978-0-19-953525-5},
	shorttitle = {Concentration {Inequalities}},
	url = {https://academic.oup.com/book/26549},
	language = {en},
	urldate = {2025-04-13},
	publisher = {Oxford University Press},
	author = {Boucheron, Stéphane and Lugosi, Gábor and Massart, Pascal},
	month = feb,
	year = {2013},
	doi = {10.1093/acprof:oso/9780199535255.001.0001}
}

@article{Canessa2001,
  author    = {E. Canessa},
  title     = {Langevin Dynamics of Financial Systems: A Second-Order Analysis},
  journal   = {The European Physical Journal B},
  volume    = {22},
  pages     = {123--127},
  year      = {2001},
  publisher = {EDP Sciences, Società Italiana di Fisica, Springer-Verlag},
  doi       = {10.1007/s100510170995},
  url       = {https://doi.org/10.1007/s100510170995}
}

@article{kupiec1995,
  author  = {Paul H. Kupiec},
  title   = {Techniques for Verifying the Accuracy of Risk Measurement Models},
  journal = {The Journal of Derivatives},
  volume  = {3},
  number  = {2},
  pages   = {73--84},
  year    = {1995},
  doi     = {10.3905/jod.1995.407942}
}

@article{singer_graph_2006,
	title = {From graph to manifold {Laplacian}: {The} convergence rate},
	volume = {21},
	copyright = {https://www.elsevier.com/tdm/userlicense/1.0/},
	issn = {10635203},
	shorttitle = {From graph to manifold {Laplacian}},
	url = {https://linkinghub.elsevier.com/retrieve/pii/S1063520306000510},
	doi = {10.1016/j.acha.2006.03.004},
	abstract = {The convergence of the discrete graph Laplacian to the continuous manifold Laplacian in the limit of sample size N → ∞ while the kernel bandwidth ε → 0, is the justiﬁcation for the success of Laplacian based algorithms in machine learning, such as dimensionality reduction, semi-supervised learning and spectral clustering. In this paper we improve the convergence rate of the variance term recently obtained by Hein et al. [From graphs to manifolds—Weak and strong pointwise consistency of graph Laplacians, in: P. Auer, R. Meir (Eds.), Proc. 18th Conf. Learning Theory (COLT), Lecture Notes Comput. Sci., vol. 3559, SpringerVerlag, Berlin, 2005, pp. 470–485], improve the bias term error, and ﬁnd an optimal criteria to determine the parameter ε given N . © 2006 Elsevier Inc. All rights reserved.},
	language = {en},
	number = {1},
	urldate = {2024-12-16},
	journal = {Applied and Computational Harmonic Analysis},
	author = {Singer, A.},
	month = jul,
	year = {2006},
	pages = {128--134},
	file = {Singer - 2006 - From graph to manifold Laplacian The convergence .pdf:/Users/graemeb/Zotero/storage/KPK2UW4K/Singer - 2006 - From graph to manifold Laplacian The convergence .pdf:application/pdf},
}

@article{cattiaux_central_2012,
	title = {Central limit theorems for additive functionals of ergodic {Markov} diﬀusions processes},
	volume = {9},
	abstract = {We revisit central limit theorems for additive functionals of ergodic Markov diﬀusion processes. Translated in the language of partial diﬀerential equations of evolution, they appear as diﬀusion limits in the asymptotic analysis of Fokker-Planck type equations. We focus on the square integrable framework, and we provide tractable conditions on the inﬁnitesimal generator, including degenerate or anomalously slow diﬀusions. We take advantage on recent developments in the study of the trend to the equilibrium of ergodic diﬀusions. We discuss examples and formulate open problems.},
	language = {en},
	number = {2},
	journal = {ALEA},
	author = {Cattiaux, Patrick and Chafaı, Djalil and Guillin, Arnaud},
	year = {2012},
	pages = {337--382},
	file = {Cattiaux et al. - Central limit theorems for additive functionals of.pdf:/Users/graemeb/Zotero/storage/2N2V43AC/Cattiaux et al. - Central limit theorems for additive functionals of.pdf:application/pdf},
}

@article{kipnis_central_1986,
	title = {Central limit theorem for additive functionals of reversible {Markov} processes and applications to simple exclusions},
	volume = {104},
	copyright = {http://www.springer.com/tdm},
	issn = {0010-3616, 1432-0916},
	url = {http://link.springer.com/10.1007/BF01210789},
	doi = {10.1007/BF01210789},
	abstract = {We prove a functional central limit theorem for additive functionals of stationary reversible ergodic Markov chains under virtually no assumptions other than the necessary ones. We use these results to study the asymptotic behavior of a tagged particle in an infinite particle system performing simple excluded random walk.},
	language = {en},
	number = {1},
	urldate = {2025-04-02},
	journal = {Communications in Mathematical Physics},
	author = {Kipnis, C. and Varadhan, S. R. S.},
	month = mar,
	year = {1986},
	pages = {1--19},
	file = {Kipnis and Varadhan - 1986 - Central limit theorem for additive functionals of .pdf:/Users/graemeb/Zotero/storage/RNA4RWGQ/Kipnis and Varadhan - 1986 - Central limit theorem for additive functionals of .pdf:application/pdf},
}

@article{cuny_central_2012,
	title = {Central {Limit} {Theorem} {Started} at a {Point} for {Stationary} {Processes} and {Additive} {Functionals} of {Reversible} {Markov} {Chains}},
	volume = {25},
	copyright = {http://www.springer.com/tdm},
	issn = {0894-9840, 1572-9230},
	url = {http://link.springer.com/10.1007/s10959-010-0321-8},
	doi = {10.1007/s10959-010-0321-8},
	abstract = {In this paper we study the almost sure central limit theorem started at a point for additive functionals of a stationary and ergodic Markov chain via a martingale approximation in the almost sure sense. Some of the results provide sufﬁcient conditions for general stationary sequences. We use these results to study the quenched CLT for additive functionals of reversible Markov chains.},
	language = {en},
	number = {1},
	urldate = {2025-04-15},
	journal = {Journal of Theoretical Probability},
	author = {Cuny, Christophe and Peligrad, Magda},
	month = mar,
	year = {2012},
	pages = {171--188},
	file = {Cuny and Peligrad - 2012 - Central Limit Theorem Started at a Point for Stati.pdf:/Users/graemeb/Zotero/storage/DZXHVCL6/Cuny and Peligrad - 2012 - Central Limit Theorem Started at a Point for Stati.pdf:application/pdf},
}

@article{derriennic_central_2003,
	title = {The central limit theorem for {Markov} chains started at a point},
	volume = {125},
	issn = {0178-8051, 1432-2064},
	url = {https://link.springer.com/10.1007/s004400200215},
	doi = {10.1007/s004400200215},
	language = {en},
	number = {1},
	urldate = {2025-04-02},
	journal = {Probability Theory and Related Fields},
	author = {Derriennic, Yves and Lin, Michael},
	month = jan,
	year = {2003},
	pages = {73--76}
}

@article{derriennic_central_2001,
	title = {The central limit theorem for {Markov} chains with normal transition operators, started at a point},
	volume = {119},
	issn = {0178-8051, 1432-2064},
	url = {https://link.springer.com/10.1007/PL00008769},
	doi = {10.1007/PL00008769},
	abstract = {The central limit theorem and the invariance principle, proved by Kipnis and Varadhan for reversible stationary ergodic Markov chains with respect to the stationary law, are established with respect to the law of the chain started at a ﬁxed point, almost surely, under a slight reinforcing of their spectral assumption. The result is valid also for stationary ergodic chains whose transition operator is normal.},
	language = {en},
	number = {4},
	urldate = {2025-04-02},
	journal = {Probability Theory and Related Fields},
	author = {Derriennic, Yves and Lin, Michael},
	month = apr,
	year = {2001},
	pages = {508--528}
}

@article{fan_hoeffdings_2021,
	title = {Hoeffding’s {Inequality} for {General} {Markov} {Chains} and {Its} {Applications} to {Statistical} {Learning}},
	volume = {22},
	abstract = {This paper establishes Hoeffding’s lemma and inequality for bounded functions of general-statespace and not necessarily reversible Markov chains. The sharpness of these results is characterized by the optimality of the ratio between variance proxies in the Markov-dependent and independent settings. The boundedness of functions is shown necessary for such results to hold in general. To showcase the usefulness of the new results, we apply them for non-asymptotic analyses of MCMC estimation, respondent-driven sampling and high-dimensional covariance matrix estimation on time series data with a Markovian nature. In addition to statistical problems, we also apply them to study the time-discounted rewards in econometric models and the multi-armed bandit problem with Markovian rewards arising from the ﬁeld of machine learning.},
	language = {en},
	journal = {Journal of Machine Learning Research},
	author = {Fan, Jianqing and Jiang, Bai and Sun, Qiang},
	year = {2021},
	pages = {1--35},
	file = {Fan et al. - Hoeffding’s Inequality for General Markov Chains a.pdf:/Users/graemeb/Zotero/storage/MBJA4S7L/Fan et al. - Hoeffding’s Inequality for General Markov Chains a.pdf:application/pdf},
}

@article{rudolf_explicit_2012,
	title = {Explicit error bounds for {Markov} chain {Monte} {Carlo}},
	volume = {485},
	issn = {0012-3862, 1730-6310},
	url = {http://arxiv.org/abs/1108.3201},
	doi = {10.4064/dm485-0-1},
	abstract = {We prove explicit, i.e. non-asymptotic, error bounds for Markov chain Monte Carlo methods. The problem is to compute the expectation of a function f with respect to a measure \{{\textbackslash}pi\}. Different convergence properties of Markov chains imply different error bounds. For uniformly ergodic and reversible Markov chains we prove a lower and an upper error bound with respect to the L2 -norm of f . If there exists an L2 -spectral gap, which is a weaker convergence property than uniform ergodicity, then we show an upper error bound with respect to the Lp -norm of f for p {\textgreater} 2. Usually a burn-in period is an efficient way to tune the algorithm. We provide and justify a recipe how to choose the burn-in period. The error bounds are applied to the problem of the integration with respect to a possibly unnormalized density. More precise, we consider the integration with respect to log-concave densities and the integration over convex bodies. By the use of the Metropolis algorithm based on a ball walk and the hit-and-run algorithm it is shown that both problems are polynomial tractable.},
	language = {en},
	urldate = {2025-04-11},
	journal = {Dissertationes Mathematicae},
	author = {Rudolf, Daniel},
	year = {2012},
	keywords = {Mathematics - Numerical Analysis, Mathematics - Probability},
	pages = {1--93}
}

@article{miasojedow_hoeffdings_2014,
	title = {Hoeffding’s inequalities for geometrically ergodic {Markov} chains on general state space},
	volume = {87},
	issn = {01677152},
	url = {https://linkinghub.elsevier.com/retrieve/pii/S0167715214000248},
	doi = {10.1016/j.spl.2014.01.013},
	language = {en},
	urldate = {2025-04-14},
	journal = {Statistics \& Probability Letters},
	author = {Miasojedow, Błażej},
	month = apr,
	year = {2014},
	pages = {115--120}
}

@article{roberts_general_2004,
	title = {General state space {Markov} chains and {MCMC} algorithms},
	volume = {1},
	issn = {1549-5787},
	url = {https://projecteuclid.org/journals/probability-surveys/volume-1/issue-none/General-state-space-Markov-chains-and-MCMC-algorithms/10.1214/154957804100000024.full},
	doi = {10.1214/154957804100000024},
	abstract = {This paper surveys various results about Markov chains on general (non-countable) state spaces. It begins with an introduction to Markov chain Monte Carlo (MCMC) algorithms, which provide the motivation and context for the theory which follows. Then, suﬃcient conditions for geometric and uniform ergodicity are presented, along with quantitative bounds on the rate of convergence to stationarity. Many of these results are proved using direct coupling constructions based on minorisation and drift conditions. Necessary and suﬃcient conditions for Central Limit Theorems (CLTs) are also presented, in some cases proved via the Poisson Equation or direct regeneration constructions. Finally, optimal scaling and weak convergence results for Metropolis-Hastings algorithms are discussed. None of the results presented is new, though many of the proofs are. We also describe some Open Problems.},
	language = {en},
	urldate = {2025-04-12},
	journal = {Probability Surveys},
	author = {Roberts, Gareth O. and Rosenthal, Jeffrey S.},
	year = {2004},
	pages = {20--71},
	file = {Roberts and Rosenthal - 2004 - General state space Markov chains and MCMC algorit.pdf:/Users/graemeb/Zotero/storage/HX8BTGEJ/Roberts and Rosenthal - 2004 - General state space Markov chains and MCMC algorit.pdf:application/pdf},
}

@article{singer_detecting_2009,
	title = {Detecting intrinsic slow variables in stochastic dynamical systems by anisotropic diffusion maps},
	volume = {106},
	issn = {0027-8424, 1091-6490},
	url = {https://pnas.org/doi/full/10.1073/pnas.0905547106},
	doi = {10.1073/pnas.0905547106},
	abstract = {Finding a reduced model for dynamical systems with a large number of degrees of freedom is of great importance in many ﬁelds. Dimensional reduction methods often use similarity measures between diﬀerent states of the dynamical system to reveal its low dimensional structure. Those methods are limited when the similarity measure does not take into account the time labeling of the states. We encode the time dependence into an anisotropic similarity measure using short bursts of local simulations. The resulting leading eigenvectors of the anisotropic diﬀusion map approximate the eigenfunctions of the Laplacian over the manifold corresponding to the dynamically meaningful slowly varying coordinates. We demonstrated the usefulness of the ADM in analyzing dynamical systems by its successful recovery of meaningful coordinates in the particular case of multiscale chemical reactions.},
	language = {en},
	number = {38},
	urldate = {2025-04-07},
	journal = {Proceedings of the National Academy of Sciences},
	author = {Singer, Amit and Erban, Radek and Kevrekidis, Ioannis G. and Coifman, Ronald R.},
	month = sep,
	year = {2009},
	pages = {16090--16095}
}

@article{paulin_concentration_2015,
	title = {Concentration inequalities for {Markov} chains by {Marton} couplings and spectral methods},
	volume = {20},
	issn = {1083-6489},
	doi = {10.1214/EJP.v20-4039},
	abstract = {We prove a version of McDiarmid’s bounded differences inequality for Markov chains, with constants proportional to the mixing time of the chain. We also show variance bounds and Bernstein-type inequalities for empirical averages of Markov chains. In the case of non-reversible chains, we introduce a new quantity called the “pseudo spectral gap", and show that it plays a similar role for non-reversible chains as the spectral gap plays for reversible chains.},
	language = {en},
	number = {79},
	journal = {Electronic Journal of Probability},
	author = {Paulin, Daniel},
	year = {2015},
	pages = {1--32}
}
\newpage

\appendix

\renewcommand{\thesubsection}{\Alph{section}.\arabic{subsection}}
\setcounter{table}{0}
\setcounter{figure}{0}
\renewcommand{\thetable}{\Alph{section}.\arabic{table}}
\renewcommand{\thefigure}{\Alph{section}.\arabic{figure}}

\begin{center}
\bf{\huge Appendices}
\end{center}

\section{Diffusion Kalman Filter}
\label{sec:appa}

Assuming that the gradients in \eqref{eq:phiSDE} are (approximately) constant, \citet{shnitzer2020diffusion}
define a state-space model for the diffusion coordinates using a Kalman filter ($x_t$ is the measurement and $\psi_t$ diffusion coordinates):
\[\psi_t = (I - \Lambda)\psi_{t-1} + {w}_{t-1}\]
\[x_t = \mathbf{H} \psi_t + {v}_t\]
where $\Lambda$ is the transition matrix for the linear dynamics containing the mapped eigenvalues \newline ${-\veps^{-1}\log \kappa_1,\dots -\veps^{-1}\log \kappa_\ell}$ on its diagonal, and $\mathbf{H}$ is the lifting operator. Note that ${w}$ and ${v}$ are Gaussian noises with covariance matrices $\mathbf{Q}$ and $\mathbf{V}$ respectively.
Denote $\hat{\psi}_t = \mathbb{E} [ \psi_t | \mathcal{F}_{t-1}]$, where $\mathcal{F}_{t-1} = \left \{x_0,\ldots,x_{t-1} \right \}$, and  
$P_t = \Cov(\psi_t | \mathcal{F}_{t-1})$

The Diffusion Kalman Filter (DKF) update equations are (with $\mathbf{A}=I - \Lambda$):
\[\hat{\psi}_t = \mathbf{A}\hat{\psi}_{t-1} + K_t \left( x_t - \mathbf{H} \mathbf{A}\hat{\psi}_{t-1} \right)\]
\[P_t = (I - K_t \mathbf{H}) \left( \mathbf{A} P_t \mathbf{A}^T + \mathbf{Q} \right)\]
\[K_t = \left( \mathbf{A} P_{t-1} \mathbf{A}^T + \mathbf{Q} \right) \mathbf{H}^T \nonumber \times \left( \mathbf{H} \mathbf{A} P_{t-1} \mathbf{A}^T \mathbf{H}^T + \mathbf{H} \mathbf{Q} \mathbf{H}^T + \mathbf{V} \right)^{-1}\]

\section{Kalman Filter and EM Algorithm Details}
\label{app:kalman_em_details}

\subsection{State-Space Equations for JDKF}

We provide the complete Kalman filter equations for the Joint Diffusion Kalman Filter state-space model:
\begin{equation}
\begin{aligned}
\text{State equation:} \quad & \psi_{t+1} = \mathbf{A} \psi_t + w_t, \quad w_t \sim \mathcal{N}(0, \mathbf{Q})\\
\text{Observation equation:} \quad & z_t = \mathbf{H}^z \psi_t + v_t, \quad v_t \sim \mathcal{N}(0, \mathbf{R})
\end{aligned}
\end{equation}
where $z_t = [x_t; y_t]$, $\mathbf{H}^z = [\mathbf{H}^x; \mathbf{H}^y]$, and $\mathbf{R} = \begin{bmatrix} \mathbf{R}_x & \mathbf{R}_{xy} \\ \mathbf{R}^\top_{xy} & \mathbf{R}_y \end{bmatrix}$.

We begin with the forward pass (Kalman Filter).  Initialize with $\hat{\psi}_{0|0}$ and $P_{0|0}$. For $t = 1, \ldots, T$, we have the followins steps:

\subsection{E-Step: Kalman Filter with Joint Observations}
\begin{enumerate}
    \item {Prediction Step:}
    \begin{align*}
    \hat{\psi}_{t|t-1} &= \mathbf{A} \hat{\psi}_{t-1|t-1}\\
    P_{t|t-1} &= \mathbf{A} P_{t-1|t-1} \mathbf{A}^\top + \mathbf{Q}
    \end{align*}
    
    \item Innovation:
    \begin{align*}
    e_t &= z_t - \mathbf{H}^z \hat{\psi}_{t|t-1}\\
    S_t &= \mathbf{H}^z P_{t|t-1} {\mathbf{H}^z}^\top + \mathbf{R}
    \end{align*}
    
    \item {Kalman Gain:}
    \begin{align*}
    K_t = P_{t|t-1} {\mathbf{H}^z}^\top S_t^{-1}
    \end{align*}
    
    \item {Update Step:}
    \begin{align*}
    \hat{\psi}_{t|t} &= \hat{\psi}_{t|t-1} + K_t e_t\\
    P_{t|t} &= (I - K_t \mathbf{H}^z) P_{t|t-1}
    \end{align*}
    
    \item {Log-Likelihood Contribution:}
    \begin{align*}
    \mathcal{L}^{(t)} = -\frac{1}{2} \left( \log |S_t| + e_t^\top S_t^{-1} e_t + (m+n) \log 2\pi \right)
    \end{align*}
\end{enumerate}

Next, we proceed with the backward pass (RTS Smoother). Initialize at $T$ with $\hat{\psi}_{T|T}$ and $P_{T|T}$ from the forward pass. For $t = T-1, \ldots, 1$, we have

\begin{enumerate}
    \item {Smoother Gain:}
    \begin{align*}
    J_t = P_{t|t} \mathbf{A}^\top P_{t+1|t}^{-1}
    \end{align*}
    
    \item {Smoothed State Estimate:}
    \begin{align*}
    \hat{\psi}_{t|T} = \hat{\psi}_{t|t} + J_t \left( \hat{\psi}_{t+1|T} - \hat{\psi}_{t+1|t} \right)
    \end{align*}
    
    \item {Smoothed Covariance:}
    \begin{align*}
    P_{t|T} = P_{t|t} + J_t (P_{t+1|T} - P_{t+1|t}) J_t^\top
    \end{align*}
\end{enumerate}

\subsection{M-Step: Parameter Updates}

Given smoothed state estimates $\{\hat{\psi}_{t|T}\}_{t=1}^T$ from the E-step, the M-step updates are as follows:

\begin{enumerate}
    \item {Update $\mathbf{R}_x$:}
    \begin{align*}
    \mathbf{R}_x = \frac{1}{T} \sum_{t=1}^{T} \left( x_t - \mathbf{H}^x \hat{\psi}_{t|T} \right) \left( x_t - \mathbf{H}^x \hat{\psi}_{t|T} \right)^\top
    \end{align*}
    
    \item {Update $\mathbf{R}_y$:}
    \begin{align*}
    \mathbf{R}_y = \frac{1}{T} \sum_{t=1}^{T} \left( y_t - \mathbf{H}^y \hat{\psi}_{t|T} \right) \left( y_t - \mathbf{H}^y \hat{\psi}_{t|T} \right)^\top
    \end{align*}
    
    \item {Update $\mathbf{R}_{xy}$:}
    \begin{align*}
    \mathbf{R}_{xy} = \frac{1}{T} \sum_{t=1}^{T} \left( x_t - \mathbf{H}^x \hat{\psi}_{t|T} \right) \left( y_t - \mathbf{H}^y \hat{\psi}_{t|T} \right)^\top
    \end{align*}
\end{enumerate}

Consider the linear Multi-Factor Case. In this case,  $\mathbf{H}^y = \mathbf{B}\mathbf{H}^x$, and we additionally update:
\begin{align*}
\mathbf{B} = \left( \sum_{t=1}^{T} y_t (\mathbf{H}^x \hat{\psi}_{t|T})^\top \right) \left( \sum_{t=1}^T \mathbf{H}^x \hat{\psi}_{t|T} (\mathbf{H}^x \hat{\psi}_{t|T})^\top \right)^{-1}
\end{align*}

\section{Joint Diffusion Kalman Filter Explanatory Power}
\label{sec:appb}

\setcounter{table}{0}
\setcounter{figure}{0}
\renewcommand{\thetable}{C.\arabic{table}}
\renewcommand{\thefigure}{C.\arabic{figure}}
The figures below show respectively the individual $R^2$ from regressing each time series of stock returns against the full and reduced sets of factors. Note the significant loss in explanatory power from using the DKF diffusion coordinates, this is to be expected since their estimation process does not jointly consider the variation in the dependent variables.

\begin{figure}[H]
    \centering
    \includegraphics[width=.8\linewidth, trim=0cm 0cm 0cm .8cm, clip]{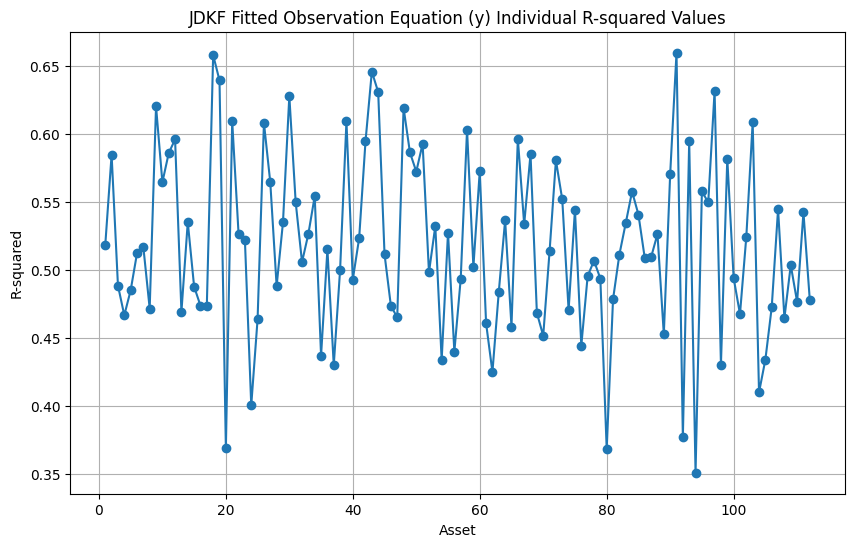}
    \caption{$R^2$ from regressing 132 individual stock returns on JDKF diffusion coordinates ($\ell=39$) via the fitted equation $y_t = \mathbf{B}\mathbf{H}^x\psi_t$. Median $R^2 = 0.52$, demonstrating that dimensionality reduction from 132 macroeconomic factors to 39 coordinates retains substantial predictive information. This contrasts with standard DKF (Figures B.2--B.4), which loses explanatory power by not jointly modeling covariates and responses.}
    \label{fig:regular_figure}
\end{figure}

\begin{landscape}
\begin{figure}[H]
    \centering
    \includegraphics[width=1.3\textwidth, trim=0cm 0cm 0cm 1.7cm, clip]{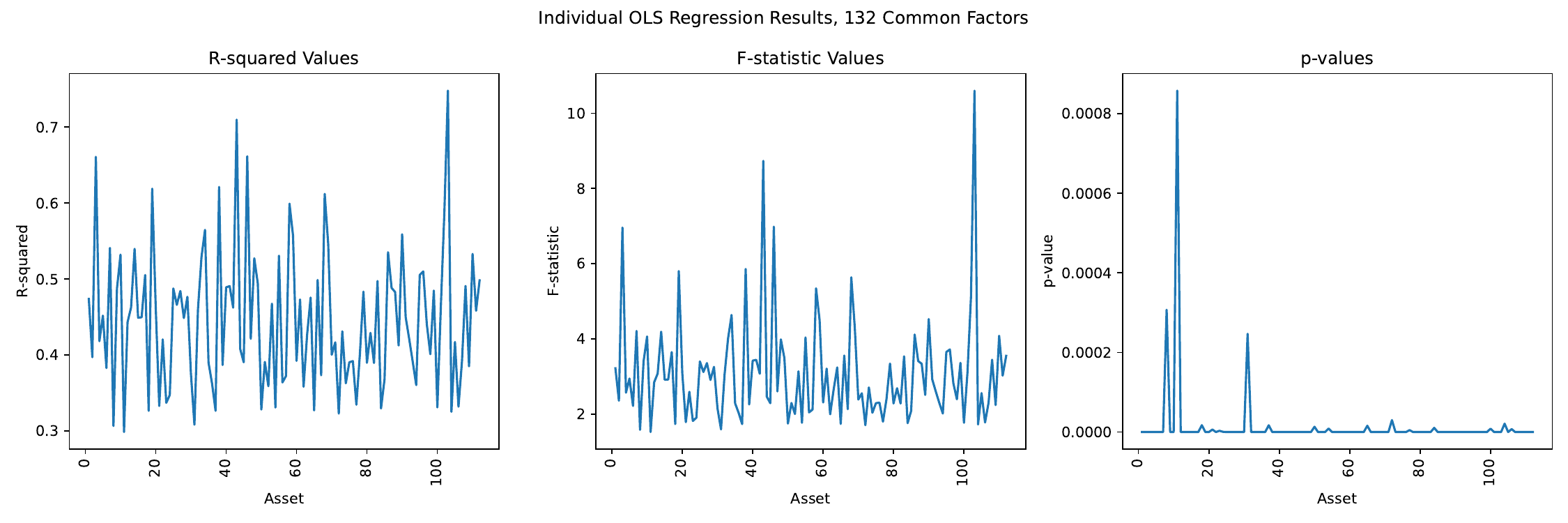} \\ 
    \caption{Plot of the $R^2$ from the OLS Regressions of Individual Stocks vs Original Set of 132 Macro Variables}
\medskip
    \includegraphics[width=1.3\textwidth, trim=0cm 0cm 0cm 1.7cm, clip]{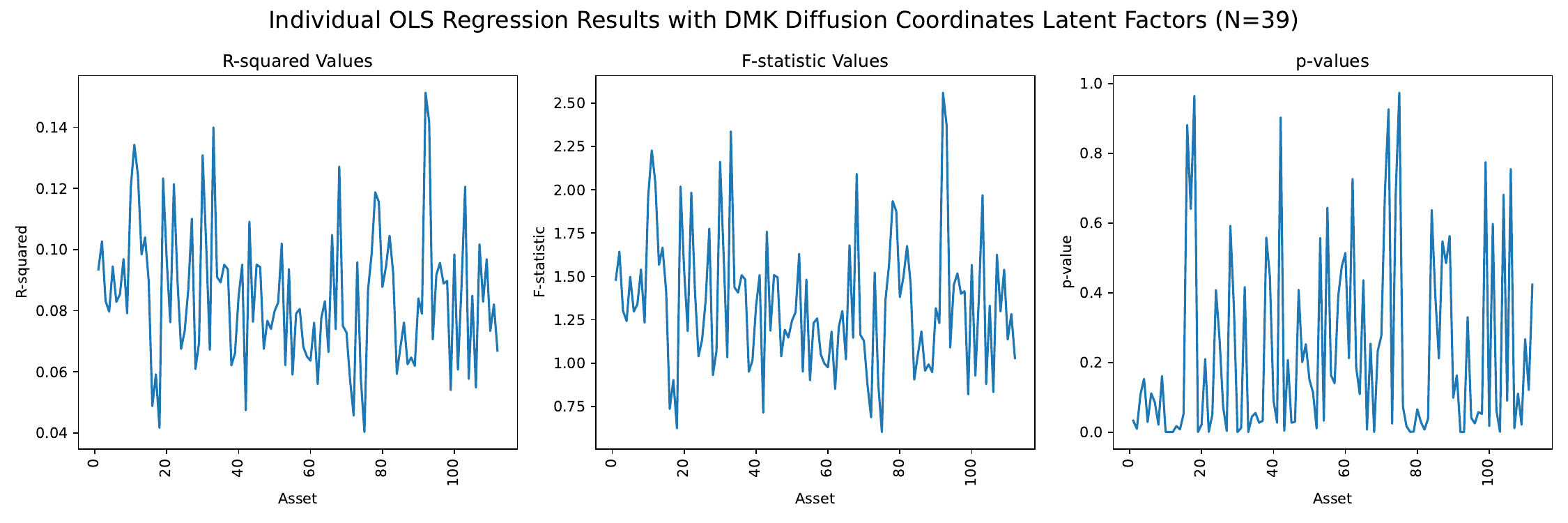} \\ 
    \caption{Plot of the $R^2$ from OLS Regressions of Individual Stocks vs Reduced Set of 39 DKF Diffusion Coordinates}
\end{figure}
\newpage

\begin{figure}   
    \centering    \includegraphics[width=1.3\textwidth, trim=0cm 0cm 0cm 1.5cm, clip]{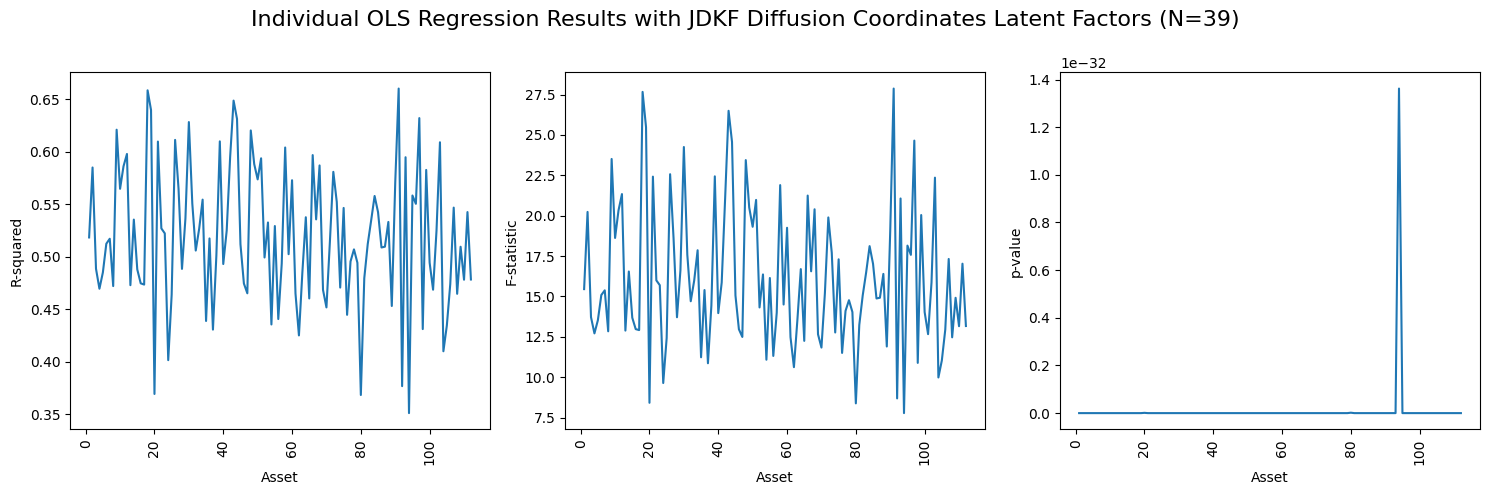} \\ 
    \caption{Results from regressing the individual stock returns against 39 JDKF diffusion coordinates: $y_{j,t} = \beta_j^\top \hat{\psi}_{t, \textrm{JDKF}} + \varepsilon_{j,t}$. {Left}: $R^2$ values range from 0.35 to 0.65, establishing the baseline explanatory power before dimensionality reduction. {Center}: F-statistics are uniformly high, indicating strong joint significance of the factor set. {Right}: All p-values are near zero, confirming statistical significance. These  results provide a benchmark for evaluating how much explanatory power is retained by reduced-dimension embeddings in Figures B.2--B.4. 
    }
\end{figure}
\end{landscape}

\section{Second Moment Computation}
\label{sec:appc}

\setcounter{table}{0}
\setcounter{figure}{0}
\renewcommand{\thetable}{D.\arabic{table}}
\renewcommand{\thefigure}{D.\arabic{figure}}
We provide the full computation for the second moment of $V$ from the proof of Theorem \ref{thm:concentration}. A Taylor approximation following \citet[Equation (28)]{singer2008nonlinear} yields
\begin{align*}
&\E_\mu [u]=(2\pi \veps)^{d/2} \{p(w) f(w)+\frac{\veps}{2}[E(w) p(w) f(w)+ \Delta (p(w) f(w))]+O(\veps^2) \}\\
&\E_\mu [v]=(2\pi \veps)^{d/2} \{p(w)+\frac{\veps}{2}[E(w) p(w) + \Delta p(w)]+O(\veps^2) \}\\
&\E_\mu [u^2]=(\pi \veps)^{d/2} \{p(w) f^2(w)+\frac{\veps}{4}[E(w) p(w) f(w)^2+ \Delta (p(w) f^2(w))]+O(\veps^2) \}\\
&\E_\mu [v^2]=(\pi \veps)^{d/2} \{p(w)+\frac{\veps}{4}[E(w) p(w) + \Delta p(w)]+O(\veps^2) \}\\
&\E_\mu [u v]=(\pi \veps)^{d/2} \{p(w) f(w) +\frac{\veps}{4}[E(w) p(w) f(w)  + \Delta (p(w) f(w))]+O(\veps^2) \}
\end{align*}
where $E$ is a scalar function (see \citet[Paragraph following Equation (28)]{singer2008nonlinear}) which will cancel out in the following computation. In what follows, we will suppress dependence on $w$, and all expectations will be taken with respect to the invariant measure $\mu$.

Now, we define the variable $Y_j$ to be the numerator of $V(\theta(t_j))$:
\[Y_j=\E [v]u(\theta(t_j))-\E [u]v(\theta(t_j))+\alpha \E [v](\E [v]-v(\theta(t_j)))\]
The second moment is given by 
\begin{align*}
\E[Y_j^2]&=\E[v]^2\E[u^2]-2\E[v]\E[u]\E[uv]+\E[u]^2\E[v^2]\\
&\quad+2\alpha \E[v]\lb\{\E[u]\E[v^2]- \E[v]\E[uv]\rb\}+\alpha^2\E[v]^2\lb\{\E[v^2]-\E[v]^2\rb\}
\end{align*}
We are interested in the regime where $\alpha \ll\veps$, hence we neglect the terms with $\alpha$ and $\alpha^2$ when we substitute in the expressions for the moments of $u$ and $v$. We then have:
{\small{
\begin{align*}
&\E Y_j^2=\\
&(2\pi \veps)^{d} \{p+\frac{\veps}{2}[E p + \Delta p]+O(\veps^2) \}^2 (\pi \veps)^{d/2} \{p f^2+\frac{\veps}{4}[E p f^2 + \Delta (p f^2)]+O(\veps^2) \}\\&-2(2\pi \veps)^{d}(\pi \veps)^{d/2} \{p f+\frac{\veps}{2}[E p f+ \Delta (p f)]+O(\veps^2) \} \{p+\frac{\veps}{2}[E p + \Delta p]+O(\veps^2) \} \{p f +\frac{\veps}{4}[E p f  + \Delta (p f)]+O(\veps^2) \}\\
&+(2\pi \veps)^{d}(\pi \veps)^{d/2} \{p f+\frac{\veps}{2}[E p f+ \Delta (p f)]+O(\veps^2) \}^2 \{p+\frac{\veps}{4}[E p + \Delta p]+O(\veps^2) \} 
\end{align*}
}}
To shorten notation, set $H(g)=Eg+\Delta g$ for any given $g$. We then have:
\begin{align*}
\E Y_j^2&=2^d (\pi \veps)^{3d/2}\lb(\{p+\frac{\veps}{2} H (p)\}^2 \{p f^2+\frac{\veps}{4} H(p f^2)\}-2 \{p+\frac{\veps}{2}H(p)\} \{pf+\frac{\veps}{2}H(pf)\} \{p f +\frac{\veps}{4}H(p f)\}\rb.\\&\lb.\quad+\{p f+\frac{\veps}{2}H(p f)\}^2 \{p+\frac{\veps}{4}H(p)\}\rb)+\textrm{h.o.t.}\\&=2^d (\pi \veps)^{3d/2}\lb(\{p^2\frac{\veps}{4} H (pf^2)+2p^2f^2\frac{\veps}{2} H(p)\}-2\{p^2f^2+\frac{\veps}{2} H(p)+p^2f\frac{\veps}{2}H(pf)+p^2f\frac{\veps}{4}H(pf)\}\rb.\\&\lb.\quad+ \{2p^2 f\frac{\veps}{2}H(p f)+p^2f^2\frac{\veps}{4}H(p)\}\rb)+\textrm{h.o.t.}\\&=2^d (\pi \veps)^{3d/2}\lb(\frac{\veps}{4}p^2H (pf^2)+(\veps-\veps+\frac{\veps}{4})p^2f^2H(p)+(-\veps-\frac{\veps}{2}+\veps)p^2fH(pf)\rb)+\textrm{h.o.t.}\\&=2^d (\pi \veps)^{3d/2}\frac{\veps}{4}\lb(p^2H (pf^2)+p^2f^2H(p)-2p^2fH(pf)\rb)+\textrm{h.o.t.}
\end{align*}
Looking at the term in parenthesis and using the identity $\Delta(gh)=g\Delta h+h\Delta g+2\nabla g\cdot\nabla h$ gives
\begin{align*}
    &p^2H (pf^2)+p^2f^2H(p)-2p^2fH(pf)\\
    &=p^2(E pf^2+\Delta (pf^2)+E p f^2+f^2 \Delta p-2E p f^2-2f\Delta (p f))\\&=p^2(\Delta (pf^2)-2f\Delta (pf)+f^2\Delta p)\\&=p^2(p\Delta f^2+f^2 \Delta p +2 \nabla p \cdot \nabla f^2+f^2 \Delta p-2f(p\Delta f + f \Delta p+2 \nabla p \cdot \nabla f))\\&=p^2(p\Delta f^2-2fp\Delta f + 2\nabla p \cdot \nabla f^2-4f \nabla p \cdot \nabla f)\\&=p^2(\Delta (p f^2)-f^2 \Delta p-2f(\Delta(pf)-f\Delta p)).
\end{align*}
Hence,
\begin{align*}
    &\E[Y_j^2]=2^d (\pi \veps)^{3d/2}\frac{\veps}{4}p^2(\Delta (p f^2)-2f \Delta (pf)+f^2\Delta p)+\textrm{h.o.t.}
\end{align*}
Noting now that $\Delta f^2=2f \Delta f+2 \nabla f \cdot \nabla f$ and $\nabla f^2=2f \nabla f$, we have that:
\begin{align*}
\Delta (p f^2)=p \Delta f^2+f^2 \Delta p+2 \nabla\ p \cdot \nabla f^2=2p f \Delta f+2 p \nabla f \cdot\nabla f +f^2 \Delta p+4f \nabla p \cdot\nabla f
\end{align*}
and 
\[2f\Delta (p f)=2f^2 \Delta p+2 f p \Delta f +4 f \nabla p\cdot \nabla f\]
Hence,
\begin{align*}
\E[Y_j^2]=2^d (\pi \veps)^{3d/2}\lb(\frac{\veps}{2}p^3 \|\nabla f \|^2+O(\veps^2)\rb)
\end{align*}
Finally, we divide by $\E[v]^4$ to obtain
\begin{align*}
    &\E_{\theta_i\sim\mu} [V(\theta_i)^2]=\frac{2^d (\pi \veps)^{3d/2}\lb(\frac{\veps}{2}p^3\|\nabla f\|^2+O(\veps^2)\rb)}{(2\pi \veps)^{2d}p^4}=\frac{2^{-d} \pi^{-d/2} \veps^{-d/2}\lb(\veps\|\nabla f(w)\|^2+O(\veps^2)\rb)}{p(w)}
\end{align*}

\section{Dynamic PCA Kalman Filter Equations}
\label{sec:appd}
\setcounter{table}{0}
\setcounter{figure}{0}
\renewcommand{\thetable}{D.\arabic{table}}
\renewcommand{\thefigure}{D.\arabic{figure}}
We provide the standard EM algorithm procedure and Kalman filter equations for the Dynamic PCA benchmark outlined in Section \ref{sec:stress_benchmarks}.

\subsection{E-Step: Kalman Filter with Joint Observations}

The Kalman filter is used to estimate the latent state $\mathbf{Z}_t$ and the corresponding state covariance $P_{t|t}$. Note that for Dynamic PCA we utilize the standard Kalman filter and EM algorithm formulations where we only observe a single set of measurements $\mathbf{X_t}$ . The transition matrix $\mathbf{A}$ is known and estimated by fitting a VAR(1) process to the empirical PC components computed from the data, and the state covariance matrix $\mathbf{Q}$ is computed as the empirical covariance matrix of these PC components. The observation equation is:
\[\mathbf{X}_t = \mathbf{\Gamma} \mathbf{Z}_t + \veps_t \sim \mathcal{N}(0, \mathbf{R})\]
and the state transition equation is:
\[\quad \mathbf{Z}_t = \mathbf{A} \mathbf{Z}_{t-1} + \eta_t, \quad \eta_t \sim \mathcal{N}(0, \mathbf{Q})\]

The steps of the Kalman filter are as follows:
\begin{enumerate}
    \item Prediction Step:
    \[\hat{\psi}_{t|t-1} = \mathbf{A} \hat{\mathbf{Z}}_{t-1|t-1}, \quad P_{t|t-1} = \mathbf{A} P_{t-1|t-1} \mathbf{A}^\top + \mathbf{Q}\]
    \item Innovation (Residual):
    \[\text{innovation}_t = \mathbf{X}_t - \mathbf{\Gamma} \hat{\mathbf{Z}}_{t|t-1}\]

\item Innovation Covariance:
\[S_t = \mathbf{\Gamma} P_{t|t-1} \mathbf{\Gamma}^\top + \mathbf{R}\]
\item Kalman Gain:
\[K_t = P_{t|t-1} \mathbf{\Gamma}^\top S_t^{-1}\]
\item Update Step:
\[\hat{\mathbf{Z}}_{t|t} = \hat{\mathbf{Z}}_{t|t-1} + K_t \text{innovation}_t, \quad P_{t|t} = (I - K_t \mathbf{\Gamma}) P_{t|t-1}\]
\item Log-Likelihood Contribution:
\[\mathcal{L}_t = -\frac{1}{2} \left( \log |S_t| + \text{innovation}_t^\top S_t^{-1} \text{innovation}_t + p \log 2\pi \right)\]
where $p$ is the dimension of $\mathbf{X}_t$
\end{enumerate}

After running the Kalman filter, we apply the Rauch-Tung-Striebel (RTS) smoother:
\begin{enumerate}
    \item Initialize at \( T \):
    \[
    \hat{\mathbf{Z}}_{T|T} = \hat{\mathbf{Z}}_{T|T}, \quad P_{T|T} = P_{T|T}
    \]
    \item Backward pass for \( t = T-1, ..., 1 \):
    \[
    J_t = P_{t|t} \mathbf{A}^\top P_{t+1|t}^{-1}
    \]
    \[
    \hat{\mathbf{Z}}_{t|T} = \hat{\mathbf{Z}}_{t|t} + J_t \left( \hat{\mathbf{Z}}_{t+1|T} - \hat{\mathbf{Z}}_{t+1|t} \right)
    \]
    \[
    P_{t|T} = P_{t|t} + J_t (P_{t+1|T} - P_{t+1|t}) J_t^\top
    \]
\end{enumerate}

\subsection{M-Step: Parameter Updates}

After running the Kalman filter, we update the parameter $\mathbf{R}$ based on the current estimates of $\mathbf{Z}_t$.

\begin{enumerate}
    \item Update $\mathbf{R}$ using the residuals from the smoothed estimates $\hat{\mathbf{Z}}_{t|T}$:
    \[\mathbf{R} = \frac{1}{T} \sum_{t=1}^{T} \left( \mathbf{X}_t - \mathbf{\Gamma} \hat{\mathbf{Z}}_t \right) \left( \mathbf{X}_t - \mathbf{\Gamma} \hat{\mathbf{Z}}_t \right)^\top
\]

\end{enumerate}

\section{Implementation Details for Benchmark Methods}
\label{sec:benchmark_impl}

\subsection{Standard Scenario Analysis (SSA)}
\label{app:ssa}
Under the standard multi-factor model for stock returns, if $\mathbf{X_t} \in \mathbb{R}^p$ is the $p$-dimensional vector of common factors, we define $\mathcal{S}$ to be the set containing the indices of the factors in a scenario (the factors we intend to stress), and thus $\mathcal{S}^C$ is the index set of those factors we leave un-stressed and which are not in our scenario. SSA then stresses the components of $\mathbf{X}_{\mathcal{S},t}$ according to a given scenario (e.g. $+20$ on the S\&P index, $-2$ on the CPI index, and $+5$ on the US Dollar/Euro exchange rate), and keeps the components in $\mathbf{X}_{\mathcal{S}^C,t}$ unchanged (i.e. equal to their current value). The new portfolio P\&L or overall return $V_t$ is then computed with $\mathbf{Y}_t$ determined by the scenario and the multi-factor model. By leaving $\mathbf{X}_{\mathcal{S},t}$ unchanged, SSA implicitly assumes that $\mathbb{E}[\mathbf{X}_{\mathcal{S}^C, t+1}|\mathcal{F}_t, \mathbf{X}_{\mathcal{S}, t+1}]= 0$, here $\mathcal{F}_t$ denotes a filtration with information up to time $t$. 
\begin{enumerate}

\item Let $\Delta \mathbf{X}_{\mathcal{S}, t+1}=\mathbf{X}_{\mathcal{S},t+1}-\mathbf{X}_{\mathcal{S}, t}$ denote the $t+1$ scenario stress vector $\in \mathbb{R}^{|\mathcal{S}|}$. 

\item Compute the SSA factor change vector, $\Delta \mathbf{X}^{\mathrm{SSA}}_{t+1}$ as:
\[
\Delta \mathbf{X}^{\mathrm{SSA}}_{i, t+1} =
\begin{cases}
    \Delta \mathbf{X}_{\mathcal{S},t+1}, & \text{for } i \in \mathcal{S}  \\
    0, & \text{for } i \in \mathcal{S}^C 
\end{cases}
\]

\item Obtain the predicted factor vector under SSA 

\[\mathbf{X}_{t+1}^{\mathrm{SSA}}=\mathbf{X}_t+\Delta\mathbf{X}^{\mathrm{SSA}}_{t+1}\]

\item From the fitted matrix of factor loadings $\mathbf{\tilde{B}}$, fit via a multivariate regression, predict post-stress asset returns as:

\[\mathbf{Y}^{\mathrm{SSA-stress}}_{t+1}=\tilde{\mathbf{B}} \mathbf{X}_{t+1}^{\mathrm{SSA}}\]

\end{enumerate}

We summarize the SSA procedure in the following algorithm, which we will then feed into a historical rolling window backtest in the next section. We let $s$ denote the size of the rolling window. Note that the fitted matrix of factor loadings, $\mathbf{\tilde{B}}$ is treated as an input in this algorithmic format. We denote with $x_{t-s:t} \in \mathbb{R}^{s \times p}$ the matrix of common factors from time $t-s$ up to $t$, and with $y_{t-s:t} \in \mathbb{R}^{s \times n}$ the matrix of asset returns from time $t-s$ up to $t$. We take $x_{\mathrm{actual},\mathcal{S}, t+1}$ to be the realized historical scenario.

\begin{algorithm}[H]
\caption{SSA Portfolio Simulation} \label{alg:ssa}
\begin{algorithmic}[1]
\Require $x_{t-s:t}, x_{t+1}, \mathbf{\tilde{B}}$ (fitted from $(y_{t-s:t}, x_{t-s:t})$)
\State Set $x_{\mathcal{S},t+1}=x_{\mathrm{actual},\mathcal{S}, t+1}$ and $x_{\mathcal{S}^C,t+1}=x_{\mathcal{S}^C,t}$ to form $x_{t+1, \mathrm{SSA}}$
\State Compute $y_{t+1, \mathrm{SSA}}$ from fitted equation $y_t = \mathbf{\tilde{B}} x_t$ given $x_{t+1, \mathrm{SSA}}$
\State Compute the $1/N$ (or Markowitz) weights $w$
\State Compute portfolio return $\hat{V}_{t+1, \mathrm{SSA}}=w \cdot y_{t+1,\mathrm{SSA}}$
\end{algorithmic}
\end{algorithm}

\subsection{Static PCA}
\label{app:static_pca}
Given a factor matrix \( \mathbf{X} \in \mathbb{R}^{T \times p} \) and a return matrix \( \mathbf{Y} \in \mathbb{R}^{T \times n} \), the static PCA procedure comprises a series of steps. First, we estimate the linear multi-factor regression model, then we project the factor changes onto principal components, apply stress scenarios, and predict the corresponding returns from the fitted factor model.

\begin{enumerate}

    \item \textbf{Preprocessing and Centering.} \\
    First we compute the differenced factor matrix containing the time $t$ factor changes:
    \[
    \Delta \mathbf{X}_t = \mathbf{X}_{t} - \mathbf{X}_{t-1}.
    \]
    We compute the mean of the differenced data, defined as \( \mu_{\Delta \mathbf{X}}=\frac{1}{T} \sum_{t=1}^T \Delta \mathbf{X}_t \), and subtract it to obtain the centered factors upon which we will apply PCA:
    \[
    \tilde{\mathbf{X}}_t = \Delta \mathbf{X}_t - \mu_{\Delta \mathbf{X}}.
    \]

    \item \textbf{Estimation of the Multivariate Regression Model.} \\
    We fit the multivariate regression model to obtain $\mathbf{\tilde{B}}$:
    \[
    \mathbf{Y}_{t} = \mathbf{X}_{t} \mathbf{B} + \mathbf{\veps}_t,
    \]

    \item \textbf{Principal Component Analysis (PCA).} \\
    For each specified PCA dimension \( d \), we compute the PCA transformation of the centered factors on $\tilde{\mathbf{X}}_t$:
    \[
    \mathbf{X}_{\text{PCA}} = U \Sigma V^T.
    \]
     and extract the principal component loading matrix:
    \[
    W = V^T.
    \]

    \item \textbf{Applying Stressed Scenarios in PCA Space.} \\
    We define the unstressed scenario by setting the un-stressed factors equal to their current value at time $t$ (meaning no change as in SSA):
    \[
    \mathbf{X}_{t+1, i}^{\text{scenario}} = \mathbf{X}_{t, i}, \quad \forall i \not \in \mathcal{S}.
    \]
    For the specified set of stressed scenario variables \( \mathcal{S} \), set the $t+1$ values equal to their realized historical level, meaning we consider the true historical stress:
    \[
    \mathbf{X}_{t+1, i}^{\text{scenario}} = \mathbf{X}_{t+1, i}, \quad \forall i \in \mathcal{S}.
    \]
    Compute the perturbation vector (whose elements corresponding to un-stressed factors will be zero and those corresponding to the stressed factors will be equal to the true historical change):
    \[
    \Delta \mathbf{X} = \mathbf{X}_{t+1}^{\text{scenario}} - \mathbf{X}_t.
    \]
    Center the perturbation vector:
    \[
    {\Delta \tilde{\mathbf{X}}} = \Delta \mathbf{X} - \mu_{\Delta \mathbf{X}}.
    \]
    Project the stress perturbation onto the PCA component space:
    \[
    \text{proj}_{\text{PCA}} = W (W^T {\Delta \tilde{\mathbf{X}}}).
    \]
    Obtain the perturbed scenario factor vector:
    \[
    \mathbf{X}_{t+1}^{\text{PCA-stress}} = \mathbf{X}_t + \text{proj}_{\text{PCA}} + \mu_{\Delta \mathbf{X}}.
    \]

    \item \textbf{Prediction of Stressed Returns.} \\
    Using the estimated regression factor loading matrix $\tilde{\mathbf{B}}$, we predict the asset returns under the stressed scenario:
    \[
    \mathbf{Y}_{t+1}^{\text{PCA-stress}} = \mathbf{X}_{t+1}^{\text{PCA-stress}} \mathbf{\tilde{B}}.
    \]

\end{enumerate}

We summarize the static PCA procedure in the following algorithm, which we will feed into our historical rolling window backtest in the next section. We let $s$ denote the size of the rolling window. Note that the fitted matrix of factor loadings, $\mathbf{\tilde{B}}$ is treated as an input in this algorithmic format. We denote with $x_{t-s:t} \in \mathbb{R}^{s \times p}$ the matrix of common factors from time $t-s$ up to $t$, and with $y_{t-s:t} \in \mathbb{R}^{s \times n}$ the matrix of asset returns from time $t-s$ up to $t$. We take $x_{\mathrm{actual},\mathcal{S}, t+1}$ to be the realized historical scenario. For notational brevity, in the algorithm below we also assume that the mean of the factor changes, $\mu_{\Delta \mathbf{X}}$,  is already added back into $\text{proj}_{\text{PCA}}$

\begin{algorithm}[H]
\caption{Static PCA Portfolio Simulation} \label{alg:static_pca}
\begin{algorithmic}[1]
\Require $x_{t-s:t}, x_{t+1}, \mathbf{\tilde{B}}$ (fitted from $(y_{t-s:t}, x_{t-s:t})$)
\State Apply PCA on the (pre-processed) differenced data $\Delta x_{t-s:t}$
\State Set $x_{\mathcal{S},t+1}=x_{\mathrm{actual},\mathcal{S}, t+1}$ and $x_{\mathcal{S}^C,t+1}=x_{\mathcal{S}^C,t}$ to form $\tilde{x}_{t+1}$
\State Obtain the change vector $\Delta x_t=\tilde{x}_{t+1}-x_t$
\State Compute the projection of the centered $\Delta x_t$ onto the PCA component space: $\text{proj}_{\text{PCA}}$
\State Obtain the perturbed scenario vector $x_{t+1,\text{Static PCA}}=x_t+\text{proj}_{\text{PCA}}$
\State Compute $y_{t+1, \text{Static PCA}}$ from fitted equation $y_t = \mathbf{\tilde{B}} x_{t+1, \text{Static PCA}}$
\State Compute the $1/N$ (or Markowitz) weights $w$
\State Compute portfolio return $\hat{V}_{t+1, \text{Static PCA}}=w \cdot y_{t+1,\text{Static PCA}}$
\end{algorithmic}
\end{algorithm}

\subsection{Dynamic PCA}  
\label{app:dynamic_pca}
The procedure consists of the following steps:

\begin{enumerate}

\item We estimate the transition matrix $\mathbf{A}$ by fitting a VAR(1) process to the PC components computed from the data. 

\item From the state space representation we then implement the standard Kalman filter (see Appendix \ref{sec:appd}) to obtain our dynamic estimates $\mathbf{Z}_t$ of the PC embeddings. In our empirical experiments we select the dimension of the PC embeddings as $d=5$ such that $>99\%$ of the variance in our data is explained by them. 

\item Once we have obtained our estimates for $\mathbf{Z}_t$, we perform conditional sampling given a scenario to obtain the stressed factor vector. The conditional sampling procedure closely follows that outlined in Section \ref{sec:sample} with the exception that we replace the lifting operator matrix with $\mathbf{\Gamma}$ and the diffusion coordinates with $\mathbf{Z}_t$:

\[\mu_{\text{cond, PCA}} = \mathbf{\Gamma}_{\mathcal{S}^C}^{+} \mu_{\mathrm{cond}}\]
\[\Sigma_{\text{cond, PCA}} = \mathbf{\Gamma}_{\mathcal{S}^C}^{+} \Sigma_{\mathrm{cond}} (\mathbf{\Gamma}_{\mathcal{S}^C}^{+})^T\]

we then sample from the conditional multivariate normal distribution to get $\mathbf{Z}_{t+1}$ given the scenario in the original space:
\[\mathbf{Z}_{t+1} \mid \mathbf{Z}_t, \mathbf{X}_{\mathcal{S}, t} \sim \mathcal{N}(\mu_{\text{cond, PCA}}, \Sigma_{\text{cond, PCA}})\]

We obtain $K$ Monte Carlo conditional samples:
\[ \{\mathbf{Z}^{(k)}_{t+1} \mid \mathbf{Z}_t, \mathbf{X}_{\mathcal{S}, t}\}_{k=1}^K\]

\item For each conditional Monte Carlo sample, predict post-stress asset returns under the dynamic PCA stress using the fitted matrix of factor loadings $\mathbf{\tilde{B}}$:
\[\mathbf{Y}_{t+1, \text{Dynamic PCA}}^{(k)}=\mathbf{\tilde{B}}\mathbf{\Gamma}^T \mathbf{Z}_{t+1}^{(k)}\]

\item Obtain the dynamic PCA post-stress returns by taking the average across all Monte Carlo predictions:
\[\mathbf{Y}_{t+1, \text{Dynamic PCA}}=\frac{1}{K} \sum_{k=1}^K \mathbf{Y}_{t+1, \text{Dynamic PCA}}^{(k)}\]
\end{enumerate}

We summarize the dynamic PCA procedure post-Kalman filter fitting, in the following algorithm, which we will feed into a historical rolling window backtest in the next section. We let $s$ denote the size of the rolling window. Note that the fitted matrix of factor loadings, $\mathbf{\tilde{B}}$, the matrix of eigenvectors $\mathbf{\Gamma}$, and the time $t$ Kalman filter estimate $z_t$ for the latent state are all treated as inputs in this algorithmic format. We denote with $x_{t-s:t} \in \mathbb{R}^{s \times p}$ the matrix of common factors from time $t-s$ up to $t$, and $y_{t-s:t} \in \mathbb{R}^{s \times n}$ the matrix of asset returns from time $t-s$ up to $t$. 

\begin{algorithm}[H]
\caption{Dynamic PCA Portfolio Simulation} \label{alg:dynamic_pca}
\begin{algorithmic}[1]
\Require $K, z_t, x_{t+1}, \mathbf{\Gamma}, \mathbf{\tilde{B}}$ (fitted from $(y_{t-s:t}, x_{t-s:t})$)
\For{$k \gets 1$ to $K$} 
    \State Generate $z_{t+1}^{(k)} \mid (z_t, x_{\mathcal{S},t+1})$ 
    \State Compute $y_{t+1, \text{Dynamic PCA}}^{(k)}=\mathbf{\tilde{B}} \mathbf{\Gamma}^T z_{t+1}^{(k)}$
    \State Compute the $1/N$ (or Markowitz) weights $w^{(k)}$
    \State Compute portfolio return $V_{t+1, \text{Dynamic PCA}}^{(k)}=w^{(k)} \cdot y_{t+1,\text{Dynamic PCA}}^{(k)}$
\EndFor
\State Compute $\hat{V}_{t+1, \text{Dynamic PCA}}=K^{-1}\sum_{i=1}^K V_{t+1,\text{Dynamic PCA}}^{(k)}$
\end{algorithmic}
\end{algorithm}

\subsection{Joint Diffusion Kalman Filter (JDKF)}
\label{app:jdkf_implementation}
We describe the implementation details of the JDKF algorithm:

\begin{enumerate}
    \item We first apply diffusion maps on the common factors data matrix $\mathbf{X} \in \mathbb R^{T \times p}$ and obtain the matrix of diffusion coordinates $\mathbf{\psi} \in \mathbb{R}^{T \times \ell}$. We then obtain the lifting operator $\mathbf{H}_X=\mathbf{X}^T \mathbf{\psi} \in \mathbb{R}^{p \times \ell}$ and the transition matrix $\mathbf{A}=(I+\Lambda) \in \mathbb{R}^{\ell \times \ell}$. Recall that $\ell$ denotes the dimension of the diffusion coordinate space, which we choose from observing the largest eigenvalue drop when the diffusion maps algorithm is applied on the factor data. 
    \item From the state space equations, we then estimate the JDKF parameters, which include $\mathbf{B}$, $\mathbf{R}_X$, $\mathbf{R}_Y$, and $\mathbf{R}_{XY}$ by fitting the joint Kalman filter as outlined in Section \ref{sec:jdkf_model} to obtain filtered dynamic diffusion coordinate estimates $\psi_{t}^{\mathrm{JDKF}}$, from which we also obtain $\mathbf{X}_{t}^{\mathrm{JDKF}}=\mathbf{H}_X \psi_{t}^{\mathrm{JDKF}}$
    \item Using the estimated diffusion coordinates from the JDKF $\psi_t^{\mathrm{JDKF}}$, we then perform conditional sampling given the scenario in the factor observation space. Recall that we split our time $t$ factor vector $\mathbf{X}_{t}$ into two parts, $\mathbf{X}_{\mathcal{S}, t}$ and $\mathbf{X}_{\mathcal{S}^C, t}$ based on the stressed and un-stressed factors. Following the conditional sampling procedure outlined in Section \ref{sec:sample}:

    \[\mu_{\mathrm{cond, diff}} = \mathbf{H}_{\mathcal{S}^C}^{+} \mu_{\mathrm{cond}}\]
\[\Sigma_{\mathrm{cond, diff}} = \mathbf{H}_{\mathcal{S}^C}^{+} \Sigma_{\mathrm{cond}} (\mathbf{H}_{\mathcal{S}^C}^{+})^T\]

we then sample from the conditional multivariate normal distribution to get $\psi_{t+1}$ given the scenario in the original space:
\[\psi_{t+1} \mid \psi_t, \mathbf{X}_{\mathcal{S}, t} \sim \mathcal{N}(\mu_{\mathrm{cond, diff}}, \Sigma_{\mathrm{cond, diff}})\]

We obtain $K$ Monte Carlo conditional samples:
\[ \{\psi^{(k)}_{t+1} \mid \psi_t, \mathbf{X}_{\mathcal{S}, t}\}_{k=1}^K\]
    
    \item For each conditional Monte Carlo sample, predict post-stress asset returns under the JDKF stress directly from the fitted observation equation:
\[\mathbf{Y}_{t+1, \mathrm{JDKF}}^{(k)}=\mathbf{B}\mathbf{H}_X \psi_{t+1}^{(k)}\]

    \item Obtain the JDKF post-stress returns by taking the average across all Monte Carlo predictions:
\[\mathbf{Y}_{t+1, \mathrm{JDKF}}=\frac{1}{K} \sum_{k=1}^K \mathbf{Y}_{t+1, \mathrm{JDKF}}^{(k)}\]

\end{enumerate}

We summarize the JDKF procedure post-Kalman filter fitting, in the following algorithm, which we will feed into a historical rolling window backtest in the next section. We let $s$ denote the size of the rolling window. Note that the fitted matrix of factor loadings, $\mathbf{B}$, the linear lifting operator $\mathbf{H}_X$, and the time $t$ Kalman filter estimate $\psi_t$ for the latent state are all treated as inputs to the algorithm. We denote with $x_{t-s:t} \in \mathbb{R}^{s \times p}$ the matrix of common factors from time $t-s$ up to $t$, and $y_{t-s:t} \in \mathbb{R}^{s \times n}$ the matrix of asset returns from time $t-s$ up to $t$. 

\begin{algorithm}[H]
\caption{JDKF Portfolio Simulation}\label{alg:jdkf}
\begin{algorithmic}[1]
\Require $K, \hat{\psi}_t, \mathbf{B}, \mathbf{H}_x, x_{t+1}$
\For{$k \gets 1$ to $K$} 
    \State Generate $\hat{\psi}_{t+1}^{(k)} \mid (\hat{\psi}_t, x_{\mathcal{S},t+1})$ 
    \State Compute $y_{t+1, \mathrm{JDKF}}^{(k)}=\mathbf{B} \mathbf{H}_x\hat{\psi}_{t+1}^{(k)}$
    \State Compute the $1/N$ (or Markowitz) weights $w^{(k)}$
    \State Compute portfolio return $V_{t+1, \mathrm{JDKF}}^{(k)}=w^{(k)} \cdot y_{t+1,\mathrm{JDKF}}^{(k)}$
\EndFor
\State Compute $\hat{V}_{t+1, \mathrm{JDKF}}=K^{-1}\sum_{i=1}^K V_{t+1,\mathrm{JDKF}}^{(k)}$
\end{algorithmic}
\end{algorithm}

\subsection{Historical Backtesting Algorithm}
\label{app:backtest_algorithm}

The complete historical rolling backtest procedure with periodic Kalman filter refitting is as follows:

\begin{algorithm}[H]
\caption{Historical Rolling Back-Test SSA vs JDKF with Periodic Re-Fitting} \label{alg:historical_backtest}
\begin{algorithmic}[1]
\Require $s, T, K, R$
\newline
\Comment{$s = \#$ periods in rolling window for model training} 
\newline
\Comment{$T = \#$ periods in the data} 
\newline
\Comment{$K = \#$ Monte-Carlo samples used to estimate factor model-based scenario Portfolio Return}
\newline
\Comment{$R = \#$ months before re-fitting the Kalman filter}
\newline
\Comment{$r$ is the start index for refitting}
\newline
\Comment{$r_{\text{end}}$ is the end index for refitting}
\For{$r \gets  0:R:T - 1$}
    \State $r_{\text{end}} = \min(r + R + s, T)$
    \State Fit JDKF parameters $(\Lambda, \mathbf{H}^x, \mathbf{R}_x, \mathbf{R}_y, \mathbf{R}_{xy}, \mathbf{Q}, \mathbf{B})$ from $(x_{r:r_{\text{end}}}, y_{r:r_{\text{end}}})$
    \State Fit Dynamic PCA Kalman Filter parameters $(\mathbf{A}, \mathbf{\Gamma}, \mathbf{R}, \mathbf{Q})$ from $(x_{r:r_{\text{end}}}, y_{r:r_{\text{end}}})$
    \For{$t \gets \max(s, r + s)$ to $r_{\text{end}}$}
        \State \textbf{Retrieve Precomputed Kalman Filter Estimates}
        \State Retrieve $\hat{\psi}_t$, $\hat{z}_t$ from Kalman filter estimates
        \State \textbf{Estimate Multivariate Regression Model}
        \State Fit $\mathbf{\tilde{B}}$ using $(y_{t-s:t}, x_{t-s:t})$
        \State \textbf{Compute Portfolio Simulations}
        \State $\hat{V}_{t+1, \mathrm{JDKF}} \gets \text{JDKF\_Simulation}(K, \psi_t, \mathbf{B}, \mathbf{H}_x, x_{,t+1})$......(\textbf{Algorithm} \ref{alg:jdkf})
        \State $\hat{V}_{t+1, \text{Dynamic PCA}} \gets \text{DynamicPCA\_Simulation}(K, z_t, \mathbf{\tilde{B}}, \mathbf{\Gamma}, x_{t+1})$......(\textbf{Algorithm} \ref{alg:dynamic_pca})
        \State $\hat{V}_{t+1, \text{Static PCA}} \gets \text{StaticPCA\_Simulation}(x_{t-s:t}, x_{t+1}, \mathbf{\tilde{B}})$......(\textbf{Algorithm} \ref{alg:static_pca})
        \State $\hat{V}_{t+1, \mathrm{SSA}} \gets \text{SSA\_Simulation}(x_t, x_{t+1}, \mathbf{\tilde{B}})$......(\textbf{Algorithm} \ref{alg:ssa})
        \State Compute true portfolio return $V_{t+1,\mathrm{TRUE}}$
        \State \textbf{Compute Errors}
        \State $E_{\mathrm{JDKF},t+1} = |\hat{V}_{t+1, \mathrm{JDKF}} - \hat{V}_{t+1, \mathrm{TRUE}}|$
        \State $E_{\text{Dynamic PCA},t+1} = |\hat{V}_{t+1, \text{Dynamic PCA}} - \hat{V}_{t+1, \mathrm{TRUE}}|$
        \State $E_{\text{Static PCA},t+1} = |\hat{V}_{t+1, \text{Static PCA}} - \hat{V}_{t+1, \mathrm{TRUE}}|$
        \State $E_{\mathrm{SSA},t+1} = |\hat{V}_{t+1, \mathrm{SSA}} - \hat{V}_{t+1, \mathrm{TRUE}}|$
    \EndFor
\EndFor
\end{algorithmic}
\end{algorithm}

In all of our experiments in Section \ref{sec:experiments}, we use $K=10,000$ Monte Carlo simulations of conditional portfolio returns for the probabilistic models JDKF and Dynamic PCA. Since we are working under the special linear multi-factor model case of our JDKF framework, for each refitting period we estimate $\mathbf{B}$ via ordinary least squares (OLS) and $\mathbf{R}$ as the empirical covariance of the OLS residuals; furthermore, we estimate $\mathbf{V}$ as the empirical covariance matrix of the common factors.

\section{Macroeconomic Factors Data}
\label{sec:appe}
\setcounter{table}{0}
\setcounter{figure}{0}
\renewcommand{\thetable}{G.\arabic{table}}
\renewcommand{\thefigure}{G.\arabic{figure}}

We use the macroeconomic factors from FRED-MD by \citet{mccracken2016fred} plus the eight factors found by \citet{goyal2008comprehensive}. We use the version of this data that was curated in \citet{chen2020internet} and the following transformations were applied: 

\begin{table}[htbp]
\centering
\caption{Transformations (tCodes) for Macroeconomic Variables}
\begin{tabular}{|c|l|}
\hline
\textbf{tCode} & \textbf{Transformation Description} \\ \hline
1 & No transformation \\ \hline
2 & \(\Delta x_t\) \\ \hline
3 & \(\Delta^2 x_t\) \\ \hline
4 & \(\log(x_t)\) \\ \hline
5 & \(\Delta \log(x_t)\) \\ \hline
6 & \(\Delta^2 \log(x_t)\) \\ \hline
7 & \(\Delta(x_t/x_{t-1} - 1.0)\) \\ \hline
\end{tabular}
\label{tab:tcode_transformations}
\end{table}

\small{
\begin{longtable}{|l|l|l|l|}
\caption{Macroeconomic Variables (based on table IA.XVI of \citet{chen2020internet})}
\\
\hline
\textbf{Variable Name} & \textbf{Description} & \textbf{Source} & \textbf{tCode} \\ \hline
\endfirsthead

\hline
\textbf{Variable Name} & \textbf{Description} & \textbf{Source} & \textbf{tCode} \\ \hline
\endhead

\hline
\endfoot

\hline
\endlastfoot

RPI & Real Personal Income & Fred-MD & 5 \\ \hline
W875RX1 & Real personal income ex transfer receipts & Fred-MD & 5 \\ \hline
DPCERA3M086SBEA & Real personal consumption expenditures & Fred-MD & 5 \\ \hline
CMRMTSPLx & Real Manu. and Trade Industries Sales & Fred-MD & 5 \\ \hline
RETAILx & Retail and Food Services Sales & Fred-MD & 5 \\ \hline
INDPRO & IP Index & Fred-MD & 5 \\ \hline
IPFPNSS & IP: Final Products and Nonindustrial Supplies & Fred-MD & 5 \\ \hline
IPFINAL & IP: Final Products (Market Group) & Fred-MD & 5 \\ \hline
IPCONGD & IP: Consumer Goods & Fred-MD & 5 \\ \hline
IPDCONGD & IP: Durable Consumer Goods & Fred-MD & 5 \\ \hline
IPNCONGD & IP: Nondurable Consumer Goods & Fred-MD & 5 \\ \hline
IPBUSEQ & IP: Business Equipment & Fred-MD & 5 \\ \hline
IPMAT & IP: Materials & Fred-MD & 5 \\ \hline
IPDMAT & IP: Durable Materials & Fred-MD & 5 \\ \hline
IPNMAT & IP: Nondurable Materials & Fred-MD & 5 \\ \hline
IPMANSICS & IP: Manufacturing (SIC) & Fred-MD & 5 \\ \hline
IPB51222S & IP: Residential Utilities & Fred-MD & 5 \\ \hline
IPFUELS & IP: Fuels & Fred-MD & 5 \\ \hline
CUMFNS & Capacity Utilization: Manufacturing & Fred-MD & 2 \\ \hline
HWI & Help-Wanted Index for United States & Fred-MD & 2 \\ \hline
HWIURATIO & Ratio of Help Wanted/No. Unemployed & Fred-MD & 2 \\ \hline
CLF16OV & Civilian Labor Force & Fred-MD & 5 \\ \hline
CE16OV & Civilian Employment & Fred-MD & 5 \\ \hline
UNRATE & Civilian Unemployment Rate & Fred-MD & 2 \\ \hline
UEMPMEAN & Average Duration of Unemployment (Weeks) & Fred-MD & 2 \\ \hline
UEMPLT5 & Civilians Unemployed - Less Than 5 Weeks & Fred-MD & 5 \\ \hline
UEMP5TO14 & Civilians Unemployed for 5-14 Weeks & Fred-MD & 5 \\ \hline
UEMP15OV & Civilians Unemployed - 15 Weeks \& Over & Fred-MD & 5 \\ \hline
UEMP15T26 & Civilians Unemployed for 15-26 Weeks & Fred-MD & 5 \\ \hline
UEMP27OV & Civilians Unemployed for 27 Weeks and Over & Fred-MD & 5 \\ \hline
CLAIMSx & Initial Claims & Fred-MD & 5 \\ \hline
PAYEMS & All Employees: Total nonfarm & Fred-MD & 5 \\ \hline
USGOOD & All Employees: Goods-Producing Industries & Fred-MD & 5 \\ \hline
CES1021000001 & All Employees: Mining and Logging: Mining & Fred-MD & 5 \\ \hline
USCONS & All Employees: Construction & Fred-MD & 5 \\ \hline
MANEMP & All Employees: Manufacturing & Fred-MD & 5 \\ \hline
DMANEMP & All Employees: Durable goods & Fred-MD & 5 \\ \hline
NDMANEMP & All Employees: Nondurable goods & Fred-MD & 5 \\ \hline
SRVPRD & All Employees: Service-Providing Industries & Fred-MD & 5 \\ \hline
USTPU & All Employees: Trade, Transportation \& Utilities & Fred-MD & 5 \\ \hline
USWTRADE & All Employees: Wholesale Trade & Fred-MD & 5 \\ \hline
USTRADE & All Employees: Retail Trade & Fred-MD & 5 \\ \hline
USFIRE & All Employees: Financial Activities & Fred-MD & 5 \\ \hline
USGOVT & All Employees: Government & Fred-MD & 5 \\ \hline
CES0600000007 & Avg Weekly Hours : Goods-Producing & Fred-MD & 1 \\ \hline
AWOTMAN & Avg Weekly Overtime Hours : Manufacturing & Fred-MD & 2 \\ \hline
AWHMAN & Avg Weekly Hours : Manufacturing & Fred-MD & 1 \\ \hline
HOUST & Housing Starts: Total New Privately Owned & Fred-MD & 4 \\ \hline
HOUSTNE & Housing Starts, Northeast & Fred-MD & 4 \\ \hline
HOUSTMW & Housing Starts, Midwest & Fred-MD & 4 \\ \hline
HOUSTS & Housing Starts, South & Fred-MD & 4 \\ \hline
HOUSTW & Housing Starts, West & Fred-MD & 4 \\ \hline
PERMIT & New Private Housing Permits (SAAR) & Fred-MD & 4 \\ \hline
PERMITNE & New Private Housing Permits, Northeast (SAAR) & Fred-MD & 4 \\ \hline
PERMITMW & New Private Housing Permits, Midwest (SAAR) & Fred-MD & 4 \\ \hline
PERMITS & New Private Housing Permits, South (SAAR) & Fred-MD & 4 \\ \hline
PERMITW & New Private Housing Permits, West (SAAR) & Fred-MD & 4 \\ \hline
AMDMNOx & New Orders for Durable Goods & Fred-MD & 5 \\ \hline
AMDMUOx & Unfilled Orders for Durable Goods & Fred-MD & 5 \\ \hline
BUSINVx & Total Business Inventories & Fred-MD & 5 \\ \hline
ISRATIOx & Total Business: Inventories to Sales Ratio & Fred-MD & 2 \\ \hline
M1SL & M1 Money Stock & Fred-MD & 6 \\ \hline
M2SL & M2 Money Stock & Fred-MD & 6 \\ \hline
M2REAL & Real M2 Money Stock & Fred-MD & 5 \\ \hline
AMBSL & St. Louis Adjusted Monetary Base & Fred-MD & 6 \\ \hline
TOTRESNS & Total Reserves of Depository Institutions & Fred-MD & 6 \\ \hline
NONBORRES & Reserves Of Depository Institutions & Fred-MD & 7 \\ \hline
BUSLOANS & Commercial and Industrial Loans & Fred-MD & 6 \\ \hline
REALLN & Real Estate Loans at All Commercial Banks & Fred-MD & 6 \\ \hline
NONREVSL & Total Nonrevolving Credit & Fred-MD & 6 \\ \hline
CONSPI & Nonrevolving consumer credit to Personal Income & Fred-MD & 2 \\ \hline
S\&P 500 & S\&P’s Common Stock Price Index: Composite & Fred-MD & 5 \\ \hline
S\&P: indust & S\&P’s Common Stock Price Index: Industrials & Fred-MD & 5 \\ \hline
S\&P div yield & S\&P’s Composite Common Stock: Dividend Yield & Fred-MD & 2 \\ \hline
S\&P PE ratio & S\&P’s Composite Common Stock: Price-Earnings Ratio & Fred-MD & 5 \\ \hline
FEDFUNDS & Effective Federal Funds Rate & Fred-MD & 2 \\ \hline
CP3Mx & 3-Month AA Financial Commercial Paper Rate & Fred-MD & 2 \\ \hline
TB3MS & 3-Month Treasury Bill & Fred-MD & 2 \\ \hline
TB6MS & 6-Month Treasury Bill & Fred-MD & 2 \\ \hline
GS1 & 1-Year Treasury Rate & Fred-MD & 2 \\ \hline
GS5 & 5-Year Treasury Rate & Fred-MD & 2 \\ \hline
GS10 & 10-Year Treasury Rate & Fred-MD & 2 \\ \hline
AAA & Moody’s Seasoned Aaa Corporate Bond Yield & Fred-MD & 2 \\ \hline
BAA & Moody’s Seasoned Baa Corporate Bond Yield & Fred-MD & 2 \\ \hline
COMPAPFFx & 3-Month Commercial Paper Minus FEDFUNDS & Fred-MD & 1 \\ \hline
TB3SMFFM & 3-Month Treasury C Minus FEDFUNDS & Fred-MD & 1 \\ \hline
TB6SMFFM & 6-Month Treasury C Minus FEDFUNDS & Fred-MD & 1 \\ \hline
T1YFFM & 1-Year Treasury C Minus FEDFUNDS & Fred-MD & 1 \\ \hline
T5YFFM & 5-Year Treasury C Minus FEDFUNDS & Fred-MD & 1 \\ \hline
T10YFFM & 10-Year Treasury C Minus FEDFUNDS & Fred-MD & 1 \\ \hline
AAAFFM & Moody’s Aaa Corporate Bond Minus FEDFUNDS & Fred-MD & 1 \\ \hline
BAAFFM & Moody’s Baa Corporate Bond Minus FEDFUNDS & Fred-MD & 1 \\ \hline
EXSZUSx & Switzerland / U.S. Foreign Exchange Rate & Fred-MD & 5 \\ \hline
EXJPUSx & Japan / U.S. Foreign Exchange Rate & Fred-MD & 5 \\ \hline
EXUSUKx & U.S. / U.K. Foreign Exchange Rate & Fred-MD & 5 \\ \hline
EXCAUSx & Canada / U.S. Foreign Exchange Rate & Fred-MD & 5 \\ \hline
WPSFD49207 & PPI: Finished Goods & Fred-MD & 6 \\ \hline
WPSFD49502 & PPI: Finished Consumer Goods & Fred-MD & 6 \\ \hline
WPSID61 & PPI: Intermediate Materials & Fred-MD & 6 \\ \hline
WPSID62 & PPI: Crude Materials & Fred-MD & 6 \\ \hline
OILPRICEx & Crude Oil, spliced WTI and Cushing & Fred-MD & 6 \\ \hline
PPICMM & PPI: Metals and metal products & Fred-MD & 6 \\ \hline
CPIAUCSL & CPI : All Items & Fred-MD & 6 \\ \hline
CPIAPPSL & CPI : Apparel & Fred-MD & 6 \\ \hline
CPITRNSL & CPI : Transportation & Fred-MD & 6 \\ \hline
CPIMEDSL & CPI : Medical Care & Fred-MD & 6 \\ \hline
CUSR0000SAC & CPI : Commodities & Fred-MD & 6 \\ \hline
CUSR0000SAD & CPI : Durables & Fred-MD & 6 \\ \hline
CUSR0000SAS & CPI : Services & Fred-MD & 6 \\ \hline
CPIULFSL & CPI : All Items Less Food & Fred-MD & 6 \\ \hline
CUSR0000SA0L2 & CPI : All items less shelter & Fred-MD & 6 \\ \hline
CUSR0000SA0L5 & CPI : All items less medical care & Fred-MD & 6 \\ \hline
PCEPI & Personal Cons. Expend.: Chain Index & Fred-MD & 6 \\ \hline
DDURRG3M086SBEA & Personal Cons. Exp: Durable goods & Fred-MD & 6 \\ \hline
DNDGRG3M086SBEA & Personal Cons. Exp: Nondurable goods & Fred-MD & 6 \\ \hline
DSERRG3M086SBEA & Personal Cons. Exp: Services & Fred-MD & 6 \\ \hline
CES0600000008 & Avg Hourly Earnings : Goods-Producing & Fred-MD & 6 \\ \hline
CES2000000008 & Avg Hourly Earnings : Construction & Fred-MD & 6 \\ \hline
CES3000000008 & Avg Hourly Earnings : Manufacturing & Fred-MD & 6 \\ \hline
MZMSL & MZM Money Stock & Fred-MD & 6 \\ \hline
DTCOLNVHFNM & Consumer Motor Vehicle Loans Outstanding & Fred-MD & 6 \\ \hline
DTCTHFNM & Total Consumer Loans and Leases Outstanding & Fred-MD & 6 \\ \hline
INVEST & Securities in Bank Credit at All Commercial Banks & Fred-MD & 6 \\ \hline
VXOCLSx & CBOE S\&P 100 Volatility Index: VXO & Fred-MD & 1 \\ \hline

dp & Dividend-price ratio & W \& G & 2 \\ \hline
ep & Earnings-price ratio & W \& G & 2 \\ \hline
bm & Book-to-market ratio & W \& G  & 5 \\ \hline
ntis & Net equity expansion & W \& G & 2 \\ \hline
tbl & Treasury-bill rate & W \& G  & 2 \\ \hline
tms & Term spread & W \& G  & 1 \\ \hline
dfy & Default spread & W \& G  & 2 \\ \hline
svar & Stock variance & W \& G & 5 \\ \hline
\end{longtable}
}

\end{document}